%% file: main.tex
\documentclass{article}
\usepackage[final]{colm2026_conference}

\usepackage{microtype}
\usepackage{hyperref}
\usepackage{url}
\usepackage{booktabs}
\usepackage{amsmath}
\usepackage{amssymb}
\usepackage{amsthm}
\usepackage{graphicx}
\usepackage{multirow}
\usepackage{natbib}
\usepackage{float}
\usepackage{xcolor}
\usepackage{etoc}
\usepackage{tikz}
\usepackage{enumitem}
\usepackage{titletoc}
\usepackage{etoc}

\usetikzlibrary{shapes,arrows,calc}

\newtheorem{proposition}{Proposition}
\newtheorem{remark}{Remark}
\usepackage{lineno}

\definecolor{darkblue}{rgb}{0, 0, 0.35}
\hypersetup{colorlinks=true, citecolor=darkblue, linkcolor=darkblue, urlcolor=darkblue}

\definecolor{rosy}{RGB}{220,20,120 }
\definecolor{NavyBlue}{RGB}{68,139,180}
\definecolor{ForestGreen}{RGB}{74,136,91}
\hypersetup{colorlinks=true, allcolors=NavyBlue}

\title{Matching Accuracy, Different Geometry: \\Evolution Strategies vs GRPO in LLM Post-Training}

\author{
  William Hoy \textsuperscript{1} \qquad Binxu Wang \textsuperscript{2}\thanks{Equal advising.} \qquad Xu Pan \textsuperscript{3}\footnotemark[1]\\[0.5em]
  \textsuperscript{1} Department of Industrial and Systems Engineering, University of Miami \\ \textsuperscript{2} Kempner Institute, Harvard University \qquad \textsuperscript{3} Harvard University \\[0.3em]
  \textsuperscript{1} \texttt{wjh58@miami.edu} \qquad 
  \textsuperscript{2}\texttt{binxu\_wang@hms.harvard.edu}\qquad \textsuperscript{3}\texttt{xupan@fas.harvard.edu}
}

\begin{document}

\ifcolmsubmission
\linenumbers
\fi

\maketitle

\ifcolmfinal
\lhead{Published as a conference paper at COLM 2026}
\thispagestyle{fancy}
\fi

\begin{abstract}
Evolution Strategies (ES) have emerged as a scalable gradient-free alternative to reinforcement learning based LLM fine-tuning, but it remains unclear whether comparable task performance implies comparable solutions in parameter space. We compare ES and Group Relative Policy Optimization (GRPO) across four tasks in both single-task and sequential continual-learning settings. ES matches or exceeds GRPO in single-task accuracy and remains competitive sequentially when its iteration budget is controlled. Despite this similarity in task performance, the two methods produce markedly different model updates: ES makes much larger changes and induces broader off-task KL drift, whereas GRPO makes smaller, more localized updates. Strikingly, the ES and GRPO solutions are linearly connected with no loss barrier, even though their update directions are nearly orthogonal. 
We develop an analytical theory of ES that explains all these phenomena within a unified framework, showing how ES can accumulate large off-task movement on weakly informative directions while still making enough progress on the task to match gradient-based RL in downstream accuracy. 
These results show that gradient-free and gradient-based fine-tuning can reach similarly accurate yet geometrically distinct solutions, with important consequences for forgetting and knowledge preservation. The source code is publicly available.\footnote{\url{https://github.com/Bhoy1/ESvsGRPO}}

\end{abstract}

\section{Introduction}


Fine-tuning large language models (LLMs) is essential for adapting them to new tasks and for maintaining performance in continual learning settings. When explicit supervised targets are unavailable, optimization must proceed from evaluative feedback rather than token-level labels. In this regime, gradient-based reinforcement learning methods such as Group Relative Policy Optimization (GRPO) have become a standard component of LLM post-training \citep{shao2024grpo, deepseek2025r1}, while Evolution Strategies (ES) have recently emerged as a scalable gradient-free alternative \citep{qiu2025es, korotyshova2025essa, sarkar2025hyperscale}. ES replaces backpropagation with population-level perturbation and reward aggregation, offering potential advantages in memory usage and parallelism.


Despite growing interest in ES for LLM alignment and fine-tuning, its behavior relative to gradient-based methods remains poorly understood. Prior works present a mixed picture. Some studies \citep{qiu2025es} suggest that ES can achieve competitive performance while retaining low KL divergence to the base model. 
This raises the possibility that it may be well suited for continual learning, where preserving prior capabilities is important. Other work \citep{abdi2026es_forgetting}, however, finds that ES can induce substantial forgetting, attributing this to larger and less sparse parameter updates. Taken together, these results leave open a basic question: when ES and gradient-based RL achieve similar task performance, do they induce similar off-target effects (e.g. forgetting), and do the solutions have similar geometric properties (e.g. norm and dimensionality)?




In this study, we compare ES and RL in a setup where an LLM is trained sequentially across four diverse tasks and extend the analysis with a geometric characterization of the solutions each method finds. Our analysis reveals that despite equivalent accuracy across four diverse tasks, ES and GRPO navigate the loss landscape in different ways. GRPO identifies a narrow, high gradient direction, making precise and targeted updates that stay close to the base model. ES navigates a broad, low curvature plateau, accumulating large diffuse updates that travel orders of magnitude further from initialization. Strikingly, despite these near orthogonal trajectories and orders of magnitude norm differences, no loss barrier exists between the two solutions. Moreover, when we probe the loss landscape along each method’s learned update direction, the ES direction behaves much more like a random direction on held-out tasks, indicating that a substantial fraction of its displacement lies in task-irrelevant dimensions rather than in a sharply task-aligned subspace. 

Finally, we derive a theoretical account of these geometric differences. 
ES weight updates decompose into two components: an on-manifold (task relevant) component, and an off-manifold (task-invariant) component. In the off-manifold subspace, ES is equivalent to a random walk, which exhibits a characteristic scaling: the squared norm of the weight change grows as $\alpha^2 dT/N$, proportional to parameter dimension $d$ and step count $T$, and inversely proportional to population size $N$. In a high-dimensional landscape with many flat dimensions — this off-manifold diffusion dominates the weight change. Empirically, LLM parameters (particularly weight matrices) precisely exhibit this predicted random walk behavior, with the theoretical scaling matching observations at $R^2 > 0.99$. 
This single mechanism accounts for all the geometric phenomena we observe: the large update norm, the near-orthogonality between ES and GRPO update directions, the flatter loss curvature along ES weight changes, and the linear mode connectivity despite geometric separation.

Our contributions are:
\begin{enumerate}
\item Empirical evidence that ES can match or exceed GRPO in task accuracy across both single-task and sequential fine-tuning settings.
\item A geometric characterization of ES versus GRPO solutions, revealing that equivalent task performance can be achieved with orders of magnitude norm differences.
\item A theoretical account of the random-walk-like behavior of ES in high-dimensional weakly informative subspaces, explaining its large update norms and diffuse-like behavior relative to GRPO.
\end{enumerate}

\section{Related Work}



\paragraph{Evolution strategies for LLMs.} 
\citet{salimans2017es} demonstrated ES as a scalable alternative to reinforcement learning via high parallelizability. Recent work has brought ES to LLM scale: \citet{qiu2025es} showed ES achieving comparable performance to GRPO on reasoning tasks, attributing this to LLMs' low intrinsic dimensionality~\citep{aghajanyan2021intrinsic} and ES's smoothing of jagged reward landscapes via Gaussian convolution in parameter space~\citep{lehman2018es}. \citet{korotyshova2025essa} and \citet{sarkar2025hyperscale} further showed that ES can be made practical for large-scale alignment through low-rank parameterizations that improve efficiency and scalability.  \citet{liang2026blessing} showed that fine-tuning loss landscapes are low-dimensional in curvature, helping explain why ES succeeds with small populations. However, \citet{abdi2026es_forgetting} demonstrates that ES causes significant forgetting compared to GRPO on smaller models (Qwen2.5-1.5B and Llama-3.2-1B), attributing it to larger update norms and lower sparsity.

\paragraph{Gradient-based RL for LLMs.}
\citet{shao2024grpo} introduced GRPO, which eliminates the critic model by estimating advantages from group statistics, reducing memory overhead compared to PPO. It became a standard post-training method following its use in DeepSeek-R1~\citep{deepseek2025r1}. Several concurrent works have investigated why GRPO preserves prior capabilities during fine-tuning. \citet{shenfeld2025rl} showed that forgetting correlates with KL divergence from the base model, while \citet{chen2025onpolicy} showed that GRPO's robustness may stem more from its use of on-policy data rather than KL regularization or advantage estimation.


\section{Methodology}


We compare two fundamentally different approaches to fine tune large language 
models: ES, a gradient free method, and GRPO, a gradient based reinforcement learning algorithm.

\subsection{Evolution strategies}

ES optimizes model parameters through population based search without computing 
gradients. At each iteration, we sample $N$ perturbation vectors $\epsilon_i \sim 
\mathcal{N}(0, I)$, evaluate the perturbed models, and update parameters using 
z score normalized rewards:
\begin{equation}\label{eq:ES_equation}
\theta_{t+1} = \theta_t + \alpha \cdot \frac{1}{N} \sum_{i=1}^{N} Z_i \epsilon_i
\end{equation}
where $\alpha$ is the learning rate, $Z_i = \frac{R_i - \mu_R}{\sigma_R}$ is the 
z score normalized reward for the $i$-th perturbation, and $R_i = R(\theta_t + 
\sigma \epsilon_i)$ is the reward of the perturbed model with noise scale $\sigma$. 
Following~\citet{qiu2025es}, we maintain $\alpha = \sigma / 2$ throughout and 
apply ES updates to the complete model weights.

\subsection{Group relative policy optimization}

GRPO~\citep{shao2024grpo} normalizes advantages within groups of sampled responses, 
eliminating the need for a critic model. For a prompt $x$, we sample $K$ responses 
$\{y_1, \ldots, y_K\}$ from the current policy $\pi_\theta$ and compute 
group-normalized advantages: \(\hat{A}_i = \frac{R(y_i) - \mu_R}{\sigma_R + \epsilon}\),
where $\mu_R$ and $\sigma_R$ are the within group mean and standard deviation of 
rewards. The policy is updated by maximizing the clipped surrogate objective: \(\mathcal{L}_{\text{GRPO}}(\theta) = \mathbb{E}\left[\frac{1}{K}\sum_{i=1}^{K} 
\min\left(\rho_i(\theta)\hat{A}_i, 
\text{clip}(\rho_i(\theta), 1-\epsilon, 1+\epsilon)\hat{A}_i\right)\right]\),
where $\rho_i(\theta) = \frac{\pi_\theta(y_i|x)}{\pi_{\theta_{\text{old}}}(y_i|x)}$ 
is the probability ratio between the current and old policy. We disable the KL 
penalty ($\beta_{\text{KL}} = 0$) during training to induce a clean comparison with ES without explicit regularization \citep{shenfeld2025rl}, but track KL divergence to 
the reference policy for analysis in both methods.

\subsection{Experimental setup}

All experiments use Qwen3-4B-Instruct-2507 as the base model. We evaluate on four diverse tasks spanning reasoning, mathematics, scientific knowledge, and reading comprehension: Countdown (arithmetic reasoning), Math (mathematical problem solving), SciKnowEval-Chemistry (chemistry domain knowledge), and BoolQ (boolean reading comprehension). Each task uses 200 training samples and 2,000 test samples (500 for Math). In the continual learning experiments, tasks are presented sequentially. Hyperparameter sweeps were conducted for both methods (see Appendix~\ref{sec:hyperparameter-sweeps}); we selected $N=30$, $\sigma{=}0.0015$, $\alpha{=}0.00075$ for ES and $\text{lr}{=}1{\times}10^{-5}$ for GRPO. An additional model, Llama-3.2-3B-Instruct, can be found in the Appendix.

\section{Empirical Results}
\subsection{Task performance}

Table \ref{tab:single-task} compares the task accuracy of the base model against ES and GRPO in a single task training setup. Running ES for 300 iterations consistently yields the highest peak accuracy across all four tasks. For instance, on the Chemistry task, ES (300) achieves 76.5\% accuracy, significantly outperforming both ES (100) at 68.1\% and GRPO at 74.9\%.

However, when the model is trained sequentially across the tasks (Figure \ref{fig:sequential-full}), utilizing 300 ES iterations causes pronounced degradation in earlier tasks as the training progresses. Limiting ES to 100 iterations offers a balance. While it sacrifices a small degree of peak single task accuracy compared to the 300 iteration configuration, it maintains sequential training stability. Both ES (100) and GRPO remain comparatively stable throughout the entire sequential training pipeline, avoiding the sharp drop-offs seen with ES (300). Because of this, we use an iteration count of 100 for ES in all of the following sequential experiments.
\begin{table}[H]
\vspace{-14pt}
\centering
\footnotesize
\caption{Accuracy (\%) for single task training.}
\label{tab:single-task}
\begin{tabular}{lcccc}
\toprule
\textbf{Task} & \textbf{Base} & \textbf{ES (iter=100)} & \textbf{ES (iter=300)} & \textbf{GRPO} \\
\midrule
Countdown & 20.5 & 75.1 & \textbf{78.6} & 74.5 \\
Math      & 61.8 & 72.4 & \textbf{74.0} & 68.8 \\
Chemistry & 63.4 & 68.1 & \textbf{76.5} & 74.9 \\
BoolQ     & 83.7 & 87.2 & \textbf{88.1} & 86.6 \\
\bottomrule
\end{tabular}
\vspace{-5pt}
\end{table}
As a side note, ES and GRPO show different transfer dynamics in the sequential task training. ES exhibits notable zero-shot forward transfer, i.e., learning a task increases performance on an unseen task. Chemistry accuracy rises from 63\% to approximately 69\% before its training stage begins, and Math accuracy increases by roughly 3 points after Countdown training alone. This forward transfer is not universal and depends on the tasks, as BoolQ accuracy dips slightly below the base model before its training stage. GRPO shows little forward transfer by comparison.

\begin{figure}[!ht]
\vspace{-10pt}
    \centering
    \includegraphics[width=0.82\linewidth]{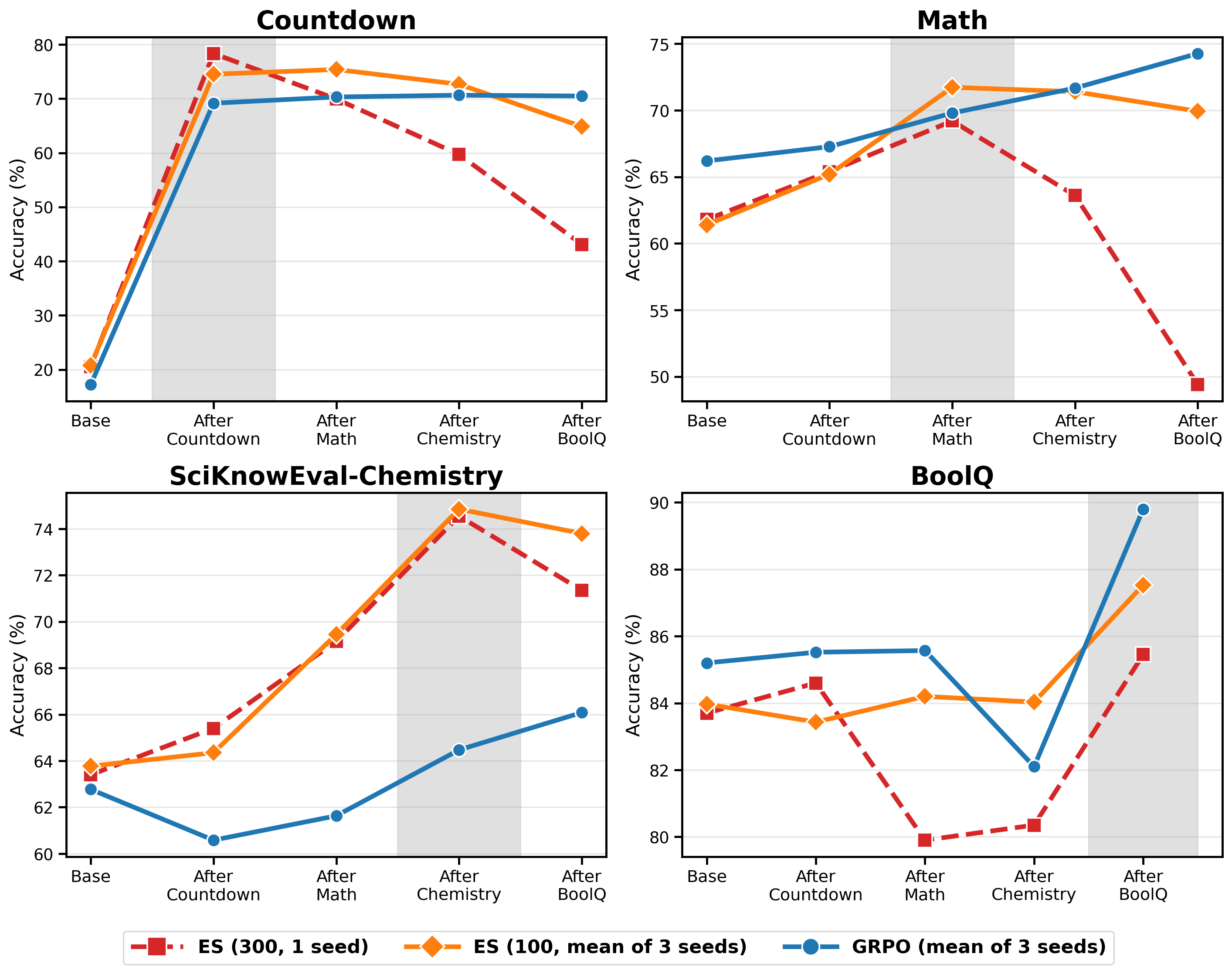}
    \caption{Accuracy (\%) on each task throughout sequential training. The 
    shaded region marks the stage at which that task was trained. ES (300) 
    shows clear degradation on earlier tasks as training progresses, while 
    ES (100) and GRPO remain comparatively stable.}
    \label{fig:sequential-full}
\vspace{-10pt}
\end{figure}

\subsection{Forgetting}


To test whether fine-tuning affects general capabilities beyond training tasks, we evaluate both methods on two held-out benchmarks after each training stage: MMLU for general knowledge, and IFEval (strict) for instruction-following ability. The two methods diverge on MMLU. ES accuracy drops monotonically from 77.5\% to 73.8\% across the four training stages ($-$3.7\%), with the largest single-stage drop during Chemistry training ($-$1.6\%). GRPO, by contrast, remains stable or slightly improves throughout, ending at 78.3\% ($+$0.8\%). On IFEval, the pattern reverses: ES gains $+$2.2\%, peaking after Math training, while GRPO rises through Chemistry but drops after BoolQ training, ending at $-$1.3\% below baseline.

\begin{table}[H]
\vspace{-10pt}
\centering
\footnotesize
\caption{Holdout benchmark accuracy (\%) throughout sequential training.}
\label{tab:holdout-forgetting}
\begin{tabular}{lcccc}
\toprule
\textbf{Stage} & \textbf{ES (MMLU)} & \textbf{GRPO (MMLU)} & \textbf{ES (IFEval)} & \textbf{GRPO (IFEval)} \\
\midrule
Base model      & 77.5 & 77.5 & 79.7 & 79.7 \\
After Task 1 (Countdown) & 77.0 & 77.0 & 80.0 & 80.6 \\
After Task 2 (Math)      & 76.2 & 77.5 & 82.6 & 80.6 \\
After Task 3 (Chemistry) & 74.6 & 78.0 & 82.3 & 81.7 \\
After Task 4 (BoolQ)     & 73.8 & 78.3 & 81.9 & 78.4 \\
\midrule
$\Delta$ (base $\to$ final) & $-$3.7 & $+$0.8 & $+$2.2 & $-$1.3 \\
\bottomrule
\end{tabular}
\end{table}

\begin{figure}[H]
\vspace{-10pt}
    \centering
    \includegraphics[width=0.95\linewidth]{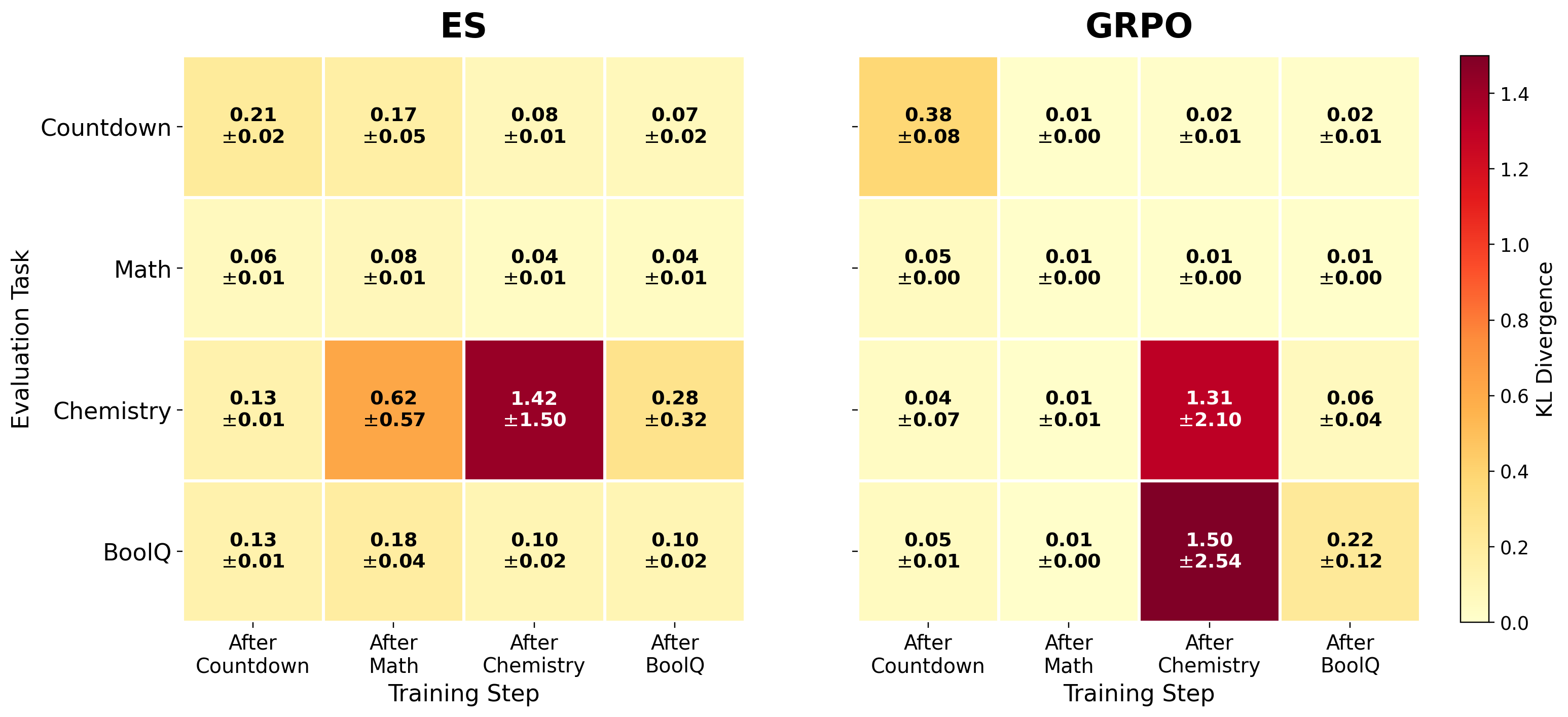}
    \caption{Incremental KL divergence per training step. Each cell shows $\text{KL}(\pi_{\text{after}} \| \pi_{\text{before}})$ for a given evaluation task (row) after a given training step (column). Both panels share the same color scale. Across three seeds there is large variance for some blocks. We have included median, min and max in Appendix~\ref{sec:kl-median-detail} so the complete picture can be seen}
    \label{fig:kl-drift}
\end{figure}


We measure the incremental KL divergence $\text{KL}(\pi_{\text{after}} \| \pi_{\text{before}})$ on each task's held-out set after every training step, capturing how much each step shifts the model's output distribution on every task (Figure~\ref{fig:kl-drift}). \citet{qiu2025es} find that ES produces substantially lower KL divergence than GRPO on a single-task conciseness benchmark. The first column of our heatmap (T0, Countdown only) is broadly consistent: ES's on-task KL (0.21 nats) is slightly lower than GRPO's (0.38 nats). In the sequential setting, however, ES produces larger off-diagonal KL shifts---training on any task causes non-trivial drift on all other tasks (e.g., training on Math shifts Chemistry by 0.62 nats for ES vs.\ 0.01 for GRPO). ES's on-diagonal drift is also more aggressive: Chemistry training induces 1.42 nats for ES versus 1.31 for GRPO. There is a single example where GRPO induces larger off diagonal than ES but if you compared medians in Appendix~\ref{sec:kl-median-detail} than it goes away. This is consistent with our weight-space findings: ES's isotropic perturbations spread change broadly across the output distribution, while GRPO's directed updates concentrate it on the trained task. The discrepancy with \citet{qiu2025es} in the sequential setting may reflect differences in task structure or model, but it suggests the KL advantage of ES is not universal. 

To quantify off-diagonal drift directly, we compute the ratio of off-task to
on-task incremental KL separately for each seed $s$:
\[
\rho_s =
\frac{\sum_t \kappa_{t,s}^{\mathrm{off}}}
     {\sum_t \kappa_{t,s}^{\mathrm{on}}},
\]
where $\kappa_{t,s}^{\mathrm{on}}$ is the test-set incremental KL on the task
trained at transition $t$, and $\kappa_{t,s}^{\mathrm{off}}$ is the mean
test-set incremental KL over the other three tasks at the same transition.
Thus, $\rho_s \to 0$ indicates that drift is concentrated on-task, whereas
$\rho_s \approx 1$ indicates that average off-task drift is comparable to
on-task drift. For Qwen3-4B-Instruct-2507, we compute $\rho_s$ separately for
seeds 42, 52, and 62 and report the mean and sample standard deviation across
seeds. ES yields $\bar{\rho}_{\mathrm{ES}}=0.533\pm0.407$, compared with
$\bar{\rho}_{\mathrm{GRPO}}=0.228\pm0.108$ for GRPO. The ES mean is therefore
$2.34\times$ the GRPO mean, indicating greater off-task drift relative to
on-task change under ES. The large variation across seeds further indicates
substantial seed-dependent heterogeneity in the magnitude of this effect. 

We additionally report cumulative test-set KL divergence from the original
base model throughout training,
$D_{\mathrm{KL}}(\pi_{\theta_t}\,\|\,\pi_{\theta_0})$
(Appendix~\ref{sec:cumulative-kl}).

\section{Weight Space Geometry Analysis}
\label{sec:weight-space}

Despite both ES and GRPO achieving similar accuracy when trained with a single task, their difference in the sequential task experiments suggests that the weight changes of both could be distinct. We characterize the geometry of their weight changes, seeking a deeper understanding of the difference between the two methods.

\subsection{ES induces larger weight changes}

We measure the total $\ell_2$ norm of each method's weight difference from the base model:
\begin{equation}
\|\Delta\theta\|_2 = \|\theta_{\text{trained}} - \theta_{\text{base}}\|_2
\end{equation}
Table~\ref{tab:norms} shows this at each sequential checkpoint. ES updates grow from 87.28 to 173.00 across the four tasks, while GRPO updates remain two orders of magnitude smaller (1.00 to 1.84), yielding norm ratios of 87 to 107$\times$.

\begin{table}[H]
\vspace{-10pt}
\centering
\footnotesize
\caption{Weight update norms ($\|\Delta\theta\|_2$) at each sequential checkpoint, measured from the base model.}
\label{tab:norms}
\begin{tabular}{lccc}
\toprule
\textbf{Checkpoint} & \textbf{ES norm} & \textbf{GRPO norm} & \textbf{Ratio} \\
\midrule
After Task 1 (Countdown) & 87.28  & 1.00 & 87$\times$ \\
After Task 2 (Math)      & 122.76 & 1.15 & 107$\times$ \\
After Task 3 (Chemistry) & 149.99 & 1.65 & 91$\times$ \\
After Task 4 (BoolQ)     & 173.00 & 1.84 & 94$\times$ \\
\bottomrule
\end{tabular}
\vspace{-10pt}
\end{table}

\subsection{ES updates are denser and higher rank}
\label{sec:sparsity-rank}

The $\ell_2$ norm alone conflates several ways of geometric change; a large norm can arise from large changes along a few coordinates or from small changes spread broadly across many parameters. To disambiguate these, we measure two other properties of each method's weight update at every sequential checkpoint: \emph{sparsity}, the fraction of update entries $\Delta\theta_i$ with $|\Delta\theta_i| \le \varepsilon$ (using $\varepsilon = 1\times10^{-5}$), and \emph{stable rank}, $\|\Delta W\|_F^2 / \|\Delta W\|_2^2$ (Frobenius norm over spectral/operator norm), which quantifies the effective dimensionality of the update for matrix-shaped parameters. Table~\ref{tab:sparsity-rank} reports both measures. GRPO's updates are substantially sparser (87.7--88.0\% near-zero entries, versus 2.1--3.8\% for ES) and lower rank (60--103, versus 1078--1100 for ES). This is consistent with our picture (Sec.~\ref{sec:theory}): ES's isotropic, high-dimensional diffusion produces dense, high-rank updates, while GRPO's targeted, gradient-aligned updates are concentrated in a sparse, low-rank subspace.

\begin{table}[H]
\vspace{-14pt}
\centering
\footnotesize
\caption{Sparsity (fraction of entries with $|\Delta\theta_i| \le \varepsilon = 1\times10^{-5}$) and stable rank ($\|\Delta W\|_F^2/\|\Delta W\|_2^2$, averaged across attention QKVO projections, and MLP up/down/gate projections) of ES and GRPO weight updates at each sequential checkpoint.}
\label{tab:sparsity-rank}
\begin{tabular}{lcccc}
\toprule
\textbf{Stage} & \textbf{Sparsity ES} & \textbf{Sparsity GRPO} & \textbf{Stable Rank ES} & \textbf{Stable Rank GRPO} \\
\midrule
T0 Countdown & 3.8\% & 88.0\% & 1100 & 60  \\
T1 Math      & 2.9\% & 88.0\% & 1090 & 70  \\
T2 Chemistry & 2.4\% & 87.7\% & 1084 & 89  \\
T3 BoolQ     & 2.1\% & 87.7\% & 1078 & 103 \\
\bottomrule
\end{tabular}
\vspace{-14pt}
\end{table}

\subsection{ES and GRPO solutions are linearly mode-connected}

We evaluate whether ES and GRPO solutions occupy the same loss basin by testing 
linear interpolations (i.e. linear mode connectivity) at each sequential checkpoint:
\begin{equation}
\theta(\alpha) = (1 - \alpha)\,\theta_{\text{ES}} + \alpha\,\theta_{\text{GRPO}}, 
\quad \alpha \in \{0, 0.25, 0.5, 0.75, 1\}
\end{equation}
A loss barrier would manifest as degraded performance at intermediate $\alpha$ 
values relative to both endpoints. 
Figure~\ref{fig:connectivity} shows average 
accuracy across tasks at each interpolation point. No significant loss barrier exists at 
any checkpoint: performance transitions smoothly, and intermediate points 
frequently match or outperform the endpoints, demonstrating that ES and GRPO 
remain in the same loss basin throughout sequential training.


\begin{figure}[!ht]
    \centering
        \includegraphics[width=0.75\linewidth]{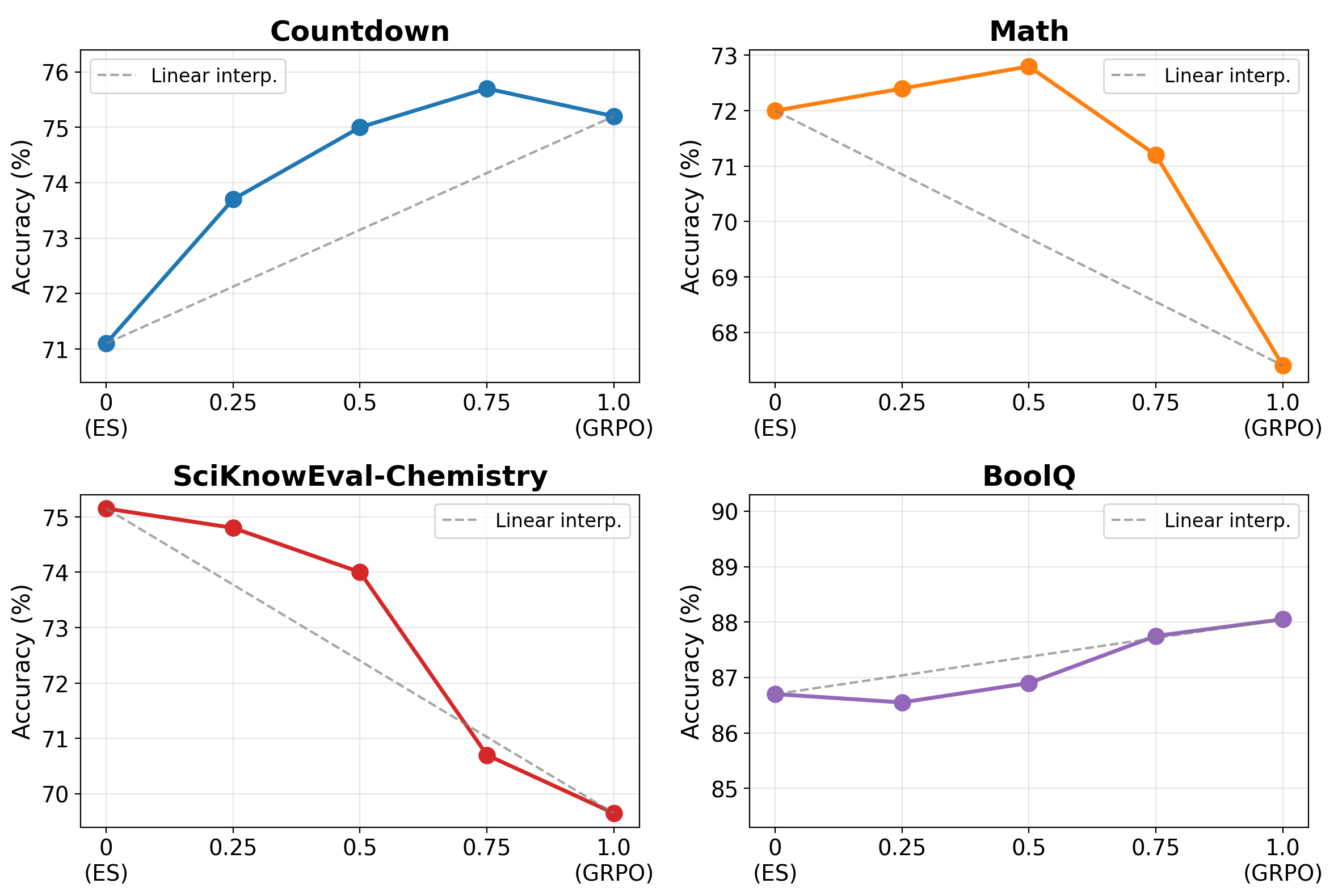}
        \caption{Per-task accuracy along the linear interpolation path between the
        final ES and GRPO checkpoints. The dashed gray line shows the linear
        interpolation between endpoint accuracies. Across all tasks, the interpolated
        model remains close to or above the linear baseline with no catastrophic
        accuracy drop, indicating that the two solutions lie in the same basin
        of the loss landscape.}
        \label{fig:connectivity}
\end{figure}

\subsection{ES solution direction behaves similarly as random directions}

To understand why ES and GRPO produce such different weight geometries despite similar task accuracy, we probe the loss landscape along each method's learned update direction. Starting from the base model, we move along the normalized ES or GRPO weight delta at increasing magnitudes:
\begin{equation}
\theta_{\text{perturbed}} = \theta_{\text{base}} + m \cdot \frac{\Delta\theta}{\|\Delta\theta\|_2}
\end{equation}
where $m$ ranges from 0 to $\|\Delta\theta\|_2$ (the trained checkpoint) and beyond. Random directions, averaged over three seeds, serve as a control. By construction, $m = \|\Delta\theta\|_2$ exactly recovers the trained checkpoint.

Figure~\ref{fig:directional-perturbation} shows that along the GRPO direction, task accuracy rises rapidly from the base-model level to the checkpoint level within a very small displacement, then degrades quickly once the perturbation magnitude exceeds the trained solution. Along the ES direction, accuracy improves more gradually over a much larger displacement scale. Both methods reach comparable final accuracy, but GRPO concentrates its useful signal into a much smaller norm, while ES spreads it across a larger perturbation, consistent with ES containing a larger component weakly coupled to the target task. 

\begin{figure}[!ht]
\vspace{-12pt}
    \centering
    \includegraphics[width=0.8\linewidth]{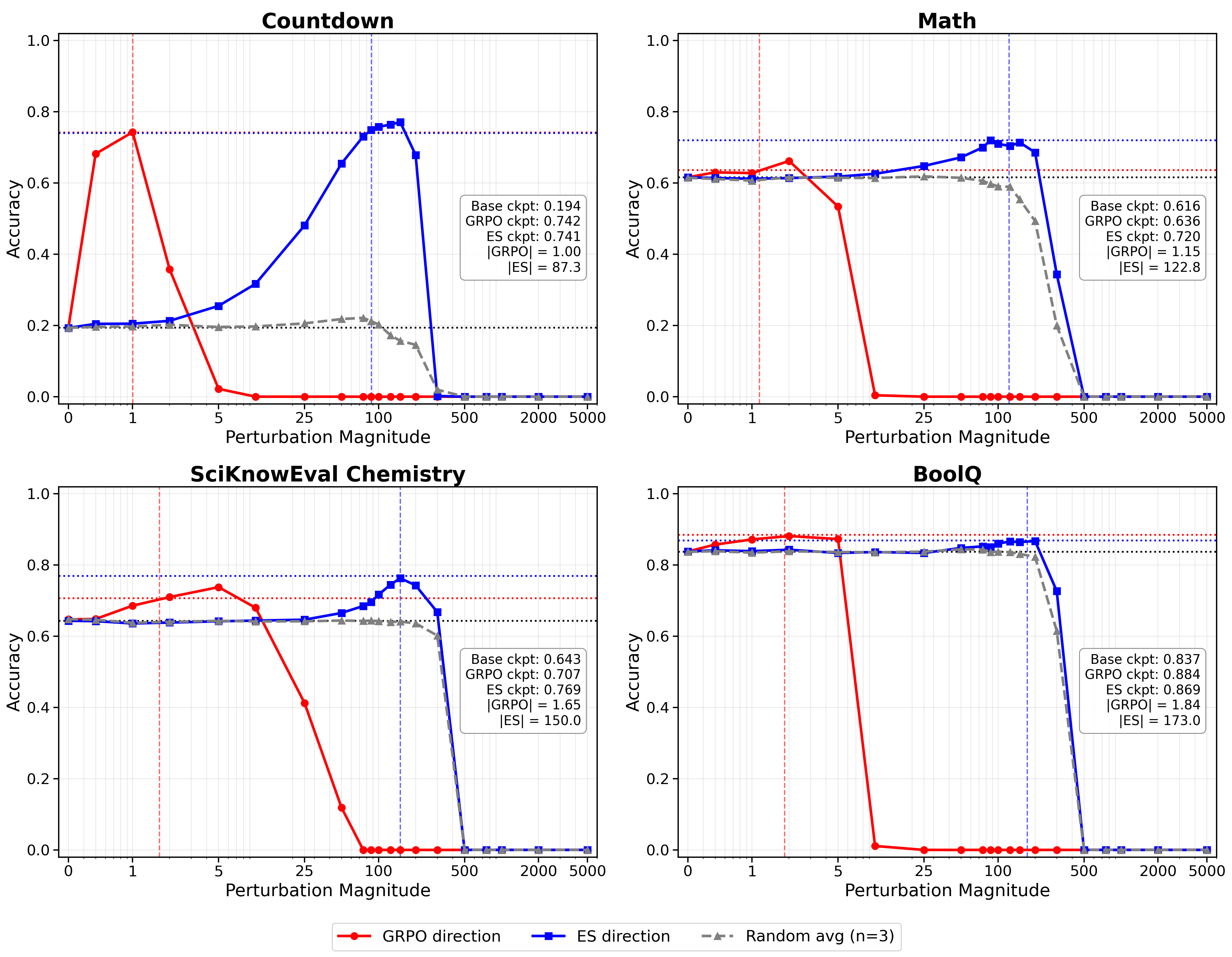}
    \caption{Task accuracy when moving from the base model along ES (blue), GRPO (red), and random (gray) directions at increasing magnitudes. Dashed vertical lines mark $\|\Delta_{\text{GRPO}}\|$ (red) and $\|\Delta_{\text{ES}}\|$ (blue). Dotted horizontal lines show base and checkpoint accuracies. }
    \label{fig:directional-perturbation}
\vspace{-12pt}
\end{figure}

Evaluating perturbed models on holdout tasks reinforces this picture (Fig.~\ref{fig:holdout-perturbation}). The ES and random directions produce similar holdout degradation profiles, consistent with the ES update being a non-targeted diffused shift. GRPO's direction is more effective per unit norm on the target task, but degrades general capabilities faster than either ES or random directions, consistent with its updates being concentrated in a narrow task-specific subspace.
\begin{figure}[!ht  ]
\vspace{-12pt}
    \centering
    \includegraphics[width=0.8\linewidth]{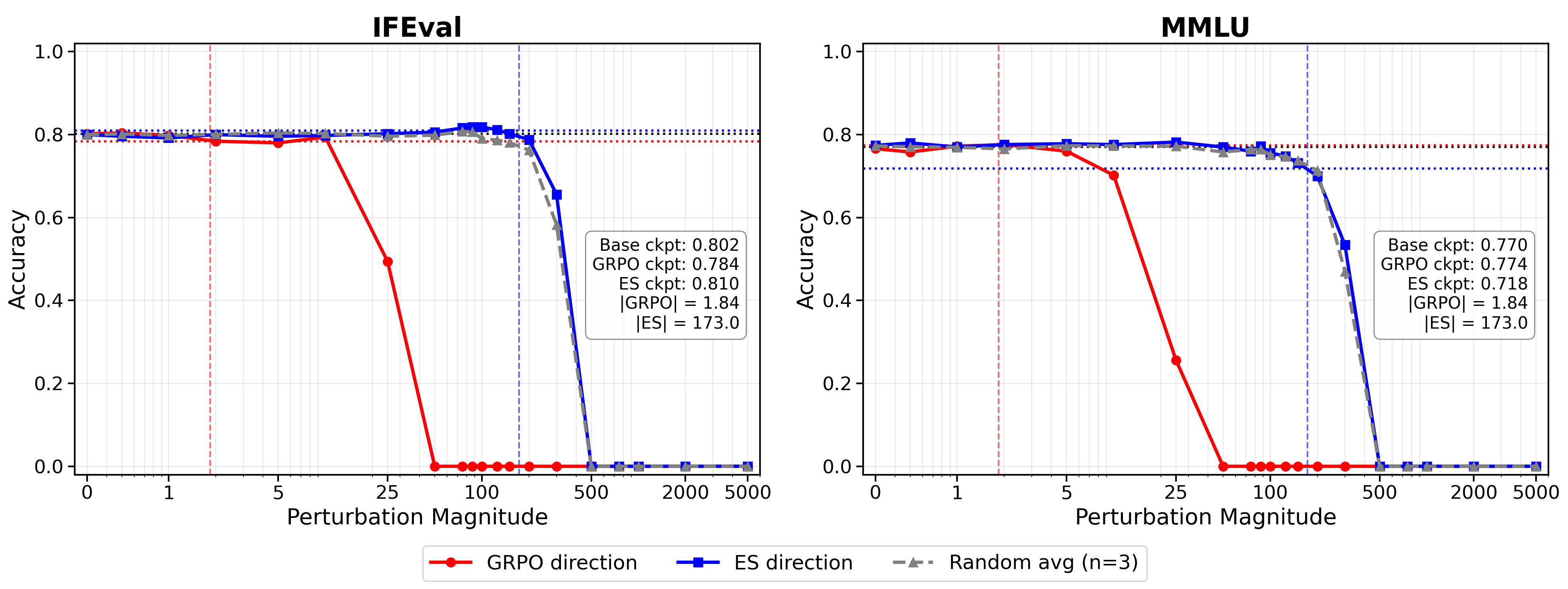}
    \caption{Holdout task accuracy (IFEval, MMLU) when perturbing along ES (blue) and GRPO (red) directions from the final BoolQ checkpoint.}
    \label{fig:holdout-perturbation}
\vspace{-12pt}
\end{figure}

\section{Theoretical Interpretation}
\label{sec:theory}
The experiments show a consistent pattern: ES and GRPO reach comparable task accuracy, yet ES accumulates much larger weight changes, resembles random directions on holdout probes, and remains linearly connected to GRPO solution in accuracy with no loss barrier. We now develop a theoretical account that explains these phenomena within a unified framework (with formal statements in App.~\ref{app:theory-derivation}).

\paragraph{Signal–diffusion decomposition.}
Consider a quadratic potential centered at $\mathbf{0}$ with curvature $Q$, $\mathcal{L}(\theta)=\tfrac12\theta^\top Q\theta$. Idealizing ES as a fixed step-size Ornstein--Uhlenbeck process surrogate, the displacement after $T$ steps decomposes as (Prop.~\ref{prop:quad_landscape_dynamics}):
\vspace{-5pt}
\begin{equation}\label{eq:es_decomposition}
    \theta_{\mathrm{ES}} - \theta_0
    = \underbrace{(H^T - I)\,\theta_0}_{\text{signal (GD-like)}}
    \;+\; \underbrace{\sum_{t=1}^{T}H^{t-1}\,\eta_{T-t}}_{\text{diffusion}},
\end{equation}
with $H := I - \frac{\alpha\sigma}{\sigma_R}Q$ and i.i.d.\ noise $\eta_t \sim \mathcal{N}(0, \frac{\alpha^2}{N}I_d)$.
The first term drives parameters toward the optimum and resides in the column space of $Q$, resembling gradient descent with $\frac{\alpha\sigma}{\sigma_R}$ playing the role of learning rate; the second term is diffusion in $d$ dimensions. Along flat directions ($\mathbf{u}$ where $Q\mathbf{u}=0$), the signal vanishes while the diffusion persists. Noiseless gradient descent -- an idealized view of GRPO, produces only the first term (Prop.~\ref{prop:gd_displacement}).

\paragraph{Why diffusion dominates}
On a flat landscape, the ES update is a pure random walk (Prop.~\ref{prop:flat_landscape_scaling}):
$\Delta\theta_t \sim \mathcal{N}(0,\, \frac{\alpha^2}{N} I_d)$, and the displacement concentrates at its expectation, 
\begin{equation}
\mathbb{E}[\|\theta_T - \theta_0\|^{2}] = {\alpha^2 T d}/{N}.
\end{equation}
This scaling---linear in $T$ and $d$, inverse in $N$---persists in the off-manifold (i.e. reward irrelevant) subspace even when the landscape is non-flat. On linear and quadratic landscapes, we derive the full single-step statistics analytically (Prop.~\ref{prop:linear_landscape_step},~\ref{prop:quad_landscape_step}). 
On a linear landscape, the fraction of gradient-aligned squared displacement per step is (App.~\ref{appsec:linear_landscape_step}):
\begin{equation}\label{eq:rho}
    \chi
    = {(1 + (N{+}1)s)}\,/\,{(d + (N{+}1)s)},
\end{equation}
where $s := \sigma^{2}\|\mathbf{v}\|^{2}/\sigma_{R}^{2} \in [0,1]$ is the \textit{signal fraction}, with $\|\mathbf{v}\|$ the gradient norm. Since $N \ll d$ in practice ($N = 30$, $d \approx 4 \times 10^9$), $\chi$ is negligible, i.e. most update is not gradient aligned. Further, on a quadratic landscape, with rank-$r$ curvature, over $T$ steps (Prop.~\ref{prop:es_displacement}):
\begin{equation}\label{eq:norm_hierarchy}
    \mathbb{E}\|\theta_{\mathrm{ES}} - \theta_0\|^2
    = \underbrace{\mathcal{O}(r)}_{\text{on-manifold}}
    + \underbrace{{\alpha^2 T(d{-}r)}/{N}}_{\text{off-manifold}}
    \approx {\alpha^2 T d}/{N}.
\end{equation}

\paragraph{Geometric consequences}

The decomposition yields a norm hierarchy (Propositions~\ref{prop:gd_displacement}--\ref{prop:es_gd_difference}):
\begin{align}
    \|\theta_{\mathrm{GD}} - \theta_0\|^2 = \mathcal{O}(r), \qquad
    \mathbb{E}\|\theta_{\mathrm{ES}} - \theta_0\|^2 = \mathcal{O}(r) + \mathcal{O}(d), \qquad
    \mathbb{E}\|\theta_{\mathrm{ES}} - \theta_{\mathrm{GD}}\|^2 = \mathcal{O}(d), \label{eq:hierarchy}
\end{align}
and the expected cosine similarity scales as $\mathcal{O}(\sqrt{r/d})$ (Remark~\ref{rmk:cosine_similarity}): nearly orthogonal despite similar task performance. The key difference is on flat directions: GD freezes at initialization while ES diffuses with variance $\alpha^2 t/N$ (Propositions~\ref{prop:quad_landscape_dynamics},~\ref{prop:gd_quad_dynamics}).

\subsection{Empirical validation and explanatory power}


\textbf{Random walk scaling.}\;
For the full parameter vector, $\|\Delta\theta\|^2$ grows linearly with step count in both models ($r > 0.9999$ for ES against $r \approx 0.87$ for GRPO; Fig.~\ref{fig:landscape_schematics}\textbf{A}). 
Inverting the slope to infer the effective dimension yields $d_{\mathrm{eff}}/d_{\mathrm{total}} = 96.4\%$ (Qwen) and $102.4\%$ (Llama)---nearly all dimensions behave as flat, the two models bracketing the parameter-free prediction within $4\%$. 
Per-parameter analysis localizes the departure: large weight matrices follow the random walk scaling almost exactly ($0.968 \pm 0.014$ and $1.024 \pm 0.031$, $R^2 = 0.9999$), while gains of normalization layers deviate strongly and differently across architectures ($1.75 \pm 0.87$ in Qwen, $0.40 \pm 0.40$ in Llama; Fig.~\ref{fig:landscape_schematics}\textbf{B,C}; see App.~\ref{appsec:weight_drift_RW_stats}). 
In summary, ES dynamics is dominated by a random walk through an almost flat landscape, with a $<4\%$ correction from the task-relevant directions---where curvature compresses the
walk, and a gradient-like component inflates it.

\textbf{Explaining each observation.}\;
Our framework explains:
\textbf{\textit{Large ES update norms}} (Table~\ref{tab:norms}): ES is dominated by the $\mathcal{O}(d)$ off-manifold term while GRPO stays at $\mathcal{O}(r)$, producing the ${\sim}100\times$ ratio.\;
\textbf{\textit{Near orthogonal updates}} (App.~\ref{sec:cosine-verification}): the cosine similarity of GD and ES weight updates is around $\sim 10^{-4}$ (Tab.~\ref{tab:cosine-verification}), consistent with the predicted scale $\mathcal{O}(\sqrt{r/d})$.\;
\textbf{\textit{Flat mode connectivity}} (Fig.~\ref{fig:connectivity}): after convergence, the $\theta_{\mathrm{ES}}$--$\theta_{\mathrm{GD}}$ line lies in the off-manifold subspace where loss is flat.\;
\textbf{\textit{ES resembles random}} (Fig.~\ref{fig:directional-perturbation}): dominant fraction of ES update comes from off-manifold random walk, explaining their similarity and smaller directional curvature along ES updates.\; 
\textbf{\textit{Holdout degradation}} (Fig.~\ref{fig:holdout-perturbation}): the fine-tuning landscape sits atop a broad plateau of base-task performance; off-manifold perturbations beyond a certain radius degrade general capabilities regardless of task relevance, explaining the matched ES--random degradation profiles.
This picture is summarized in Figure~\ref{fig:landscape_schematics}\textbf{D}. 
 \begin{figure}[!ht]
    \centering
    \vspace{-4pt}
    \includegraphics[width=0.99\linewidth]{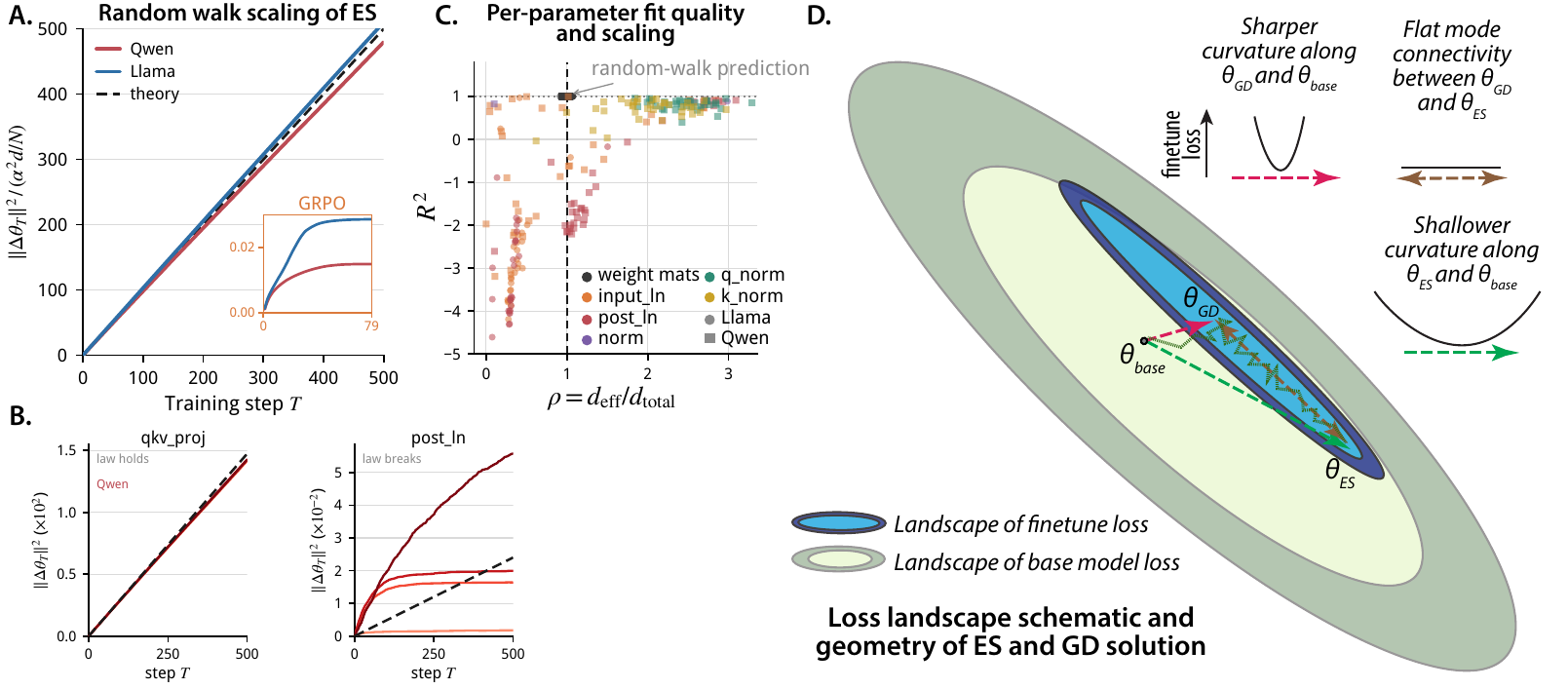}
    \vspace{-10pt}
    \caption{\textbf{ES fine-tuning drifts as a random walk through the parameter space; GRPO does not.} 
\textbf{A.} Total drift $\|\Delta\theta_T\|^{2}$ for Qwen and Llama, each divided by its own random-walk scaling $\alpha^{2} d / N$ so that the theory becomes the identity (dashed). Inset: GRPO with the same normalisation. 
\textbf{B.} Individual Qwen tensors against the same scaling (dashed): \texttt{qkv\_proj} follows it for the whole run, \texttt{post\_ln} saturates away from it.
\textbf{C.} Effective dimension ratio $\rho = d_{\mathrm{eff}}/d_{\mathrm{total}}$ against fit quality $R^{2}$ for every tensor. All weight matrices collapse onto the ideal scaling $R^2=1,\rho=1$, the norm vectors do not. 
\textbf{D.} GRPO moves along the low-dimensional task-relevant subspace (red). ES accumulates both a task-relevant component and a large off-manifold random walk (green), yielding a much larger displacement nearly orthogonal to GRPO. The two solutions are linearly connected because the off-manifold component is loss-invariant.
}
    \label{fig:landscape_schematics}
\end{figure}

\begin{remark}
High-dimensional OU processes are dominated by their flattest dimensions, thus resembling random walk \citep{antognini2018pca_highdim_random_walk}. 
Similar random-walk-like dynamics has been observed in diverse context: SGD-trained neural network weights, evolution of Influenza virus (H3N2) and ES-optimized visual stimuli \citep{Moore2018HighEvolution,wang2022highperf_ES,wang2022tuninglandscape}, suggesting high degeneracy and flat subspaces are generic features of high-dimensional landscapes.
\end{remark} 

\section{Discussion}

Our results reveal that ES and GRPO find geometrically distinct solutions despite equivalent task performance: near-orthogonal update directions and orders-of-magnitude norm differences. The random walk interpretation from Section~\ref{sec:theory} provides a unifying explanation: on the high-dimensional subspace of task-irrelevant directions, ES accumulates isotropic noise whose squared norm grows linearly with steps, while the task-relevant component remains small but sufficient for learning. These geometric differences imply that task-matched solutions found by different optimizers need not be equivalent in parameter space or in how they alter the model more broadly. More generally, our results suggest that weight-space geometry is itself an important object of study in post-training, because similar downstream accuracy can mask substantial differences in the structure and scale of the underlying update.

\clearpage
\section*{Acknowledgments}
We gratefully acknowledge support from the Kempner Research Fellowship (B.W.) and the Schwartz Fellowship (X.P.), as well as computing resources provided by the Kempner Institute cluster. 
We thank Core Francisco Park for valuable feedback on earlier versions of this work. We thank \href{https://intelligencecubed.io/}{Intelligence Cubed} and its fellowship program for providing funding to conduct the experiments.



\newpage
\bibliographystyle{colm2026_conference}
\bibliography{refs}

\appendix
\newpage

\startcontents[appendices]
\printcontents[appendices]{}{1}{\section*{Contents of the Appendices}}

\clearpage

\section{Author contributions}

W.H. designed the initial experiments, performed most of the experiments and analyses, and wrote the first draft of the manuscript. B.W. contributed to the theoretical framework and appendix derivations. X.P. coordinated the study and contributed broadly to the design and framing of the study. All authors contributed to revising and polishing the final manuscript.

\section{Disclosure of LLM Use}

In the study, LLM is used for coding the experiments, analyzing the results, assisting mathematical derivation, and polishing the writing.

\section{Experimental Configurations}
\label{sec:configs}
 
All experiments use Qwen3-4B-Instruct-2507 with full-parameter fine-tuning, seed 42, 52, 62, and a maximum of 1{,}024 generated tokens.
Each task uses a fixed training set of 200 prompts. An additional model of Llama-3.2-3B-Instruct was used to repeat the experiments with one seed, 42.

\subsection{Sequential Training Results (Qwen3-4B-Instruct)}
\label{sec:qwen_multiseed}

\begin{table}[H]
\centering
\caption{Qwen3-4B-Instruct: accuracy (\%, mean $\pm$ std over 3 seeds) throughout sequential training with ES (iter=100). Bold indicates the task just trained at that stage.}
\label{tab:qwen_es_sequential_seeds}
\begin{tabular}{lcccc}
\toprule
\textbf{Stage} & \textbf{Countdown} & \textbf{Math} & \textbf{SciK} & \textbf{BoolQ} \\
\midrule
Base                  & 20.78 $\pm$ 0.20          & 61.40 $\pm$ 0.35          & 63.77 $\pm$ 0.32          & 83.97 $\pm$ 0.23 \\
After Countdown       & \textbf{74.52 $\pm$ 0.70} & 65.20 $\pm$ 0.72          & 64.35 $\pm$ 0.80          & 83.43 $\pm$ 0.95 \\
After Math            & 75.40 $\pm$ 0.31          & \textbf{71.73 $\pm$ 0.42} & 69.45 $\pm$ 0.46          & 84.20 $\pm$ 1.18 \\
After Chemistry       & 72.67 $\pm$ 1.03          & 71.40 $\pm$ 1.40          & \textbf{74.85 $\pm$ 1.01} & 84.03 $\pm$ 2.29 \\
After BoolQ           & 64.87 $\pm$ 8.78          & 69.93 $\pm$ 1.01          & 73.80 $\pm$ 1.59          & \textbf{87.53 $\pm$ 0.60} \\
\bottomrule
\end{tabular}
\end{table}

\begin{table}[H]
\centering
\caption{Qwen3-4B-Instruct: accuracy (\%, mean $\pm$ std over 3 seeds) throughout sequential training with GRPO. Bold indicates the task just trained at that stage.}
\label{tab:qwen_grpo_sequential_seeds}
\begin{tabular}{lcccc}
\toprule
\textbf{Stage} & \textbf{Countdown} & \textbf{Math} & \textbf{SciK} & \textbf{BoolQ} \\
\midrule
Base                  & 17.22 $\pm$ 2.11          & 66.20 $\pm$ 4.01          & 62.77 $\pm$ 0.68          & 85.20 $\pm$ 1.39 \\
After Countdown       & \textbf{69.17 $\pm$ 4.86} & 67.27 $\pm$ 6.47          & 60.58 $\pm$ 4.85          & 85.52 $\pm$ 1.78 \\
After Math            & 70.32 $\pm$ 3.49          & \textbf{69.80 $\pm$ 5.89} & 61.63 $\pm$ 2.03          & 85.57 $\pm$ 1.69 \\
After Chemistry       & 70.65 $\pm$ 4.01          & 71.67 $\pm$ 5.03          & \textbf{64.47 $\pm$ 7.34} & 82.10 $\pm$ 8.00 \\
After BoolQ           & 70.50 $\pm$ 3.77          & 74.27 $\pm$ 5.80          & 66.08 $\pm$ 7.05          & \textbf{89.80 $\pm$ 1.31} \\
\bottomrule
\end{tabular}
\end{table}

\subsection{Replication on a Second Model Family (Llama-3.2-3B-Instruct)}
\label{sec:llama_replication}

\begin{table}[H]
\centering
\caption{Llama-3.2-3B-Instruct: accuracy (\%) throughout sequential training with ES (iter=100). Bold indicates the task just trained at that stage.}
\label{tab:llama_es_sequential}
\begin{tabular}{lcccc}
\toprule
\textbf{Stage} & \textbf{Countdown} & \textbf{Math} & \textbf{SciK} & \textbf{BoolQ} \\
\midrule
Base                  & 2.4           & 38.2          & 60.6          & 73.4 \\
After Countdown       & \textbf{37.4} & 41.2          & 60.9          & 75.8 \\
After Math            & 34.9          & \textbf{41.8} & 64.0          & 71.6 \\
After Chemistry       & 29.3          & 41.0          & \textbf{65.9} & 68.8 \\
After BoolQ           & 20.3          & 38.2          & 65.7          & \textbf{81.9} \\
\bottomrule
\end{tabular}
\end{table}

\begin{table}[H]
\centering
\caption{Llama-3.2-3B-Instruct: accuracy (\%) throughout sequential training with GRPO. Bold indicates the task just trained at that stage.}
\label{tab:llama_grpo_sequential}
\begin{tabular}{lcccc}
\toprule
\textbf{Stage} & \textbf{Countdown} & \textbf{Math} & \textbf{SciK} & \textbf{BoolQ} \\
\midrule
Base                  & 1.8           & 39.4          & 61.5          & 73.4 \\
After Countdown       & \textbf{37.8} & 42.2          & 62.6          & 75.3 \\
After Math            & 40.8          & \textbf{42.0} & 62.4          & 75.9 \\
After Chemistry       & 40.4          & 44.2          & \textbf{65.8} & 72.2 \\
After BoolQ           & 40.3          & 46.4          & 67.0          & \textbf{83.0} \\
\bottomrule
\end{tabular}
\end{table}
 
\subsection{Hyperparameter Sweeps}
\label{sec:hyperparameter-sweeps}
 
\paragraph{Evolution Strategies.}
We conducted a sweep over three configurations maintaining $\alpha = \sigma / 2$:
\begin{table}[H]
\centering
\caption{ES hyperparameter configurations. Config 3 selected.}
\label{tab:es_sweep}
\begin{tabular}{ccc}
\toprule
\textbf{Config} & $\sigma$ & $\alpha$ \\
\midrule
1 & 0.0005  & 0.00025  \\
2 & 0.001   & 0.0005   \\
3 & 0.0015  & 0.00075  \\
\bottomrule
\end{tabular}
\end{table}
 
\paragraph{GRPO.}
We swept learning rates with KL penalty disabled ($\beta_{\text{KL}} = 0$) and linear decay:
\begin{table}[H]
\centering
\caption{GRPO learning rate configurations. Config 3 selected.}
\label{tab:grpo_sweep}
\begin{tabular}{cc}
\toprule
\textbf{Config} & \textbf{Learning rate} \\
\midrule
1 & $1 \times 10^{-6}$ \\
2 & $3 \times 10^{-6}$ \\
3 & $1 \times 10^{-5}$ \\
4 & $3 \times 10^{-5}$ \\
\bottomrule
\end{tabular}
\end{table}

\paragraph{On disabling the KL penalty.} Although the original GRPO formulation \citep{shao2024grpo} includes a KL-divergence penalty against the reference policy, we set $\beta_{\text{KL}}=0$, following now-common practice in the field.  The widely used TRL implementation similarly defaults to $\beta=0$ for this reason.\footnote{\url{https://huggingface.co/docs/trl/main/en/grpo_trainer\#computing-the-loss}} 

\subsection{Seed Variance in Incremental KL Divergence}
\label{sec:kl-median-detail}

\begin{table}[H]
\centering
\footnotesize
\caption{ES: median [min, max] incremental test KL divergence across 3 seeds (42, 52, 62). Bold indicates the task just trained at that stage.}
\label{tab:kl-median-es}
\resizebox{\linewidth}{!}{%
\begin{tabular}{lcccc}
\toprule
\textbf{Eval.\ task} & \textbf{After Countdown} & \textbf{After Math} & \textbf{After Chemistry} & \textbf{After BoolQ} \\
\midrule
Countdown & \shortstack{\textbf{0.200}\\{[0.187, 0.232]}} & \shortstack{0.162\\{[0.124, 0.221]}} & \shortstack{0.075\\{[0.074, 0.083]}} & \shortstack{0.086\\{[0.047, 0.086]}} \\[1.2em]
Math      & \shortstack{0.062\\{[0.054, 0.071]}} & \shortstack{\textbf{0.080}\\{[0.067, 0.088]}} & \shortstack{0.041\\{[0.033, 0.042]}} & \shortstack{0.043\\{[0.036, 0.054]}} \\[1.2em]
Chemistry & \shortstack{0.137\\{[0.116, 0.143]}} & \shortstack{0.313\\{[0.272, 1.276]}} & \shortstack{\textbf{1.100}\\{[0.106, 3.062]}} & \shortstack{0.103\\{[0.090, 0.649]}} \\[1.2em]
BoolQ     & \shortstack{0.135\\{[0.126, 0.137]}} & \shortstack{0.170\\{[0.151, 0.227]}} & \shortstack{0.096\\{[0.086, 0.127]}} & \shortstack{\textbf{0.102}\\{[0.090, 0.122]}} \\
\bottomrule
\end{tabular}%
}
\end{table}

\begin{table}[H]
\centering
\footnotesize
\caption{GRPO: median [min, max] incremental test KL divergence across 3 seeds (42, 52, 62). Bold indicates the task just trained at that stage.}
\label{tab:kl-median-grpo}
\resizebox{\linewidth}{!}{%
\begin{tabular}{lcccc}
\toprule
\textbf{Eval.\ task} & \textbf{After Countdown} & \textbf{After Math} & \textbf{After Chemistry} & \textbf{After BoolQ} \\
\midrule
Countdown & \shortstack{\textbf{0.381}\\{[0.293, 0.454]}} & \shortstack{0.008\\{[0.006, 0.012]}} & \shortstack{0.012\\{[0.004, 0.030]}} & \shortstack{0.017\\{[0.005, 0.029]}} \\[1.2em]
Math      & \shortstack{0.050\\{[0.048, 0.056]}} & \shortstack{\textbf{0.009}\\{[0.003, 0.010]}} & \shortstack{0.010\\{[0.007, 0.016]}} & \shortstack{0.006\\{[0.005, 0.011]}} \\[1.2em]
Chemistry & \shortstack{0.012\\{[0.000, 0.115]}} & \shortstack{0.010\\{[0.000, 0.019]}} & \shortstack{\textbf{0.117}\\{[0.072, 3.738]}} & \shortstack{0.056\\{[0.017, 0.105]}} \\[1.2em]
BoolQ     & \shortstack{0.051\\{[0.035, 0.061]}} & \shortstack{0.010\\{[0.003, 0.010]}} & \shortstack{0.038\\{[0.028, 4.431]}} & \shortstack{\textbf{0.239}\\{[0.094, 0.322]}} \\
\bottomrule
\end{tabular}%
}
\end{table}

\subsection{Cumulative KL Divergence from the Base Model}
\label{sec:cumulative-kl}

\begin{table}[H]
\centering
\footnotesize
\caption{Qwen3-4B-Instruct: cumulative KL divergence from base model, ES (iter=100). Mean $\pm$ std over 3 seeds (42, 52, 62). Bold indicates the task just trained at that stage.}
\label{tab:cumkl-qwen-es}
\begin{tabular}{lcccc}
\toprule
\textbf{Stage} & \textbf{Countdown} & \textbf{Math} & \textbf{SciK} & \textbf{BoolQ} \\
\midrule
After Countdown & \textbf{0.206 $\pm$ 0.023} & 0.062 $\pm$ 0.008 & 0.132 $\pm$ 0.014 & 0.133 $\pm$ 0.006 \\
After Math      & 0.436 $\pm$ 0.032          & \textbf{0.178 $\pm$ 0.021} & 0.755 $\pm$ 0.477 & 0.355 $\pm$ 0.079 \\
After Chemistry & 0.536 $\pm$ 0.059          & 0.250 $\pm$ 0.033 & \textbf{2.541 $\pm$ 2.390} & 0.545 $\pm$ 0.211 \\
After BoolQ     & 0.551 $\pm$ 0.112          & 0.288 $\pm$ 0.028 & 2.013 $\pm$ 1.781 & \textbf{0.545 $\pm$ 0.158} \\
\bottomrule
\end{tabular}
\end{table}

\begin{table}[H]
\centering
\footnotesize
\caption{Qwen3-4B-Instruct: cumulative KL divergence from base model, GRPO. Mean $\pm$ std over 3 seeds (42, 52, 62). Bold indicates the task just trained at that stage.}
\label{tab:cumkl-qwen-grpo}
\begin{tabular}{lcccc}
\toprule
\textbf{Stage} & \textbf{Countdown} & \textbf{Math} & \textbf{SciK} & \textbf{BoolQ} \\
\midrule
After Countdown & \textbf{0.376 $\pm$ 0.081} & 0.052 $\pm$ 0.004 & 0.040 $\pm$ 0.066 & 0.049 $\pm$ 0.013 \\
After Math      & 0.420 $\pm$ 0.113          & \textbf{0.069 $\pm$ 0.018} & 0.087 $\pm$ 0.129 & 0.072 $\pm$ 0.025 \\
After Chemistry & 0.435 $\pm$ 0.143          & 0.093 $\pm$ 0.038 & \textbf{1.297 $\pm$ 2.120} & 1.904 $\pm$ 3.140 \\
After BoolQ     & 0.435 $\pm$ 0.115          & 0.091 $\pm$ 0.024 & 1.282 $\pm$ 2.062 & \textbf{0.194 $\pm$ 0.063} \\
\bottomrule
\end{tabular}
\end{table}

\begin{table}[H]
\centering
\footnotesize
\caption{Llama-3.2-3B-Instruct: cumulative KL divergence from base model, ES (iter=100). Single seed. Bold indicates the task just trained at that stage.}
\label{tab:cumkl-llama-es}
\begin{tabular}{lcccc}
\toprule
\textbf{Stage} & \textbf{Countdown} & \textbf{Math} & \textbf{SciK} & \textbf{BoolQ} \\
\midrule
After Countdown & \textbf{0.069} & 0.020 & 0.074 & 0.236 \\
After Math      & 0.071          & \textbf{0.038} & 0.127 & 0.536 \\
After Chemistry & 0.095          & 0.058 & \textbf{0.396} & 1.644 \\
After BoolQ     & 0.170          & 0.066 & 1.362 & \textbf{1.788} \\
\bottomrule
\end{tabular}
\end{table}

\begin{table}[H]
\centering
\footnotesize
\caption{Llama-3.2-3B-Instruct: cumulative KL divergence from base model, GRPO. Single seed. Bold indicates the task just trained at that stage.}
\label{tab:cumkl-llama-grpo}
\begin{tabular}{lcccc}
\toprule
\textbf{Stage} & \textbf{Countdown} & \textbf{Math} & \textbf{SciK} & \textbf{BoolQ} \\
\midrule
After Countdown & \textbf{0.072} & 0.011 & 0.094 & 0.312 \\
After Math      & 0.081          & \textbf{0.012} & 0.107 & 0.347 \\
After Chemistry & 0.091          & 0.022 & \textbf{2.306} & 1.411 \\
After BoolQ     & 0.099          & 0.023 & 2.330 & \textbf{1.717} \\
\bottomrule
\end{tabular}
\end{table}

\subsection{Single Task Training Curves}
\label{sec:single-task-curves}

Figure~\ref{fig:grpo_single} and Figure~\ref{fig:es_single} show train and test reward over the course of single-task training for GRPO and ES using the configurations selected in Tables~\ref{tab:es_sweep} and~\ref{tab:grpo_sweep} for Qwen3-4b-Instruct. Both methods converge on all four tasks without overfitting, confirming that these hyperparameters yield stable learning before we proceed to the sequential setting.

\begin{figure}[H]
\centering
\includegraphics[width=0.85\textwidth]
{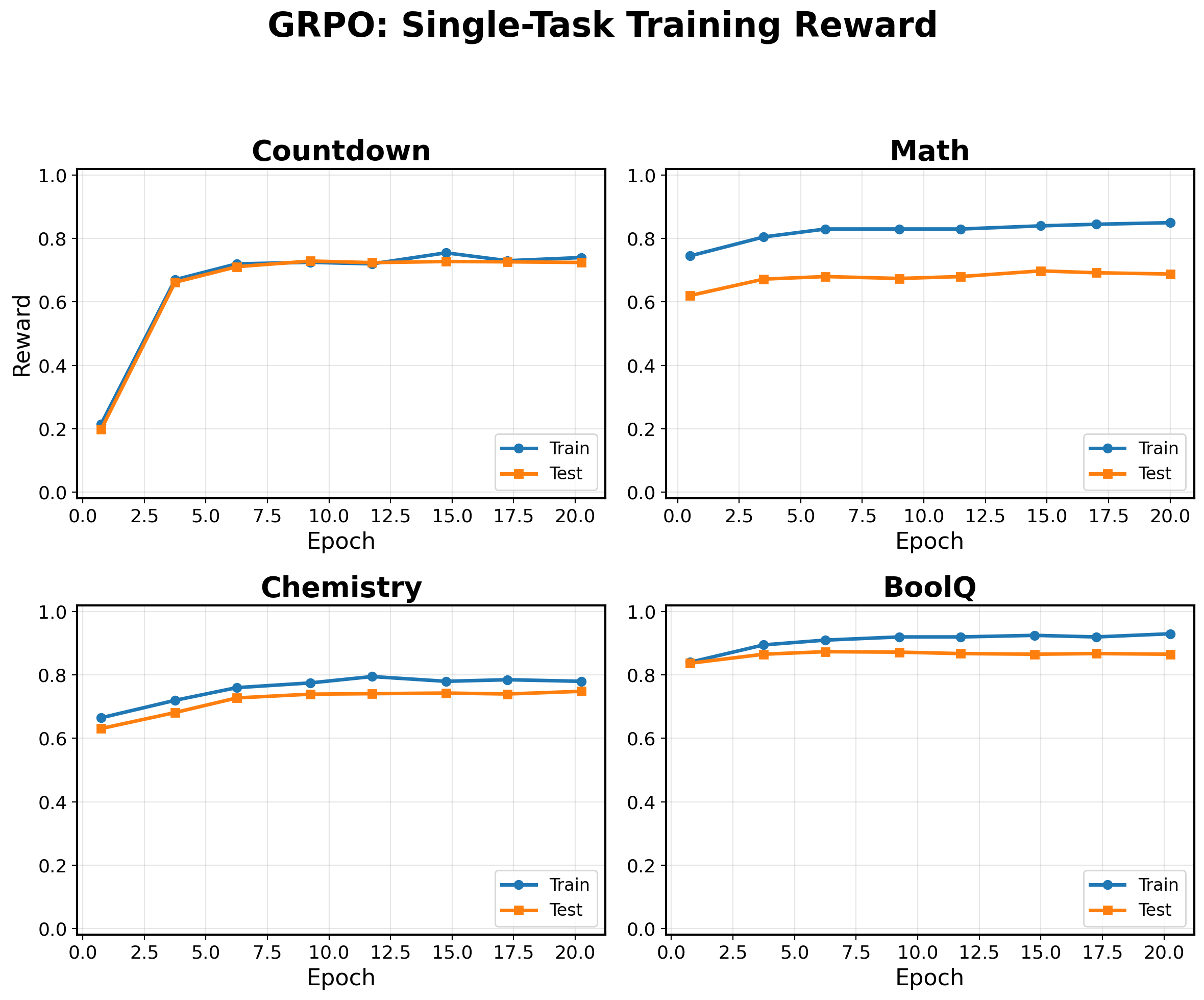}
\caption{GRPO single-task training reward across 20 epochs for each task.}
\label{fig:grpo_single}
\end{figure}

\begin{figure}[H]
\centering
\includegraphics[width=0.85\textwidth]{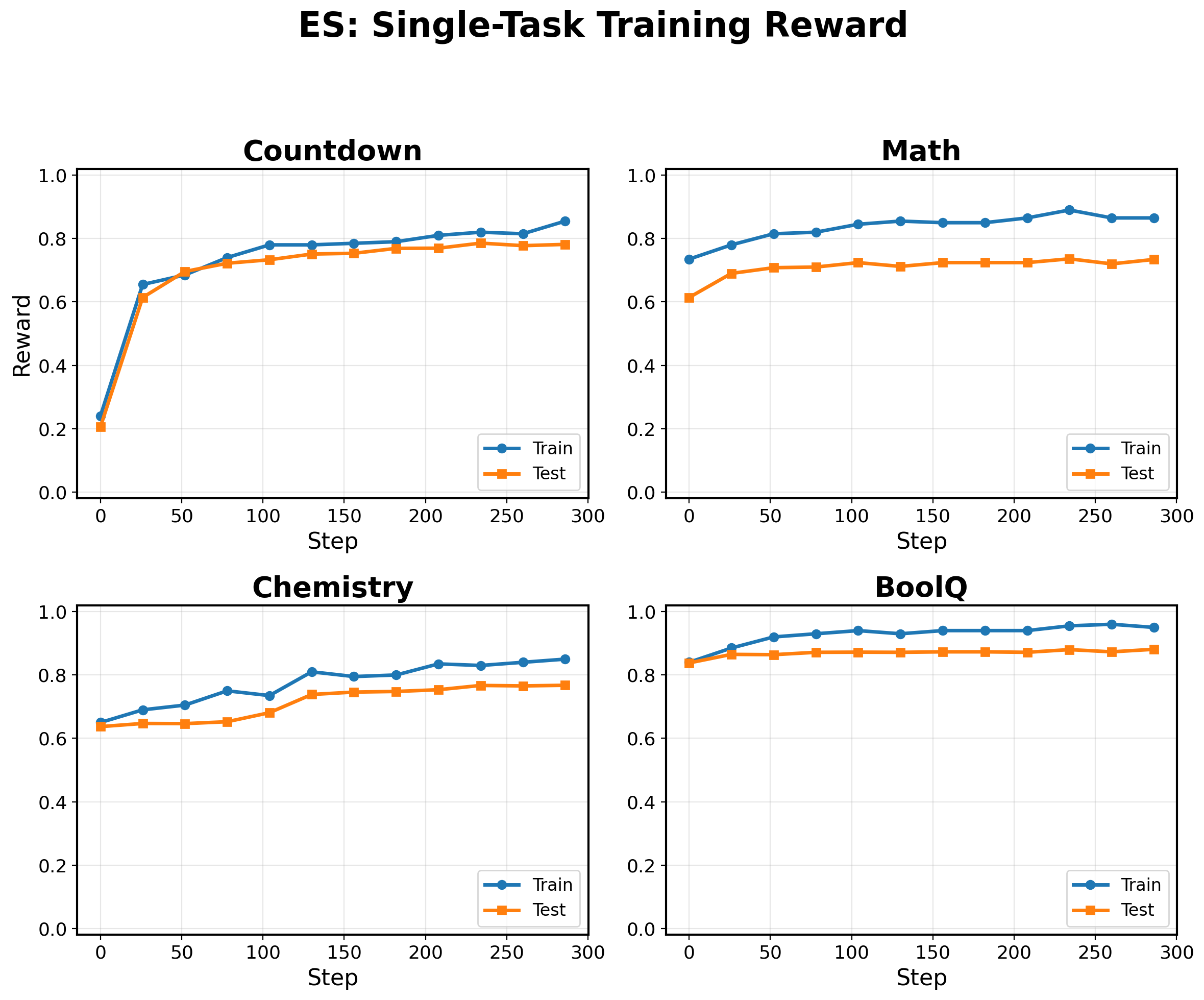}
\caption{ES single-task training reward across 300 steps for each task.}
\label{fig:es_single}
\end{figure}
 
\subsection{Selected Training Configurations}
 
The selected configurations are summarized in Table~\ref{tab:training_configs}.
In the single-task setting, ES trains for 300 steps and GRPO for 20 epochs.
In the sequential setting, each of the four tasks trains for 100 ES steps or 20 GRPO epochs with identical per-task hyperparameters.
 
\begin{table}[H]
\centering
\caption{Training hyperparameters for ES (left) and GRPO (right).}
\label{tab:training_configs}
\begin{minipage}[t]{0.42\textwidth}
\centering
\begin{tabular}{lc}
\toprule
\multicolumn{2}{c}{\textbf{ES}} \\
\midrule
Noise scale ($\sigma$)   & 0.0015 \\
Step size ($\alpha$)      & 0.00075 \\
Population size           & 30 \\
Batch size                & 200 \\
\bottomrule
\end{tabular}
\end{minipage}
\hfill
\begin{minipage}[t]{0.54\textwidth}
\centering
\begin{tabular}{lc}
\toprule
\multicolumn{2}{c}{\textbf{GRPO}} \\
\midrule
Learning rate                          & $1 \times 10^{-5}$ \\
Optimizer                              & AdamW \\
Scheduler                              & Linear decay \\
Max gradient norm                      & 1.0 \\
KL coefficient ($\beta_{\text{KL}}$)   & 0 \\
Rollouts per prompt ($K$)              & 8 \\
\bottomrule
\end{tabular}
\end{minipage}
\end{table}

\subsection{Compute Cost (FLOPs) Comparison}
\label{sec:flops}

For a model with $P$ parameters and sequence length $L$, we estimate a forward pass at $2PL$ FLOPs and a backward pass at $4PL$.

\paragraph{GRPO.} Each response requires a policy forward pass, a reference-model forward pass, and a policy backward pass:
\begin{equation}
\text{FLOPs}_{\text{GRPO}} = T_{\text{GRPO}} \cdot B \cdot G \cdot \underbrace{(2+2+4)}_{\text{fwd + ref + bwd}} \cdot PL = 8\, T_{\text{GRPO}} B G \, PL,
\end{equation}
where $T_{\text{GRPO}}$ is the number of optimizer steps, $B$ the question batch size, and $G$ the number of responses generated per question.

\paragraph{ES.} Each population member requires only a forward pass over the evaluation batch:
\begin{equation}
\text{FLOPs}_{\text{ES}} = T_{\text{ES}} \cdot N \cdot D \cdot \underbrace{2}_{\text{fwd}} \cdot PL = 2\, T_{\text{ES}} N D \, PL,
\end{equation}
where $T_{\text{ES}}$ is the number of iterations, $N$ the population size, and $D$ the number of prompts evaluated per iteration.

\paragraph{Applied to our setup.} From training logs, GRPO runs $T_{\text{GRPO}}=80$ steps per task with $B\approx63$ and $G=8$; ES uses $N=30$ and $D=200$ prompts per iteration, with $T_{\text{ES}} \in \{100, 300\}$. Table~\ref{tab:flops_compute} reports the resulting FLOP totals:

\begin{table}[H]
\centering
\caption{Compute per task, relative to GRPO.}
\label{tab:flops_compute}
\begin{tabular}{lcc}
\toprule
\textbf{Method} & \textbf{FLOPs} & \textbf{vs.\ GRPO} \\
\midrule
GRPO ($T=80$)          & $8(80)(63)(8)\,PL = 322{,}560\,PL$ & $1.00\times$ \\
ES (100 iter)           & $2(100)(30)(200)\,PL = 1{,}200{,}000\,PL$ & $3.72\times$ \\
ES (300 iter)           & $2(300)(30)(200)\,PL = 3{,}600{,}000\,PL$ & $11.16\times$ \\
\bottomrule
\end{tabular}
\end{table}

ES(100), the configuration used throughout Section~4, uses roughly $3.7\times$ the compute of GRPO, and ES(300) roughly $11\times$.

\subsection{Evaluation Protocol}
\label{sec:eval_protocol}
 
All evaluations use greedy decoding (temperature $= 0$) with a maximum of 1{,}024 generated tokens and binary rewards (1.0 correct, 0.0 otherwise).
Table~\ref{tab:eval_protocol} summarizes the answer extraction and scoring for each task.
Countdown problems are sourced from \texttt{Jiayi-Pan/Countdown-Tasks-3to4}, MATH from \texttt{nlile/hendrycks-MATH-benchmark}, SciKnowEval Chemistry from \texttt{hicai-zju/SciKnowEval} v2, BoolQ from \texttt{google/boolq}, MMLU from \texttt{cais/mmlu}, and IFEval from \texttt{google/IFEval}.
 
\begin{table}[H]
\centering
\caption{Evaluation protocol per task. All tasks use greedy decoding (temperature $= 0$), 1{,}024 max generated tokens, and binary scoring.}
\label{tab:eval_protocol}
\small
\begin{tabular}{lcrp{6.2cm}}
\toprule
\textbf{Task} & \textbf{Answer format} & $|\mathcal{D}_{\text{test}}|$ & \textbf{Answer extraction and scoring} \\
\midrule
Countdown & \verb|\boxed{expr}| & 2{,}000 & Extract last \verb|\boxed{}| via regex. Verify the expression contains only digits and $+,-,*,/$; each input number is used exactly once; and the evaluated result equals the target (tolerance $< 10^{-5}$). \\[4pt]
MATH & \verb|\boxed{ans}| & 500 & Extract last \verb|\boxed{}| with nested-brace handling. Normalize both prediction and gold by stripping LaTeX wrappers (\verb|\text|, \verb|\dfrac|, currency/degree symbols). Compare as strings; on mismatch, convert fractions and decimals to floats and compare numerically (tolerance $< 10^{-6}$). \\[4pt]
SciKnowEval Chem. & \texttt{"answer":"X"} & 2{,}000 & Extract answer letter via regex cascade: (1) JSON-style \texttt{"answer":"X"}, (2) unquoted \texttt{answer: X}, (3) last standalone A/B/C/D token. Use the last match found. \\[4pt]
BoolQ & \verb|\boxed{yes/no}| & 2{,}000 & Extract last \verb|\boxed{}|, normalize to lowercase. Accept \{yes, true, 1\} as positive and \{no, false, 0\} as negative. \\[4pt]
\midrule
\multicolumn{4}{l}{\textit{Holdout tasks (not trained on; used to measure forgetting)}} \\
\midrule
MMLU & \texttt{"answer":"X"} & 14{,}042 & Same regex cascade as SciKnowEval Chemistry. \\[4pt]
IFEval & Free-form & 541 & Check each instruction constraint programmatically (keyword presence, formatting rules, length bounds, etc.). Score 1.0 only if all constraints for a prompt are satisfied (strict prompt-level accuracy). \\
\bottomrule
\end{tabular}
\end{table}
 
MMLU and IFEval are evaluated after training to measure whether general capabilities are preserved.

\clearpage
\newpage
\section{Theoretical Results}\label{app:theory-derivation}

\input{theory_derivation}

\end{document}

%% file: theory_derivation.tex
\subsection{Evolution Strategy on a Flat Landscape}
\label{prop:flat}
\begin{proposition}[Evolution Strategy on a Flat Landscape] 
\label{prop:flat_landscape_scaling}
  Let $R(\theta) = \text{const}$, so all reward variation arises from
  observation noise. Under the ES update
  $\Delta\theta = \frac{\alpha}{N}\sum_{i=1}^N Z_i \epsilon_i$
  with $\epsilon_i \sim \mathcal{N}(0, I_d)$ and z-scored rewards
  $Z_i$, the weight update is a pure isotropic random walk:
  \[
      \Delta\theta \sim \mathcal{N}\!\left(0,\, \frac{\alpha^2}{N} I_d\right),
      \qquad
      \mathbb{E}\bigl[\|\Delta\theta\|^2\bigr] = \frac{\alpha^2 d}{N}.
  \]
  After $T$ steps of ES with population size $N$, step size $\alpha$, and parameter dimension $d$, the cumulative deviation satisfies
  \[
      \mathbb{E}\bigl[\|\theta_T - \theta_0\|^2\bigr]
      = \frac{\alpha^2 T d}{N}
      = \frac{\sigma^2 T d}{4N},
  \]
  where the last equality substitutes $\alpha = \sigma/2$.
  In particular, the expected drift grows linearly in $T$ and $d$,
  and shrinks inversely in the batch size $N$. Proof in \ref{appsec:flat_RW_scaling}.
  \end{proposition}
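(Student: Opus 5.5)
The plan is to condition on the rewards and use that, on a flat landscape, the rewards carry no information about the perturbations. Since $R(\theta)$ is constant, each observed reward $R_i$ depends only on observation noise. That noise is drawn independently of the perturbation vectors $\epsilon_1,\dots,\epsilon_N$, so the vector $Z=(Z_1,\dots,Z_N)$ is independent of $(\epsilon_i)_{i=1}^N$. Conditional on $Z$, the update $\Delta\theta=\frac{\alpha}{N}\sum_i Z_i\epsilon_i$ is a fixed linear combination of i.i.d.\ standard Gaussians. It is therefore exactly Gaussian, $\mathcal{N}\bigl(0,\frac{\alpha^2}{N^2}\sum_i Z_i^2\, I_d\bigr)$. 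The z-scoring then fixes the coefficient deterministically. With the population standard deviation normalization $\sigma_R^2=\frac1N\sum_i(R_i-\mu_R)^2$, we have $\sum_i Z_i^2 = N$ identically. The conditional law is thus $\mathcal{N}(0,\frac{\alpha^2}{N}I_d)$ for every realization of $Z$, so it is also the unconditional law. Taking the trace gives $\mathbb{E}\|\Delta\theta\|^2=\alpha^2 d/N$.

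For the $T$-step statement, I write $\theta_T-\theta_0=\sum_{t=0}^{T-1}\Delta\theta_t$. At each step fresh perturbations $\epsilon_i^{(t)}$ and fresh observation noise are drawn. Because the landscape is flat, the update law does not depend on $\theta_t$. Hence the increments are i.i.d.\ $\mathcal{N}(0,\frac{\alpha^2}{N}I_d)$: each is independent of the past given the above, and its conditional law is the same fixed Gaussian. The sum is $\mathcal{N}(0,\frac{\alpha^2T}{N}I_d)$ and $\mathbb{E}\|\theta_T-\theta_0\|^2=\alpha^2Td/N$. Substituting $\alpha=\sigma/2$ gives $\sigma^2Td/(4N)$.

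To get only the second-moment statement, full independence across steps is more than I need. It suffices that the cross terms $\mathbb{E}\langle\Delta\theta_s,\Delta\theta_t\rangle$ vanish for $s<t$. This follows because $\mathbb{E}[\Delta\theta_t\mid\mathcal{F}_{t-1}, Z^{(t)}]=0$, since $\epsilon^{(t)}$ is mean zero and independent of the past and of $Z^{(t)}$. The result is a martingale-increment argument.

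The main obstacle is the normalization convention. If $\sigma_R$ uses the $N-1$ denominator, then $\sum_i Z_i^2=N-1$ and the exact variance becomes $\alpha^2(N-1)/N^2$. In that case I would state the result with this factor and note that it equals $\alpha^2/N$ up to $O(1/N)$. The degenerate event where all rewards coincide, so that $\sigma_R=0$, must also be handled. I would assume continuous observation noise so that this has probability zero, or adopt the convention $Z\equiv 0$ and absorb it into the same factor. Apart from this, the key point to state carefully is the independence of $Z$ from $\epsilon$. It relies on $R(\theta_t+\sigma\epsilon_i)$ being constant, which is exactly where flatness enters.
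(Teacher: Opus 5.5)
Your proof is correct and follows the paper's argument essentially step for step. You condition on the z-scores and use flatness to make $Z$ independent of the perturbations. You then invoke $\sum_i Z_i^2 = N$ (the paper also notes the $N-1$ variant in a footnote), so the conditional law $\mathcal{N}(0,\frac{\alpha^2}{N}I_d)$ is also the marginal law, and finally you sum independent increments over $T$ steps. Your extra care about joint independence of the whole vector $Z$ from all $\epsilon_i$, the martingale cross-term remark, and the degenerate $\sigma_R=0$ event go slightly beyond what the paper writes, but they do not change the route.
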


  \subsection{Evolution Strategy on a Linear Landscape}
  \begin{proposition}[Evolution Strategy on a Linear Landscape] \label{prop:linear_landscape_step}
    Let $R(\theta) = -\mathbf{v}^\top\theta$ with observation noise
    $\sigma_\xi$. Define $\sigma_R^2 := \sigma^2\|\mathbf{v}\|^2 + \sigma_\xi^2$.
    The mean and covariance of the ES parameter update are 
    \begin{align}
        \mathbb{E}[\Delta\theta]
        &= \frac{-\alpha\sigma\,\mathbf{v}}{\sigma_R} \label{eq:es_linear_mean} \\
        \operatorname{Cov}[\Delta\theta]
        &= \frac{\alpha^2}{N}\,I_d
          + \frac{\alpha^2\sigma^2}{N\sigma_R^2}\,\mathbf{v}\mathbf{v}^\top \label{eq:es_linear_cov}
    \end{align}
    The mean update is purely aligned with the gradient direction $\mathbf{v}$,
    and the covariance consists of the same isotropic baseline as the flat landscape
    plus a rank-1 boost along $\mathbf{v}$.
    Relative to the quadratic case (Proposition~\ref{prop:quad_landscape_step}),
    the linear landscape produces identical expressions with $Q^2 = 0$:
    the curvature-dependent inflation of $\sigma_R^2$ and the anisotropic
    noise term $2\sigma^4 Q^2$ are both absent. Proof in \ref{appsec:linear_landscape_step}.
    \end{proposition}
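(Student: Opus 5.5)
Following the paper's idealization (the same one that makes Proposition~\ref{prop:flat_landscape_scaling} an exact Gaussian statement), I will treat the z-scored rewards as the standardized noisy rewards $Z_i = (R_i - \mu_R)/\sigma_R$ with \emph{population} mean and standard deviation. That is, I ignore the finite-$N$ dependence introduced by the sample statistics and regard the pairs $(\epsilon_i, Z_i)$ as i.i.d. across $i$. On the linear landscape, $R_i = -\mathbf{v}^\top\theta - \sigma\,\mathbf{v}^\top\epsilon_i + \xi_i$ with $\xi_i \sim \mathcal{N}(0,\sigma_\xi^2)$ independent of $\epsilon_i$. Hence $\mu_R = -\mathbf{v}^\top\theta$ and the variance is exactly $\sigma_R^2 = \sigma^2\|\mathbf{v}\|^2 + \sigma_\xi^2$. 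This gives the representation
\[
Z_i = \frac{-\sigma\,\mathbf{v}^\top\epsilon_i + \xi_i}{\sigma_R},
\]
a standard Gaussian jointly Gaussian with $\epsilon_i$. Because $\Delta\theta = \frac{\alpha}{N}\sum_i Z_i\epsilon_i$ is an average of i.i.d. terms, I will have $\mathbb{E}[\Delta\theta] = \alpha\,\mathbb{E}[Z\epsilon]$ and $\operatorname{Cov}[\Delta\theta] = \frac{\alpha^2}{N}\operatorname{Cov}[Z\epsilon]$. The proof therefore reduces to computing the first two moments of a single product $Z\epsilon$.

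\textbf{Mean.} Using $\mathbb{E}[\epsilon\epsilon^\top] = I_d$ and the independence of $\xi$ from $\epsilon$,
\[
\mathbb{E}[Z\epsilon] = -\frac{\sigma}{\sigma_R}\,\mathbb{E}[\epsilon\epsilon^\top]\mathbf{v} = -\frac{\sigma\mathbf{v}}{\sigma_R}.
\]
This yields \eqref{eq:es_linear_mean}.

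\textbf{Second moment.} I will compute $\mathbb{E}[Z^2\epsilon\epsilon^\top]$ by splitting $Z^2$ into three pieces:
\begin{itemize}
\item the $\xi^2$ piece, which factorizes to $\sigma_\xi^2 I_d/\sigma_R^2$;
\item the cross piece, which vanishes because $\mathbb{E}[\xi]=0$ and $\xi$ is independent of $\epsilon$;
\item the signal piece $\frac{\sigma^2}{\sigma_R^2}\mathbb{E}[(\mathbf{v}^\top\epsilon)^2\epsilon\epsilon^\top]$, which needs the Gaussian fourth moment.
\end{itemize}
For the signal piece I will use Isserlis' theorem (equivalently, rotate so that $\mathbf{v}$ is a coordinate axis). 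This gives $\mathbb{E}[(\mathbf{v}^\top\epsilon)^2\epsilon\epsilon^\top] = \|\mathbf{v}\|^2 I_d + 2\mathbf{v}\mathbf{v}^\top$. Summing the pieces and using $\sigma^2\|\mathbf{v}\|^2 + \sigma_\xi^2 = \sigma_R^2$, the isotropic part collapses to exactly $I_d$:
\[
\mathbb{E}[Z^2\epsilon\epsilon^\top] = I_d + \frac{2\sigma^2}{\sigma_R^2}\mathbf{v}\mathbf{v}^\top.
\]
Subtracting the outer product of the mean, $\frac{\sigma^2}{\sigma_R^2}\mathbf{v}\mathbf{v}^\top$, leaves $\operatorname{Cov}[Z\epsilon] = I_d + \frac{\sigma^2}{\sigma_R^2}\mathbf{v}\mathbf{v}^\top$. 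Multiplying by $\alpha^2/N$ gives \eqref{eq:es_linear_cov}.

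\textbf{Remaining claims and main obstacle.} The comparison with Proposition~\ref{prop:quad_landscape_step} then follows by setting $Q=0$ in that result's formulas: the curvature-dependent inflation of $\sigma_R^2$ and the $2\sigma^4Q^2$ term both vanish. As a consistency check, the flat case $\mathbf{v}=0$ recovers Proposition~\ref{prop:flat_landscape_scaling}. The only delicate point is justifying the population-statistics idealization for $Z_i$. With sample mean and standard deviation the $Z_i$ are exchangeable but not independent, so the formulas would be exact only up to $O(1/N)$ corrections. I will state this explicitly as the modeling assumption, consistent with the flat-landscape proposition, rather than attempt the exact finite-$N$ computation.
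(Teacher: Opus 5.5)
Your argument is correct, but it is organized differently from the paper's proof of this proposition.

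You impose the population-statistics idealization on the entire update and compute $\mathbb{E}[Z^2\epsilon\epsilon^\top]$ in all of $\mathbb{R}^d$ at once via Isserlis. This is exactly the paper's method for the quadratic landscape: its $A$ and $C$ terms, with the curvature term $B$ absent. That is why the comparison with that proposition is immediate in your version.

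For the linear case, the paper instead splits $\epsilon_i = P^\perp\epsilon_i + P^\parallel\epsilon_i$ with $P^\parallel = \mathbf{v}\mathbf{v}^\top/\|\mathbf{v}\|^2$. The rewards, and hence even the \emph{sample}-z-scored $Z_i$, depend only on $P^\parallel\epsilon_i$ and $\xi_i$. Conditioning on $\{Z_i\}$ and using the exact identity $\sum_i Z_i^2 = N$ then gives $P^\perp\Delta\theta \sim \mathcal{N}(0, \frac{\alpha^2}{N}P^\perp)$ exactly, at finite $N$ and independently of $P^\parallel\Delta\theta$. The population approximation ($\mu_R\approx 0$, $\sigma_R^2\approx\sigma^2\|\mathbf{v}\|^2+\sigma_\xi^2$) is used only for the scalar coefficient along $\mathbf{v}$. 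There the paper computes the mean $-\sigma\|\mathbf{v}\|$ and variance $2\sigma^2\|\mathbf{v}\|^2+\sigma_\xi^2$ of $X_i = -\sigma\|\mathbf{v}\|n_i^2 + \sigma_\xi\xi_i n_i$, using $\mathbb{E}[n_i^4]=3$.

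\textbf{What the paper's split buys.} It shows your $O(1/N)$ caveat is unnecessary on the $(d-1)$-dimensional off-manifold block. The isotropic baseline there holds for the actual finite-$N$ algorithm, and the idealization error is confined to a single direction. The split also directly delivers $\mathbb{E}\|P^\parallel\Delta\theta\|^2$ and $\mathbb{E}\|P^\perp\Delta\theta\|^2$, which the subsequent on-manifold-fraction remark uses.

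\textbf{What your route buys.} It is shorter and handles the cross-covariance between the two blocks automatically rather than through an independence argument. It also carries over verbatim to the quadratic case.
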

    
    \begin{remark}[On-manifold alignment of ES update]
    \label{remark:2}
    Define the signal fraction
    $s := \sigma^2\|\mathbf{v}\|^2/\sigma_R^2 \in [0,1]$.
    The on-manifold fraction of total squared displacement is
    \[
        \rho :=
        \frac{\mathbb{E}\|P^{\parallel}\Delta\theta\|^2}
             {\mathbb{E}\|\Delta\theta\|^2}
        =
        \frac{1 + (N+1)\,s}
             {d + (N+1)\,s}.
    \]
    Hence: (i) without signal ($s=0$), $\rho = 1/d$, i.e.\ the update is
    purely diffusive; (ii) in the high-SNR limit ($s\to 1$), $\rho \to
    (N+2)/(d+N+1)$; (iii) $\rho$ increases monotonically in $s$ and $N$,
    and decreases in $d$. Since $N \ll d$ in practice, on-manifold movement
    per step is negligible relative to off-manifold diffusion.
    \end{remark}

  
  \subsection{Evolution Strategy on a Quadratic Landscape}
  
  \begin{proposition}[Evolution Strategy on a Quadratic Landscape] \label{prop:quad_landscape_step}
  Let $R(\theta) = -\frac{1}{2}\theta^\top Q\theta$,  
  current iterate $\theta_0$, and $\mathbf{v} := Q\theta_0$
  (the negative gradient). Define
  \begin{equation} \label{eq:es-quadratic-sigmaR}
      \sigma_R^2 := \sigma^2\|\mathbf{v}\|^2
                  + \tfrac{\sigma^4}{2}\operatorname{Tr}[Q^2]
                  + \sigma_\xi^2.
  \end{equation}
  The mean and covariance of the ES parameter update are
  \begin{align}
      \mathbb{E}[\Delta\theta]
      &= \frac{-\alpha\sigma\,\mathbf{v}}{\sigma_R}, \label{eq:es-quad-mean} \\
      \operatorname{Cov}[\Delta\theta]
      &= \frac{\alpha^2}{N\sigma_R^2}
        \left(
          \sigma^2\,\mathbf{v}\mathbf{v}^\top
          + 2\sigma^4 Q^2
          + \sigma_R^2\,I_d
        \right). \label{eq:es-quad-cov}
  \end{align}
  Relative to the linear case (Proposition~\ref{prop:linear_landscape_step}), the quadratic term $Q$ introduces two
  additional contributions: (i) a curvature-dependent inflation of
  $\sigma_R^2$ through $\operatorname{Tr}[Q^2]$, which \emph{attenuates}
  the mean gradient step; and (ii) an anisotropic noise term
  $2\sigma^4 Q^2/\sigma_R^2$ in the covariance, which increases
  update variance along the principal high curvature directions
  of $Q$. Proof in \ref{appsec:quad_landscape_step}.
  \end{proposition}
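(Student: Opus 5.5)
Linearize the reward around the current iterate and compute the update moments under a Gaussian approximation for the z-scored rewards, which is the same idealization as in Propositions~\ref{prop:flat_landscape_scaling} and~\ref{prop:linear_landscape_step}.

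\textbf{Reward expansion.} Write the reward of the $i$-th perturbed model as
\[
R_i = -\tfrac12\theta_0^\top Q\theta_0 - \sigma\mathbf{v}^\top\epsilon_i - \tfrac{\sigma^2}{2}\epsilon_i^\top Q\epsilon_i + \xi_i,
\]
with $\xi_i$ the observation noise. Treat the population statistics $\mu_R$ and $\sigma_R$ as their population values, which is the large-$N$ idealization used throughout. Then
\[
Z_i = \frac{-\sigma\mathbf{v}^\top\epsilon_i - \tfrac{\sigma^2}{2}\bigl(\epsilon_i^\top Q\epsilon_i - \operatorname{Tr}Q\bigr) + \xi_i}{\sigma_R}.
\]

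\textbf{Variance $\sigma_R^2$.} The three pieces of the numerator are uncorrelated:
\begin{itemize}
\item the linear term is odd in $\epsilon_i$ while the centered quadratic form is even;
\item $\xi_i$ is independent of $\epsilon_i$.
\end{itemize}
Their variances are $\sigma^2\|\mathbf{v}\|^2$, $\tfrac{\sigma^4}{4}\cdot 2\operatorname{Tr}[Q^2]$ and $\sigma_\xi^2$, which gives \eqref{eq:es-quadratic-sigmaR} and confirms $\operatorname{Var}Z_i=1$.

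\textbf{Mean.} By i.i.d.\ sampling, $\mathbb{E}[\Delta\theta]=\alpha\,\mathbb{E}[Z_1\epsilon_1]$. In this expectation:
\begin{itemize}
\item the quadratic term contributes $\mathbb{E}[(\epsilon^\top Q\epsilon)\epsilon]=0$, since odd Gaussian moments vanish;
\item the noise contributes zero;
\item the linear term gives $-\sigma\mathbb{E}[\epsilon\epsilon^\top]\mathbf{v}/\sigma_R=-\sigma\mathbf{v}/\sigma_R$.
\end{itemize}
This yields \eqref{eq:es-quad-mean}.

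\textbf{Covariance.} Again by independence, $\operatorname{Cov}[\Delta\theta]=\frac{\alpha^2}{N}\bigl(\mathbb{E}[Z^2\epsilon\epsilon^\top]-\mathbb{E}[Z\epsilon]\mathbb{E}[Z\epsilon]^\top\bigr)$. I would compute $\mathbb{E}[Z^2\epsilon\epsilon^\top]$ term by term with Isserlis' theorem.
\begin{itemize}
\item \emph{Linear--linear term:} $\sigma^2\mathbb{E}[(\mathbf{v}^\top\epsilon)^2\epsilon\epsilon^\top]=\sigma^2(\|\mathbf{v}\|^2 I+2\mathbf{v}\mathbf{v}^\top)$.
\item \emph{Noise term:} $\sigma_\xi^2 I$.
\item \emph{Cross terms:} the linear--quadratic cross term is odd and vanishes, and the noise cross terms vanish by independence.
\item \emph{Centered quadratic term:} with $q=\epsilon^\top Q\epsilon-\operatorname{Tr}Q$, we need $\tfrac{\sigma^4}{4}\mathbb{E}[q^2\epsilon\epsilon^\top]$. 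Using the Gaussian identities for sixth moments, or equivalently differentiating the moment generating function $\mathbb{E}[e^{\epsilon^\top A\epsilon}]$, this gives $\tfrac{\sigma^4}{4}\bigl(2\operatorname{Tr}[Q^2]I+8Q^2\bigr)$.
\end{itemize}
Dividing by $\sigma_R^2$ and collecting terms:
\begin{itemize}
\item the isotropic parts sum to $\sigma_R^2 I$;
\item the rank-one part is $2\sigma^2\mathbf{v}\mathbf{v}^\top$, and subtracting the mean outer product $\sigma^2\mathbf{v}\mathbf{v}^\top$ leaves $\sigma^2\mathbf{v}\mathbf{v}^\top$;
\item the anisotropic part is $2\sigma^4Q^2$.
\end{itemize}
This reproduces \eqref{eq:es-quad-cov}.

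\textbf{Main obstacle.} The hardest step is the sixth-moment computation $\mathbb{E}[q^2\epsilon\epsilon^\top]$. Of the fifteen pairings, only those not pairing the two $Q$-factors internally survive the centering, and they must be counted carefully to get the coefficient $8Q^2$, which becomes $2\sigma^4Q^2$. I would verify the coefficient in $d=1$ with $Q=1$: there $\mathbb{E}[(\epsilon^2-1)^2\epsilon^2]=15-6+1=10=2+8$, consistent with the formula.

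A secondary caveat is that replacing the empirical $\mu_R$ and $\sigma_R$ by population values is an approximation, so the result should be stated as exact under that idealization, as in the linear case. Finally, setting $Q=0$ recovers Proposition~\ref{prop:linear_landscape_step}, which serves as a consistency check.
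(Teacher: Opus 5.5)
Your proposal is correct and follows the paper's proof essentially step for step. Both replace the empirical $\mu_R,\sigma_R$ by their population values and split the centered reward into linear, centered-quadratic and noise parts. Both kill the cross terms by parity and independence, then apply Isserlis to the fourth and sixth moments, obtaining $\sigma^2(\|\mathbf{v}\|^2 I+2\mathbf{v}\mathbf{v}^\top)$ and $\tfrac{\sigma^4}{4}(2\operatorname{Tr}[Q^2]I+8Q^2)$ exactly as the paper does. Your remark that centering removes precisely the self-contracted pairings is a compact form of the paper's explicit cancellation of its Type I and Type III diagrams. The opening word ``linearize'' is a misnomer, since you rightly keep the quadratic term exactly.
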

  
  \begin{remark}[On-manifold alignment on quadratic landscape]
    The on-manifold fraction of total expected squared displacement is
    \[
        \rho
        =
        \frac{(N+1)\,\sigma^2\|\mathbf{v}\|^2
              + 2\sigma^4\,\hat{\mathbf{v}}^\top Q^2\hat{\mathbf{v}}
              + \sigma_R^2}
             {(N+1)\,\sigma^2\|\mathbf{v}\|^2
              + 2\sigma^4\,\operatorname{Tr}[Q^2]
              + d\,\sigma_R^2},
    \]
    where $\hat{\mathbf{v}} := \mathbf{v}/\|\mathbf{v}\|$.
    Setting $Q^2 = 0$ and $\sigma_R^2 = \sigma^2\|\mathbf{v}\|^2 + \sigma_\xi^2$
    recovers the linear-landscape ratio $\rho = (1+(N{+}1)s)/(d+(N{+}1)s)$.
    The quadratic landscape modifies $\rho$ in two ways:
    the curvature inflates $\sigma_R^2$, which grows the isotropic term
    $d\,\sigma_R^2$ in the denominator;
    and the $2\sigma^4 Q^2$ noise splits between numerator
    (the projection $\hat{\mathbf{v}}^\top Q^2\hat{\mathbf{v}}$) and
    denominator (the full trace $\operatorname{Tr}[Q^2]$).
    When the gradient aligns with the high-curvature eigenvectors,
    much of the curvature-induced noise is on-manifold and $\rho$ is
    comparable to the linear case; when the curvature is predominantly
    orthogonal to $\hat{\mathbf{v}}$, the denominator grows while the
    numerator does not, suppressing $\rho$.
    \end{remark}

  \subsection{Dynamics of ES on Quadratic Landscape}
  
  \begin{proposition}[Dynamics of ES on Quadratic Landscape] \label{prop:quad_landscape_dynamics}
    Consider the simplified evolution
    \[
    \theta_{t+1} = H\theta_t + \eta_t, \qquad \eta_t \sim \mathcal{N}\!\left(0,\tfrac{\alpha^2}{N}I_d\right),
    \]
    where $H = I - \frac{\alpha\sigma}{\sigma_R}Q = \sum_k \left(1 - \frac{\alpha\sigma}{\sigma_R}\lambda_k\right)\mathbf{u}_k\mathbf{u}_k^\top$. 
    \footnote{$\sigma_R$ here is a fixed constant, not changing across iterations as in the full ES update (Eq. \ref{eq:es-quadratic-sigmaR}).}
    Let $\gamma_k := 1 - \frac{\alpha\sigma}{\sigma_R}\lambda_k$ denote the contraction factor along eigendirection $\mathbf{u}_k$. Then the projected mean and variance evolve as
    \[
    \mathbb{E}[\mathbf{u}_k^\top\theta_t] = \gamma_k^t\,\mathbf{u}_k^\top\theta_0,
    \]
    \[
    \mathrm{Var}[\mathbf{u}_k^\top\theta_t] = \frac{\alpha^2}{N}\cdot\begin{cases} \dfrac{1 - \gamma_k^{2t}}{1-\gamma_k^2}, & \gamma_k^2 \neq 1, \\[6pt] t, & \gamma_k^2 = 1. \end{cases}
    \]
    The process converges in mean and variance along direction $\mathbf{u}_k$ if and only if $|\gamma_k| < 1$, i.e.,
    \[
    0 < \lambda_k < \frac{2\sigma_R}{\alpha\sigma}.
    \]
    In this regime, the asymptotic variance is
    \[
    \lim_{t\to\infty}\mathrm{Var}[\mathbf{u}_k^\top\theta_t] = \frac{\alpha^2}{N}\cdot\frac{1}{1-\gamma_k^2},
    \]
    and the convergence timescale is $\tau_k = -\log 2\,/\log\gamma_k^2$. Proof in \ref{appsec:quad_landscape_dynamics}.
    \end{proposition}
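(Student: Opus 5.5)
The plan is to reduce the vector recursion to independent scalar recursions by projecting onto the eigenbasis of $Q$. Since $Q$ is symmetric, take the orthonormal eigenvectors $\{\mathbf{u}_k\}$ with eigenvalues $\lambda_k$. Then $H=\sum_k\gamma_k\mathbf{u}_k\mathbf{u}_k^\top$ is diagonal in this basis, and $x_t^{(k)}:=\mathbf{u}_k^\top\theta_t$ satisfies
\[
x_{t+1}^{(k)}=\gamma_k x_t^{(k)}+\xi_t^{(k)},\qquad \xi_t^{(k)}:=\mathbf{u}_k^\top\eta_t .
\]
Because $\eta_t\sim\mathcal{N}(0,\tfrac{\alpha^2}{N}I_d)$ is isotropic and $\mathbf{u}_k$ has unit norm, each $\xi_t^{(k)}$ is $\mathcal{N}(0,\alpha^2/N)$. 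The $\xi_t^{(k)}$ are i.i.d.\ in $t$ and independent of $\theta_t$, since $\eta_t$ is drawn fresh at each step. They are also independent across $k$, though the statement does not need this.

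Next, unroll the scalar recursion:
\[
x_t^{(k)}=\gamma_k^t x_0^{(k)}+\sum_{s=0}^{t-1}\gamma_k^{t-1-s}\xi_s^{(k)} .
\]
This is the eigen-projected form of the decomposition \eqref{eq:es_decomposition}. Taking expectations gives the mean $\gamma_k^t\mathbf{u}_k^\top\theta_0$, treating $\theta_0$ as deterministic. The noises are independent, so the variance is $\frac{\alpha^2}{N}\sum_{j=0}^{t-1}\gamma_k^{2j}$. Summing this geometric series gives $\frac{1-\gamma_k^{2t}}{1-\gamma_k^2}$ when $\gamma_k^2\neq1$, and $t$ when $\gamma_k^2=1$.

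For the convergence claim, consider the mean and the variance separately.
\begin{itemize}
\item The mean $\gamma_k^t x_0^{(k)}$ converges for every $x_0^{(k)}$ iff $|\gamma_k|<1$. The borderline case $\gamma_k=1$ leaves it constant, which is why the statement is phrased jointly with the variance.
\item The partial sums $\sum_j\gamma_k^{2j}$ converge iff $\gamma_k^2<1$. At $\gamma_k^2=1$ the variance is $t\to\infty$, and for $\gamma_k^2>1$ it diverges geometrically.
\end{itemize}
Hence joint convergence holds iff $|\gamma_k|<1$. Writing $c=\alpha\sigma/\sigma_R>0$, the condition $-1<1-c\lambda_k<1$ is equivalent to $0<\lambda_k<2/c=2\sigma_R/(\alpha\sigma)$. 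In this regime $\gamma_k^{2t}\to0$, which yields the limit $\frac{\alpha^2}{N}\frac{1}{1-\gamma_k^2}$.

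Finally, the timescale $\tau_k$ is read off as the time at which the transient factor $\gamma_k^{2t}$ has decayed to $1/2$. Solving $\gamma_k^{2\tau}=1/2$ gives $\tau_k=-\log2/\log\gamma_k^2$. This is effectively a definition, the half-life of the variance transient, so I would state it as such.

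No step is genuinely hard. The only point needing care is the independence structure that justifies adding variances: $\eta_t$ must be independent of $\theta_t$, and $\sigma_R$ must be held fixed so that $H$ is deterministic, as the footnote stipulates. The edge case $\gamma_k=-1$ (that is, $\lambda_k=2\sigma_R/(\alpha\sigma)$) also needs explicit treatment. There $\gamma_k^2=1$, so the variance is $t$ and diverges, and the mean oscillates; both correctly fall outside the open interval.
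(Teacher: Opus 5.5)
Your proposal is correct and follows essentially the same route as the paper. Both arguments unroll the linear recursion to $H^t\theta_0+\sum_{i=1}^{t}H^{i-1}\eta_{t-i}$, read off the mean, add the independent noise covariances, project onto $\mathbf{u}_k$ to get a geometric series in $\gamma_k^2$, and define $\tau_k$ as the half-life of $\gamma_k^{2t}$; you project before unrolling where the paper unrolls first, which is immaterial. Your treatment of the boundary case $\gamma_k^2=1$ (including $\gamma_k=-1$) and of the joint mean--variance reading of the ``iff'' is cleaner than the paper's, whose case label mistakenly writes $\gamma^2=0$ instead of $\gamma^2=1$.
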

    
    \begin{remark}[Stability conditions]
    The three failure modes each correspond to a violation of $|\gamma_k|<1$. Flat directions ($\lambda_k = 0$) satisfy $\gamma_k = 1$, so both the mean and variance grow as a pure random walk: the optimizer drifts diffusively and never localizes. Concave directions ($\lambda_k < 0$) give $\gamma_k > 1$, causing exponential divergence of both mean and variance --- the landscape actively repels the iterates. Overly sharp convex directions ($\lambda_k \geq 2\sigma_R/\alpha\sigma$) give $|\gamma_k| \geq 1$ for a different reason: the ES step overshoots, analogous to gradient descent with too large a learning rate. Note that because the ES learning rate is $\alpha = \sigma/2$, this threshold becomes $\lambda_k = 4\sigma_R/\sigma^2$.
    \end{remark}
    
    \begin{remark}[Optimal curvature]
    Within the stable regime, the convergence speed and asymptotic variance are jointly controlled by $\gamma_k^2$. Directions with $\lambda_k \approx \sigma_R/\alpha\sigma$ achieve $\gamma_k \approx 0$, which simultaneously minimizes the convergence timescale ($\tau_k \to 1$ iteration) and the asymptotic variance ($\approx \alpha^2/N$). Directions near the stability boundary $\lambda_k \to 0^+$ or $\lambda_k \to (2\sigma_R/\alpha\sigma)^-$ suffer slow convergence and inflated variance. Thus the landscape curvature relative to the ratio $\sigma_R/\alpha\sigma$ plays the role of an effective condition number for the ES dynamics.
    \end{remark}

    \begin{remark}[Role of noise scale]
    The asymptotic variance $\frac{\alpha^2}{N(1-\gamma_k^2)}$ reveals the expected tension between exploration and precision: increasing the population size $N$ suppresses variance linearly, while the landscape geometry (through $\gamma_k^2$) sets a floor on the minimum achievable uncertainty. In well-conditioned directions, $1 - \gamma_k^2 \approx 2\frac{\alpha\sigma}{\sigma_R}\lambda_k$, so the asymptotic variance scales as $\frac{\alpha\sigma_R}{2N\sigma\lambda_k}$, inversely proportional to both curvature and the signal-to-noise ratio $\sigma/\sigma_R$.
    \end{remark}
\subsection{Dynamics of GD on Quadratic Landscape}
  \begin{proposition}[Dynamics of GD on Quadratic Landscape] \label{prop:gd_quad_dynamics}
Let $R(\theta) = -\frac{1}{2}\theta^\top Q\theta$ with $Q \succeq 0$,
and let gradient descent with learning rate $\beta$ iterate as
$\theta_{t+1} = (I - \beta Q)\,\theta_t$.
Define $H_{\mathrm{GD}} := I - \beta Q$ and let
$\gamma_k^{\mathrm{GD}} := 1 - \beta\lambda_k$. The iterate is
\[
    \theta_t = H_{\mathrm{GD}}^t\,\theta_0
    = \sum_{k=1}^{d}(\gamma_k^{\mathrm{GD}})^t\,(\mathbf{u}_k^\top\theta_0)\,\mathbf{u}_k.
\]
Direction $\mathbf{u}_k$ converges if and only if $|\gamma_k^{\mathrm{GD}}| < 1$, i.e.,
$0 < \lambda_k < 2/\beta$.
In this regime the convergence timescale is
$\tau_k^{\mathrm{GD}} = -\log 2\,/\,\log(\gamma_k^{\mathrm{GD}})^2$,
and the residual error is exactly zero: $\lim_{t\to\infty}\mathbf{u}_k^\top\theta_t = 0$. Proof in \ref{appsec:gd_quad_landscape}.
\end{proposition}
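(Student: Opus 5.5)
The recursion $\theta_{t+1} = H_{\mathrm{GD}}\theta_t$ with $H_{\mathrm{GD}} = I - \beta Q$ is linear and deterministic, so we will unroll it and then diagonalize. A straightforward induction on $t$ gives $\theta_t = H_{\mathrm{GD}}^t\theta_0$. The base case $t=0$ is trivial, and the inductive step is $\theta_{t+1} = H_{\mathrm{GD}}\theta_t = H_{\mathrm{GD}}^{t+1}\theta_0$.

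Since $Q \succeq 0$ is symmetric, the spectral theorem gives an orthonormal eigenbasis $\{\mathbf{u}_k\}_{k=1}^d$ with eigenvalues $\lambda_k \ge 0$. This is the same basis used in Prop.~\ref{prop:quad_landscape_dynamics}. It follows that $H_{\mathrm{GD}} = \sum_k \gamma_k^{\mathrm{GD}}\mathbf{u}_k\mathbf{u}_k^\top$ and $H_{\mathrm{GD}}^t = \sum_k (\gamma_k^{\mathrm{GD}})^t\mathbf{u}_k\mathbf{u}_k^\top$, because the cross terms vanish by orthonormality. Applying this to $\theta_0$ yields the stated expansion. Equivalently, the projected coordinate satisfies $\mathbf{u}_k^\top\theta_t = (\gamma_k^{\mathrm{GD}})^t\,\mathbf{u}_k^\top\theta_0$. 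This is exactly the mean recursion of Prop.~\ref{prop:quad_landscape_dynamics}, with the noise removed and $\alpha\sigma/\sigma_R$ replaced by $\beta$.

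Convergence along $\mathbf{u}_k$ then reduces to the behavior of the scalar geometric sequence $\gamma^t c$, where $c = \mathbf{u}_k^\top\theta_0$.
\begin{itemize}
\item If $|\gamma|<1$, the sequence tends to $0$ for every $c$.
\item If $\gamma = 1$, the sequence is constant.
\item If $\gamma = -1$, the sequence oscillates in sign.
\item If $|\gamma|>1$, the sequence diverges.
\end{itemize}
In the last three cases it fails to converge to the minimizer whenever $c\neq 0$. So we read ``converges'' as convergence to the optimum for generic $\theta_0$, which makes $|\gamma_k^{\mathrm{GD}}|<1$ both necessary and sufficient. Solving $-1 < 1 - \beta\lambda_k < 1$ gives $0 < \lambda_k < 2/\beta$, with $\beta>0$ assumed.

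In this regime, $\mathbf{u}_k^\top\theta_t \to 0$ exactly, since no noise term is present. This is the contrast with the ES asymptotic variance $\alpha^2/(N(1-\gamma_k^2))$. For the timescale, we define $\tau_k^{\mathrm{GD}}$ as the number of steps needed to halve the squared projection. Setting $(\gamma_k^{\mathrm{GD}})^{2\tau} = 1/2$ gives $\tau = -\log 2/\log(\gamma_k^{\mathrm{GD}})^2$. This matches the convention of Prop.~\ref{prop:quad_landscape_dynamics}.

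The only real subtlety is the ``if and only if'' clause, because of the degenerate initial conditions with $\mathbf{u}_k^\top\theta_0 = 0$. We will handle this by stating explicitly that convergence is required for arbitrary $\theta_0$. We will also note that flat directions ($\lambda_k = 0$) give $\gamma_k^{\mathrm{GD}} = 1$, so GD freezes there at its initial value, which is the contrast with the ES random walk used in the main text.
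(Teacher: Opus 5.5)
Your proposal is correct and follows essentially the same route as the paper's proof in Appendix~\ref{appsec:gd_quad_landscape}. Both unroll the deterministic recursion to $\theta_t = H_{\mathrm{GD}}^t\theta_0$, diagonalize in the eigenbasis of $Q$ to get $\mathbf{u}_k^\top\theta_t = (1-\beta\lambda_k)^t\,\mathbf{u}_k^\top\theta_0$, read off the convergence condition and failure modes from the scalar geometric sequence, and define $\tau_k^{\mathrm{GD}}$ as the time to halve the squared projection. Your explicit treatment of the degenerate case $\mathbf{u}_k^\top\theta_0 = 0$ and your assumption $\beta>0$ are minor clarifications that the paper leaves implicit.
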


\begin{remark}[Stability conditions for GD]
The three failure modes mirror those of the ES dynamics
(Proposition~\ref{prop:quad_landscape_dynamics}):
flat directions ($\lambda_k = 0$) give $\gamma_k^{\mathrm{GD}} = 1$, so the iterate
is frozen at $\mathbf{u}_k^\top\theta_0$ for all $t$ --- unlike ES, where the iterate
drifts diffusively.
Concave directions ($\lambda_k < 0$) give $\gamma_k^{\mathrm{GD}} > 1$, causing
exponential divergence.
Overly sharp convex directions ($\lambda_k \geq 2/\beta$) give
$|\gamma_k^{\mathrm{GD}}| \geq 1$, and the iterate overshoots with growing or
non-decaying oscillations.
\end{remark}

\subsection{GD displacement is low-dimensional}
\begin{proposition}[GD displacement is low-dimensional] \label{prop:gd_displacement}
Starting from $\theta_0$ on a rank-$r$ quadratic landscape ($r \ll d$), the GD displacement
\[
    \theta_{\mathrm{GD}} - \theta_0
    = \sum_{k:\,\lambda_k > 0}\bigl[(1-\beta\lambda_k)^T - 1\bigr](\mathbf{u}_k^\top\theta_0)\,\mathbf{u}_k
\]
lies entirely within the $r$-dimensional column space of $Q$, with
$\|\theta_{\mathrm{GD}} - \theta_0\|^2 = \mathcal{O}(r)$.
\end{proposition}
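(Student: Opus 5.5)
The plan is to apply Proposition~\ref{prop:gd_quad_dynamics} directly and read off the displacement in the eigenbasis of $Q$. Because $Q \succeq 0$ is symmetric, take an orthonormal eigenbasis $\{\mathbf{u}_k\}_{k=1}^d$ with eigenvalues $\lambda_k \ge 0$. Exactly $r$ of these eigenvalues are positive, and they span $\operatorname{col}(Q)$; the other $d-r$ eigenvalues vanish and their eigenvectors span $\ker Q$. Proposition~\ref{prop:gd_quad_dynamics} gives $\theta_T = \sum_k (1-\beta\lambda_k)^T(\mathbf{u}_k^\top\theta_0)\mathbf{u}_k$. Writing $\theta_0 = \sum_k (\mathbf{u}_k^\top\theta_0)\mathbf{u}_k$ and subtracting term by term yields
\[
\theta_{\mathrm{GD}}-\theta_0=\sum_{k=1}^d\bigl[(1-\beta\lambda_k)^T-1\bigr](\mathbf{u}_k^\top\theta_0)\,\mathbf{u}_k .
\]
For every $k$ with $\lambda_k=0$ the bracket equals $1^T-1=0$, since these are the frozen flat directions from the GD stability remark. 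The sum therefore reduces to the indices with $\lambda_k>0$, which is the displayed formula. Every surviving term is a multiple of an eigenvector in $\operatorname{col}(Q)$, so the displacement lies entirely in that $r$-dimensional subspace.

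For the norm, I would use orthonormality to obtain
\[
\|\theta_{\mathrm{GD}}-\theta_0\|^2=\sum_{k:\lambda_k>0}\bigl[(1-\beta\lambda_k)^T-1\bigr]^2(\mathbf{u}_k^\top\theta_0)^2 .
\]
Under the stability condition of Proposition~\ref{prop:gd_quad_dynamics}, $0<\beta\lambda_k<2$, we have $|1-\beta\lambda_k|<1$. Each bracket therefore has absolute value at most $2$, and each summand is at most $4(\mathbf{u}_k^\top\theta_0)^2$. The whole sum is then bounded by $4\,r\,\max_{k:\lambda_k>0}(\mathbf{u}_k^\top\theta_0)^2$, which is $\mathcal{O}(r)$ whenever the per-direction coordinates of $\theta_0$ are $\mathcal{O}(1)$. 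The bound is also at most $4\|P_Q\theta_0\|^2$, where $P_Q$ is the orthogonal projection onto $\operatorname{col}(Q)$.

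The main obstacle is that $\mathcal{O}(r)$ has to be read with an implicit scaling assumption. Without one, the bound is only in terms of $\|P_Q\theta_0\|^2$. I would state the assumption explicitly: the projections $\mathbf{u}_k^\top\theta_0$ are $\mathcal{O}(1)$ uniformly in $k$, for instance when $\theta_0$ is a generic or isotropic draw, so that $\mathbb{E}(\mathbf{u}_k^\top\theta_0)^2$ is constant. Under that assumption $\|P_Q\theta_0\|^2=\mathcal{O}(r)$, and the stated bound follows. The point of the bound is the contrast with Proposition~\ref{prop:flat_landscape_scaling}: the GD bound does not grow with $d$ or $T$, whereas the ES displacement scales as $\alpha^2Td/N$.
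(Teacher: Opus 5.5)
Your proposal is correct and follows the paper's own argument: you expand the GD iterate in the eigenbasis of $Q$, observe that the bracket $(1-\beta\lambda_k)^T-1$ vanishes on flat directions, and sum the squared active coordinates using orthonormality. The one difference is that you bound each bracket by $2$ under the stability condition and state the $\mathcal{O}(1)$-coordinate assumption explicitly, whereas the paper passes to the converged limit, where each term saturates to $(\mathbf{u}_k^\top\theta_0)^2$, and leaves that scaling assumption implicit.
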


\subsection{ES displacement decomposes into signal and diffusion}
\begin{proposition}[ES displacement decomposes into signal and diffusion] \label{prop:es_displacement}
The ES displacement from initialization decomposes as
\[
    \theta_{\mathrm{ES}} - \theta_0
    = \underbrace{(H_{\mathrm{ES}}^T - I)\theta_0}_{\text{signal (on-manifold)}}
    + \underbrace{\sum_{t=1}^{T}H_{\mathrm{ES}}^{t-1}\,\eta_{T-t}}_{\text{noise (all directions)}}.
\]
The signal term lies within the column space of $Q$, identical in structure to the
GD displacement with $\beta \leftrightarrow \alpha\sigma/\sigma_R$.
The noise term has support in all $d$ dimensions. For $r \ll d$ and large $T$,
\[
    \mathbb{E}\|\theta_{\mathrm{ES}} - \theta_0\|^2
    = \underbrace{\mathcal{O}(r)}_{\text{on-manifold}}
    + \underbrace{\frac{\alpha^2 T(d-r)}{N}}_{\text{off-manifold diffusion}}
    \approx \frac{\alpha^2 T d}{N}.
\]
\end{proposition}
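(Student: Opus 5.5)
The plan is to work with the linear surrogate of Proposition~\ref{prop:quad_landscape_dynamics}, $\theta_{t+1}=H_{\mathrm{ES}}\theta_t+\eta_t$ with $H_{\mathrm{ES}}=I-\frac{\alpha\sigma}{\sigma_R}Q$ and i.i.d.\ $\eta_t\sim\mathcal{N}(0,\frac{\alpha^2}{N}I_d)$. Unrolling the recursion by induction on $T$ gives
\[
\theta_T=H_{\mathrm{ES}}^T\theta_0+\sum_{t=1}^{T}H_{\mathrm{ES}}^{t-1}\eta_{T-t}.
\]
Subtracting $\theta_0$ from both sides yields the stated decomposition. For the signal term, expand in the eigenbasis $\{\mathbf{u}_k\}$ of $Q$. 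Along null directions $\gamma_k=1$, so $(\gamma_k^T-1)=0$ and the signal vanishes there. Hence $(H_{\mathrm{ES}}^T-I)\theta_0=\sum_{k:\lambda_k\neq 0}(\gamma_k^T-1)(\mathbf{u}_k^\top\theta_0)\mathbf{u}_k$ lies in $\mathrm{col}(Q)$. Substituting $\beta=\alpha\sigma/\sigma_R$ shows this is term-for-term the expression in Proposition~\ref{prop:gd_displacement}.

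For the second moment, I use that the noise has zero mean and is independent of $\theta_0$, which is deterministic. The cross term therefore vanishes in expectation:
\[
\mathbb{E}\|\theta_{\mathrm{ES}}-\theta_0\|^2=\|(H_{\mathrm{ES}}^T-I)\theta_0\|^2+\mathbb{E}\Bigl\|\sum_t H_{\mathrm{ES}}^{t-1}\eta_{T-t}\Bigr\|^2.
\]
The first piece is bounded exactly as in Proposition~\ref{prop:gd_displacement}. Under the stability condition $0<\lambda_k<2\sigma_R/(\alpha\sigma)$ of Proposition~\ref{prop:quad_landscape_dynamics}, one has $|\gamma_k^T-1|\le 2$. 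It follows that the first piece is at most $4\sum_{k\le r}(\mathbf{u}_k^\top\theta_0)^2$, which is $\mathcal{O}(r)$ under the standing convention that per-coordinate projections of $\theta_0$ are $\mathcal{O}(1)$.

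For the noise piece, independence of the $\eta_t$ across time and isotropy give
\[
\mathbb{E}\Bigl\|\sum_t H_{\mathrm{ES}}^{t-1}\eta_{T-t}\Bigr\|^2=\frac{\alpha^2}{N}\sum_{t=0}^{T-1}\mathrm{Tr}\bigl(H_{\mathrm{ES}}^{2t}\bigr)=\frac{\alpha^2}{N}\sum_k\sum_{t=0}^{T-1}\gamma_k^{2t}.
\]
This is just the sum over $k$ of the variances already computed in Proposition~\ref{prop:quad_landscape_dynamics}. I then split it into the $d-r$ flat directions and the $r$ curved ones. Each flat direction has $\gamma_k=1$ and contributes exactly $\alpha^2T/N$, giving the off-manifold term $\alpha^2T(d-r)/N$. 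Each curved, stable direction contributes at most its asymptotic variance $\frac{\alpha^2}{N(1-\gamma_k^2)}$, which is a $T$-independent constant. Their total is therefore $\mathcal{O}(r)$ and can be absorbed into the on-manifold term.

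The main obstacle is making ``$\mathcal{O}(r)$'' honest. The curved-direction noise constants blow up as $\lambda_k\to0^+$ or as $\lambda_k$ approaches the stability boundary. I would therefore state explicitly that the curved eigenvalues lie in a compact subinterval $[\lambda_{\min},\lambda_{\max}]\subset(0,2\sigma_R/(\alpha\sigma))$, uniformly in $d$; then $1-\gamma_k^2$ is bounded below and the curved contribution is uniformly $\mathcal{O}(r)$. With that in place, the final approximation $\approx\alpha^2Td/N$ follows: for $r\ll d$ and large $T$, the $\mathcal{O}(r)$ terms and the correction $\alpha^2Tr/N$ are lower order relative to $\alpha^2Td/N$. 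This is consistent with Proposition~\ref{prop:flat_landscape_scaling} as the $r=0$ case.
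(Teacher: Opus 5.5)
Your proposal is correct and follows the paper's argument essentially step for step. You unroll the OU surrogate, observe that the signal $(H_{\mathrm{ES}}^T-I)\theta_0$ vanishes on the null eigendirections of $Q$ (so it matches the GD displacement under $\beta \leftrightarrow \alpha\sigma/\sigma_R$), and split the second moment into the squared signal plus the noise trace, with $d-r$ flat directions each contributing $\alpha^2T/N$ and $r$ curved directions each contributing a bounded amount. Your explicit assumptions go slightly beyond the paper, which asserts the $\mathcal{O}(r)$ term informally: curved eigenvalues lie in a compact subinterval of the stability range uniformly in $d$, and projections of $\theta_0$ are $\mathcal{O}(1)$.
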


\subsection{ES--GD difference is off-manifold}
\begin{proposition}[ES--GD difference is off-manifold] \label{prop:es_gd_difference}
When both methods have converged on the active directions
($(\gamma_k^{\mathrm{ES}})^T \approx (\gamma_k^{\mathrm{GD}})^T \approx 0$),
the expected squared difference between the two solutions is dominated by
off-manifold diffusion:
\[
    \mathbb{E}\|\theta_{\mathrm{ES}} - \theta_{\mathrm{GD}}\|^2
    \approx \frac{\alpha^2 T(d-r)}{N}
    = \mathcal{O}(d).
\]
\end{proposition}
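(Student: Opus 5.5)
The plan is to write both endpoints explicitly from the earlier propositions and subtract them. Proposition~\ref{prop:es_displacement} gives
\[
\theta_{\mathrm{ES}} = H_{\mathrm{ES}}^T\theta_0 + \sum_{t=1}^{T}H_{\mathrm{ES}}^{t-1}\eta_{T-t},
\]
and Proposition~\ref{prop:gd_quad_dynamics} gives $\theta_{\mathrm{GD}} = H_{\mathrm{GD}}^T\theta_0$. Both $H_{\mathrm{ES}}$ and $H_{\mathrm{GD}}$ are functions of $Q$, so they share the eigenbasis $\{\mathbf{u}_k\}$. Hence
\[
\theta_{\mathrm{ES}}-\theta_{\mathrm{GD}} = \sum_k\bigl[(\gamma_k^{\mathrm{ES}})^T-(\gamma_k^{\mathrm{GD}})^T\bigr](\mathbf{u}_k^\top\theta_0)\mathbf{u}_k + \sum_{t=1}^T H_{\mathrm{ES}}^{t-1}\eta_{T-t}.
\]
The deterministic term vanishes on the null space of $Q$, since there $\gamma_k^{\mathrm{ES}}=\gamma_k^{\mathrm{GD}}=1$. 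It is approximately zero on the $r$ active directions by the convergence hypothesis. The noise term has mean zero, so the cross term vanishes in expectation. Therefore $\mathbb{E}\|\theta_{\mathrm{ES}}-\theta_{\mathrm{GD}}\|^2$ equals the squared deterministic remainder, which is approximately zero, plus the trace of the noise covariance.

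For the noise trace, project onto each eigendirection and reuse the variance formula of Proposition~\ref{prop:quad_landscape_dynamics}.

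\textbf{Flat directions.} Each of the $d-r$ flat directions has $\gamma_k=1$ and contributes exactly $\alpha^2T/N$. Summed, this gives $\alpha^2T(d-r)/N$.

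\textbf{Active directions.} Each active direction with $|\gamma_k|<1$ contributes at most
\[
\frac{\alpha^2}{N}\,\frac{1}{1-\gamma_k^2},
\]
which is bounded independently of $T$ and $d$. The $r$ active directions together therefore contribute $\mathcal{O}(r)$ with no $T$ growth.

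Combining,
\[
\mathbb{E}\|\theta_{\mathrm{ES}}-\theta_{\mathrm{GD}}\|^2 = \frac{\alpha^2T(d-r)}{N} + \mathcal{O}(r) + o(1),
\]
which is $\approx \alpha^2T(d-r)/N = \mathcal{O}(d)$ when $r\ll d$ and $T$ is large.

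The main obstacle is making ``$\approx$'' honest. The $\mathcal{O}(r)$ stationary variance on active directions must be shown negligible relative to $\alpha^2T(d-r)/N$. This needs $\frac{r}{T(d-r)}\max_k\frac{1}{1-\gamma_k^2}\to 0$, i.e.\ the active curvatures must stay bounded away from $0$ and from the stability edge $2\sigma_R/(\alpha\sigma)$. I would state this as an explicit assumption: active eigenvalues lie in a compact subinterval of $(0,2\sigma_R/(\alpha\sigma))$, with the same for GD. I would also check that the residual $\sum_{k\le r}[(\gamma_k^{\mathrm{ES}})^T-(\gamma_k^{\mathrm{GD}})^T]^2(\mathbf{u}_k^\top\theta_0)^2$ is $o(1)$ under the hypothesis $(\gamma_k)^T\approx0$.
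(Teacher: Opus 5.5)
Your proposal is correct and follows the paper's argument essentially step for step. You split $\theta_{\mathrm{ES}}-\theta_{\mathrm{GD}}$ into a signal-mismatch term $(H_{\mathrm{ES}}^T-H_{\mathrm{GD}}^T)\theta_0$, which lies in the column space of $Q$ and is removed by the convergence hypothesis, plus the zero-mean ES noise sum. Its trace, evaluated direction by direction, gives $\alpha^2T(d-r)/N$ from the flat directions and an $\mathcal{O}(r)$ stationary contribution from the active ones. Your explicit assumption that the active curvatures stay in a compact subinterval of the stability region is a useful sharpening; the paper leaves this implicit behind its informal ``$\approx$''.
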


\begin{remark}[Geometric summary] \label{rmk:geometric_summary}
The three pairwise squared distances satisfy the hierarchy (for $r \ll d$, large $T$):
\begin{align*}
    \|\theta_{\mathrm{GD}} - \theta_0\|^2 &= \mathcal{O}(r), \\
    \mathbb{E}\|\theta_{\mathrm{ES}} - \theta_0\|^2 &= \mathcal{O}(r) + \mathcal{O}(d), \\
    \mathbb{E}\|\theta_{\mathrm{ES}} - \theta_{\mathrm{GD}}\|^2 &= \mathcal{O}(d).
\end{align*}
The GD displacement is low-dimensional and task-aligned.
The ES displacement superposes a comparable on-manifold component with a much larger
isotropic off-manifold diffusion.
The ES--GD difference is dominated by the off-manifold term alone, since both methods
converge to the same optimum within the active subspace.
This accounts for the empirical observation that ES achieves similar task performance
to GD (shared on-manifold signal) while being far from it in parameter space
(off-manifold diffusion).
\end{remark}

\begin{remark}[Cosine similarity between displacements] \label{rmk:cosine_similarity}
Since $\theta_{\mathrm{GD}} - \theta_0$ lies within the column space of $Q$
and the ES noise is zero-mean, the expected inner product picks up only the
on-manifold signal of ES. For large $T$ with both converged on active directions,
the expected cosine similarity satisfies
\[
    \mathbb{E}\left[\cos\angle(\theta_{\mathrm{ES}} - \theta_0,\;
    \theta_{\mathrm{GD}} - \theta_0)\right]
    \sim \mathcal{O}\!\left(\sqrt{r/d}\right),
\]
which is small for $r \ll d$: the two displacement vectors are nearly orthogonal
despite leading to similar task performance.
\end{remark}

\clearpage
\section{Detailed Validation of Theoretical Predictions} 
\label{app:detailed-results}
\input{appendix_weight_drift_RW_stats}

\clearpage
\subsection{Testing the Cosine-Similarity Prediction}
\label{sec:cosine-verification}

Remark \ref{rmk:cosine_similarity} predicts that the expected cosine similarity between the ES and GRPO update directions scales as $\mathbb{E}[\cos\angle(\theta_{\text{ES}}-\theta_0,\, \theta_{\text{GRPO}}-\theta_0)] \sim \mathcal{O}(\sqrt{r/d})$, where $r$ is the task-relevant (GD-aligned) rank and $d$ is the full parameter dimension. This prediction was previously verified only indirectly, through its consequences for update norms and mode connectivity. Here we report the direct empirical measurement.

For each sequential checkpoint, we compute the cosine similarity between the full parameter-space update vectors $\Delta\theta_{\text{ES}} = \theta_{\text{ES}} - \theta_{\text{base}}$ and $\Delta\theta_{\text{GRPO}} = \theta_{\text{GRPO}} - \theta_{\text{base}}$, using float64 key-wise accumulation over the full parameter vector ($d \approx 3\text{--}4 \times 10^9$ for Llama-3.2-3B and Qwen3-4B respectively). Table~\ref{tab:cosine-verification} reports the results for both model families.

\begin{table}[H]
\centering
\caption{Empirical cosine similarity between ES and GRPO update directions at each sequential checkpoint.}
\label{tab:cosine-verification}

\begin{tabular}{lcc}
\toprule
\textbf{Stage} & \textbf{Qwen3-4B $\cos$} & \textbf{Llama-3.2-3B $\cos$} \\
\midrule
T0 Countdown & $1.0 \times 10^{-4}$ & $0.8 \times 10^{-4}$ \\
T1 Math      & $1.6 \times 10^{-4}$ & $1.0 \times 10^{-4}$ \\
T2 Chemistry & $2.2 \times 10^{-4}$ & $1.0 \times 10^{-4}$ \\
T3 BoolQ     & $2.7 \times 10^{-4}$ & $1.3 \times 10^{-4}$ \\
\bottomrule
\end{tabular}
\end{table}

All measured cosines are $\sim 10^{-4}$, corresponding to almost orthogonal updates. 
We noticed that, when plugging a curvature rank estimate $r\approx 100$ with parameter dimension $d \approx 3\text{--}4 \times 10^9$ into Remark \ref{rmk:cosine_similarity}, 
the predicted scale of the cosine $\sqrt{r/d}\approx 1.6 \times 10^{-4}$ will match the scale of the observed cosine values. 
A more principled way to estimate the rank $r$ of the curvature of the landscape is left for future work. 

  \clearpage
  \section{Detailed derivations of theoretical results}
  
  \subsection{Z-score ES Update Setup}
  We detail the ES update rule as implemented in LLM fine-tuning. The update at step $t$ is
  \begin{equation}
    \theta_{t+1} = \theta_{t} + \alpha \cdot \frac{1}{N} \sum_{i=1}^{N} Z_{i} \epsilon_{i}
  \end{equation}
  where each $\epsilon_{i} \sim \mathcal{N}(0, I_{d})$ is an independent Gaussian perturbation, and $Z_{i} = \frac{R_{i} - \mu_{R}}{\sigma_{R}}$ is the z-scored return for the $i$-th sample. Here, $R_{i}$ is the reward, with $\mu_{R}$ and $\sigma_{R}$ denoting the mean and standard deviation of the $R_{i}$ across the population.
  With slight abuse of notation we let $R(\theta_{t} + \sigma \epsilon_{i})$ be the true or expected reward at the perturbed location, and $R_i$ the observed reward with certain observation noise.
  The step size is set to $\alpha = \sigma/2$, and a population size of $N = 30$ is typical. Defining the update increment as
  \begin{equation}
    \Delta\theta = \alpha \cdot \frac{1}{N} \sum_{i=1}^{N} Z_{i} \epsilon_{i}
  \end{equation}
  the algorithm iterates as $\theta_{t+1} = \theta_{t} + \Delta\theta$.
  
  \subsection{Flat Landscape Analysis}\label{appsec:flat_RW_scaling}

  \begin{proof}[Proof of Proposition~\ref{prop:flat_landscape_scaling}]
  The key assumption is that $Z_{i}$ is independent of $\epsilon_{i}$,
  i.e., $Z_{i}\perp\epsilon_{i}$. Then, conditioned on a set of $Z_{i}$,
  \begin{equation}
    \mathrm{Var}\left[\Delta\theta \,\middle|\, Z_{i}\right] = \left( \frac{\alpha^{2}}{N^{2}}\sum_{i=1}^N Z_i^2 \right) I_d
  \end{equation}
  and the distribution is isotropic Gaussian:
  \begin{equation}
  \Delta\theta|Z_{i} \sim \mathcal{N}\left(0, \left(\frac{\alpha^{2}}{N^{2}}\sum_{i=1}^{N}Z_{i}^{2}\right)I_{d}\right)
  \end{equation}

  By definition,\footnote{Note that this constraint depends on the implementation of the z-score
  $Z_{i} = \frac{R_{i}-\mu_{R}}{\sigma_{R}}$. In our code, we compute
  the standard deviation as $\sigma_{R}^{2} = \frac{1}{N}\sum_{i=1}^{N}(R_{i}-\mu_{R})^{2}$.
  In the unbiased variance estimator (denominator $N-1$), the identity reads $\sum_{i=1}^{N}Z_{i}^{2} = N-1$.}
  \begin{align}
  \sum_{i=1}^{N}Z_{i}^{2}
  &= \sum_{i=1}^{N} \left(\frac{R_{i}-\mu_{R}}{\sigma_{R}}\right)^{2} \notag\\
  &= \frac{1}{\sigma_{R}^{2}}\sum_{i=1}^{N}(R_{i} - \mu_{R})^{2} \notag\\
  &= N\quad\text{(or $N-1$ with unbiased estimator)}
  \end{align}

  Therefore, in general the scaling is:
  \begin{align}
    \mathrm{Var}[\Delta\theta|Z_{i}] &= \frac{\alpha^2}{N^2} N\, I_d \notag \\
    &= \frac{\alpha^2}{N} I_d
  \end{align}
  Since this is independent of $Z_i$, it follows after marginalizing:
  \begin{equation}
    \Delta\theta \sim \mathcal{N} \left( 0, \frac{\alpha^2}{N} I_d \right)
  \end{equation}
  \begin{equation}
    \mathrm{Var}[\Delta\theta] = \frac{\alpha^2}{N} I_d
  \end{equation}

  On a flat loss, $\Delta\theta$ is independent between steps, so after $T$ steps,
  the combined weight change is
  \begin{equation}
    \bar{\Delta\theta}_T = \theta_T - \theta_0 = \sum_{t=1}^T \Delta\theta_t
  \end{equation}
  and the total variance is
  \begin{equation}
    \mathrm{Var}[\bar{\Delta\theta}_T] = \frac{\alpha^2 T}{N} I_d
  \end{equation}
  The expected squared norm of total deviation is
  \begin{equation}
    \mathbb{E}\|\bar{\Delta\theta}_T\|^2 = \frac{\alpha^2 T d}{N}
  \end{equation}
  Finally, substituting $\alpha = \frac{\sigma}{2}$ gives
  \begin{equation}
    \frac{\alpha^2}{N} = \frac{\sigma^2}{4N}
  \end{equation}
  and
  \begin{equation}
    \mathbb{E}\|\bar{\Delta\theta}_T\|^2 = \frac{\sigma^2 T d}{4N}
  \end{equation}
  \end{proof}

  \subsection{Linear Landscape Analysis}\label{appsec:linear_landscape_step}
  \begin{proof}[Proof of Proposition~\ref{prop:linear_landscape_step}]
  Consider a linear landscape with i.i.d.\ observational noise $\xi\sim\mathcal{N}(0,1)$,
  assumed independent of $\epsilon$, i.e., $\xi\perp\epsilon$.
  The linear reward is
  \begin{equation}
    R(\theta) = -\mathbf{v}^\top\theta
  \end{equation}
  and the observed reward for each sample is
  \begin{align}
    R_i &= R(\theta_0+\sigma\epsilon_i) + \sigma_\xi \xi_i \notag \\
        &= -\mathbf{v}^\top(\theta_0+\sigma \epsilon_i) + \sigma_\xi \xi_i \notag \\
        &\equiv -\sigma\,\mathbf{v}^\top\epsilon_i + \sigma_\xi \xi_i
  \end{align}
  
  Decompose the exploration as
  \begin{align}
    \epsilon_i = \epsilon_i^\perp + \epsilon_i^\parallel
  \end{align}
  where
  \begin{align}
    P^\perp &:= I - \frac{\mathbf{v}\mathbf{v}^\top}{\|\mathbf{v}\|^2} \\
    P^\parallel &:= \frac{\mathbf{v}\mathbf{v}^\top}{\|\mathbf{v}\|^2}
  \end{align}
  so that
  $\epsilon_i^\perp := P^\perp \epsilon_i$, $\epsilon_i^\parallel := P^\parallel \epsilon_i$.
  Then $\epsilon_i^\perp \sim \mathcal{N}(0, P^\perp)$, $\epsilon_i^\parallel \sim \mathcal{N}(0, P^\parallel)$,
  and $\epsilon_i^\perp\perp\epsilon_i^\parallel$.

  The reward is
  \begin{align}
    R_i &= -\sigma \mathbf{v}^\top \epsilon_i^\parallel + \sigma_\xi \xi_i
  \end{align}
  and thus $R_i \perp \epsilon_i^\perp$.

  The update is:
  \begin{equation}
    \Delta\theta = \alpha \cdot \frac{1}{N} \sum_{i=1}^N Z_i \epsilon_i
  \end{equation}
  and can be split as
  \begin{equation}
    \Delta\theta = P^\perp \Delta\theta + P^\parallel \Delta\theta
  \end{equation}

  \paragraph{Off-manifold component}
  \begin{align}
    P^\perp \Delta\theta &= \alpha \cdot \frac{1}{N} \sum_{i=1}^N Z_i P^\perp \epsilon_i \notag \\
    &= \alpha \cdot \frac{1}{N} \sum_{i=1}^N Z_i \epsilon_i^\perp
  \end{align}
  
  Conditioned on the observed normalized reward $Z_{i}$, $P^{\perp}\Delta\theta$
  is a weighted sum of Gaussian random vectors, thus Gaussian:
  \begin{equation}
    P^\perp\Delta\theta \mid \{Z_i\}_{i=1}^N \sim \mathcal{N}\left(
      0, \frac{\alpha^2}{N^2} \sum_{i=1}^N Z_i^2 P^\perp
    \right)
  \end{equation}
  
  Since $\{Z_{i}\}_{i=1}^{N}$ are z-scored $\{R_{i}\}_{i=1}^{N}$, the sum satisfies the following identity.%
  \begin{equation}
    P^\perp\Delta\theta \mid \{Z_i\}_{i=1}^N \sim \mathcal{N}\left(
      0, \frac{\alpha^2}{N} P^\perp
    \right)
  \end{equation}
  
  Because of the following equations:
  \begin{align}
  \sigma_{R}^{2} &= \frac{1}{N}\sum_{i=1}^{N}(R_{i}-\mu_{R})^{2}\\
  Z_{i} &= \frac{R_{i}-\mu_{R}}{\sigma_{R}}
  \end{align}
  
  Thus, the conditional distribution is
  \begin{equation}
  P^{\perp}\Delta\theta\,\big|\;\{Z_{i}\}_{i=1}^{N} \sim \mathcal{N}\left(0, \frac{\alpha^{2}}{N^{2}} N P^{\perp}\right)
  \end{equation}
  
  Since the conditional distribution $P^{\perp}\Delta\theta\,\big|\;\{Z_{i}\}_{i=1}^{N}$
  is independent of the observed reward, marginalizing over $Z_{i}$ we have
  \begin{equation}
  P^{\perp}\Delta\theta \sim \mathcal{N}\left(0,\frac{\alpha^{2}}{N}P^{\perp}\right)
  \end{equation}
  
  The norm of the off-manifold component is
  \begin{align}
    \mathbb{E}\|P^\perp\Delta\theta\|^2 &= \mathrm{Tr}\left[ \frac{\alpha^2}{N} P^\perp \right] \notag \\
    &= \frac{\alpha^2}{N} (d-1)
  \end{align}

  \paragraph{On-manifold component}
  
  \begin{align}
  P^{\parallel}\Delta\theta &= \alpha\cdot\frac{1}{N}\sum_{i=1}^{N}Z_{i}P^{\parallel}\epsilon_{i} \notag\\
  &= \alpha\cdot\frac{1}{N}\sum_{i=1}^{N}Z_{i}\epsilon_{i}^{\parallel}
  \end{align}
  
  The on-manifold variable can be expressed as $\epsilon_{i}^{\parallel} = \frac{\mathbf{v}}{\|\mathbf{v}\|} n_{i}$,
  with $n_{i} \sim \mathcal{N}(0,1)$, thus the reward can be written as 
  \begin{align}
  R_{i} &= -\sigma\mathbf{v}^{\top} \epsilon_{i}^{\parallel} + \sigma_{\xi}\xi_{i} \notag\\
  &= -\sigma \frac{\mathbf{v}^{\top}\mathbf{v}}{\|\mathbf{v}\|} n_{i} + \sigma_{\xi}\xi_{i} \notag\\
  &= -\sigma\|\mathbf{v}\| n_{i} + \sigma_{\xi}\xi_{i}
  \end{align}
  
  Recall the definition of observed reward and z-score, 
  \begin{equation}
  Z_{i} = \frac{R_{i} - \mu_{R}}{\sigma_{R}}
  \end{equation}
  
  To simplify derivation, we use the population expectation and standard
  deviation here: 
  \begin{align}
  \mu_{R} &\approx \mathbb{E}[R_{i}] = 0\\
  \sigma_{R}^{2} &\approx \mathrm{Var}[R_{i}] = \sigma^{2}\|\mathbf{v}\|^{2} + \sigma_{\xi}^{2}
  \end{align}
  
  Under these simplifications,
  \begin{align}
  P^{\parallel}\Delta\theta &= \frac{\alpha}{N} \sum_{i=1}^{N} Z_{i} \epsilon_{i}^{\parallel} \notag\\
  &= \frac{\alpha}{N} \sum_{i=1}^{N} \frac{-\sigma\|\mathbf{v}\| n_{i} + \sigma_{\xi}\xi_{i} - \mu_{R}}{\sigma_{R}} n_{i} \frac{\mathbf{v}}{\|\mathbf{v}\|} \notag\\
  &= \frac{\alpha}{N\sigma_{R}}\sum_{i=1}^{N} (-\sigma\|\mathbf{v}\| n_{i} + \sigma_{\xi}\xi_{i}) n_{i} \frac{\mathbf{v}}{\|\mathbf{v}\|}
  \end{align}
  
  The expectation reads
  \begin{align}
  \mathbb{E}[P^{\parallel}\Delta\theta] =&~ \frac{\alpha}{N\sigma_{R}}\sum_{i=1}^{N} \mathbb{E}\left[ (-\sigma\|\mathbf{v}\| n_{i} + \sigma_{\xi}\xi_{i}) n_{i} \right] \frac{\mathbf{v}}{\|\mathbf{v}\|} \notag\\
  =&~\frac{\alpha}{N\sigma_{R}}\mathbb{E}[ -\sigma\|\mathbf{v}\| \sum_{i=1}^{N} n_{i}^{2} + \sigma_{\xi} \sum_{i=1}^{N}\xi_{i} n_{i} ]\frac{\mathbf{v}}{\|\mathbf{v}\|} \notag\\
  =&~\frac{\alpha}{N\sigma_{R}}(-\sigma\|\mathbf{v}\|N)\frac{\mathbf{v}}{\|\mathbf{v}\|} \notag\\
  =&~\frac{-\sigma\alpha}{\sigma_{R}}\mathbf{v} \notag\\
  =&~\frac{-\sigma\alpha\mathbf{v}}{\sqrt{\sigma^{2}\|\mathbf{v}\|^{2} + \sigma_{\xi}^{2}}}
  \end{align}
  where we used independence, $\mathbb{E}[\xi_{i} n_{i}]=0$, and $\mathbb{E}[\sum_{i=1}^{N} n_{i}^{2}] = N$. 
  
  Next, consider the variance. Consider the individual term, $X_{i} = -\sigma\|\mathbf{v}\| n_{i}^{2} + \sigma_{\xi}\xi_{i} n_{i}$,
  and its variance:
  \begin{align}
  \mathrm{Var}[X_{i}] =&~\mathbb{E}[(-\sigma\|\mathbf{v}\| n_{i}^{2} + \sigma_{\xi}\xi_{i} n_{i})^{2}]
  - \mathbb{E}[-\sigma\|\mathbf{v}\| n_{i}^{2} + \sigma_{\xi} \xi_{i} n_{i}]^{2} \notag\\
  =&~\mathbb{E}[ \sigma^{2}\|\mathbf{v}\|^{2} \underbrace{n_{i}^{4}}_{3}
  - 2\sigma\|\mathbf{v}\|\sigma_{\xi} \underbrace{\xi_{i}}_{0} \underbrace{n_{i}^{3}}_{0}
  + \sigma_{\xi}^{2} \underbrace{\xi_{i}^{2}}_{1} \underbrace{n_{i}^{2}}_{1}
  ] - (\sigma\|\mathbf{v}\|)^{2} \notag\\
  =&~3\sigma^{2}\|\mathbf{v}\|^{2} + \sigma_{\xi}^{2} - (\sigma\|\mathbf{v}\|)^{2} \notag\\
  =&~2\sigma^{2}\|\mathbf{v}\|^{2} + \sigma_{\xi}^{2}
  \end{align}
  
  Therefore,
  \begin{align}
  \mathrm{Var}[P^{\parallel}\Delta\theta] &= \frac{\alpha^{2}}{N^{2}\sigma_{R}^{2}}N\,\mathrm{Var}[X_{i}] \frac{\mathbf{v}\mathbf{v}^{\top}}{\|\mathbf{v}\|^{2}}
  \notag\\
  &= \frac{ \alpha^{2} (2\sigma^{2}\|\mathbf{v}\|^{2} + \sigma_{\xi}^{2}) }{ N \sigma_{R}^{2} } P^{\parallel} \notag\\
  &= \frac{ \alpha^{2} (2\sigma^{2}\|\mathbf{v}\|^{2} + \sigma_{\xi}^{2}) }{ N (\sigma^{2}\|\mathbf{v}\|^{2} + \sigma_{\xi}^{2}) } P^{\parallel}
  \end{align}
  
  \begin{align}
  \mathbb{E}[\|P^{\parallel}\Delta\theta\|^{2}] =&~ \mathrm{Tr}\left[\mathrm{Var}[P^{\parallel}\Delta\theta]\right] + \mathbb{E}[P^{\parallel}\Delta\theta]^{\top} \mathbb{E}[P^{\parallel}\Delta\theta] \notag\\
  =&~\frac{ \alpha^{2} (2\sigma^{2}\|\mathbf{v}\|^{2} + \sigma_{\xi}^{2}) }{ N ( \sigma^{2}\|\mathbf{v}\|^{2} + \sigma_{\xi}^{2} ) } + \frac{ (\sigma\alpha)^{2} \|\mathbf{v}\|^{2} }{ \sigma^{2}\|\mathbf{v}\|^{2} + \sigma_{\xi}^{2} } \notag\\
  =&~\frac{\alpha^{2}}{N} + \left(1 + \frac{1}{N}\right) \frac{ \sigma^{2}\|\mathbf{v}\|^{2} \alpha^{2} }{ \sigma^{2}\|\mathbf{v}\|^{2} + \sigma_{\xi}^{2} }
  \end{align}

  \paragraph{Overall distribution of the update}
Since the off-manifold component $P^{\perp}\Delta\theta$ is independent of the rewards
and hence of $P^{\parallel}\Delta\theta$, we can combine the on- and off-manifold results
to obtain the full single-step statistics. Writing $\sigma_{R}^{2} := \sigma^{2}\|\mathbf{v}\|^{2}+\sigma_{\xi}^{2}$,
\begin{align}
  \mathbb{E}[\Delta\theta] &= \frac{-\sigma\alpha\,\mathbf{v}}{\sigma_{R}}, \\
  \mathrm{Cov}[\Delta\theta]
  &= \frac{\alpha^{2}}{N}\,I_{d}
     + \frac{\alpha^{2}\sigma^{2}}{N\sigma_{R}^{2}}\,\mathbf{v}\mathbf{v}^\top.
\end{align}
The covariance has the same isotropic baseline $\frac{\alpha^2}{N}I_d$ as the flat landscape,
plus a rank-1 boost along the gradient direction. 
The total expected squared norm is
\begin{align}
\mathbb{E}\|\Delta\theta\|^{2}
&= \frac{\sigma^{2}\alpha^{2}\|\mathbf{v}\|^{2}}{\sigma_{R}^{2}}
 + \frac{\alpha^{2}d}{N}
 + \frac{\alpha^{2}\sigma^{2}\|\mathbf{v}\|^{2}}{N\,\sigma_{R}^{2}}.
\end{align}

\end{proof}
  
  \paragraph{Manifold Alignment of Update}
  
  Consider the ratio of the on- and off-manifold movement:
  \begin{equation}
  \mathbb{E}\left[ \|P^{\parallel}\Delta\theta\|^{2} \right] = \frac{\alpha^{2}}{N} + \left( 1 + \frac{1}{N} \right) \frac{ \sigma^{2}\|\mathbf{v}\|^{2}\alpha^{2} }{ \sigma^{2}\|\mathbf{v}\|^{2} + \sigma_{\xi}^{2} }
  \end{equation}
  \begin{equation}
  \mathbb{E}\|P^{\perp}\Delta\theta\|^{2} = \frac{\alpha^{2}(d-1)}{N}
  \end{equation}
  
  The fraction of on-manifold squared deviation in the total deviation is:
  \begin{align}
  \frac{ \mathbb{E}\left[\|P^{\parallel}\Delta\theta\|^2\right] }{ \mathbb{E}\left[\|\Delta\theta\|^2\right] }
  &= \frac{ \mathbb{E}\left[\|P^{\parallel}\Delta\theta\|^{2}\right] }{ \mathbb{E}\left[\|P^{\parallel}\Delta\theta\|^{2}\right] + \mathbb{E}\|P^{\perp}\Delta\theta\|^{2} } \notag\\
  &= \frac{ \frac{\alpha^{2}}{N} + \left(1 + \frac{1}{N}\right) \frac{ \sigma^{2} \|\mathbf{v}\|^{2} \alpha^{2} }{ \sigma^{2} \|\mathbf{v}\|^{2} + \sigma_{\xi}^{2} } }
    { \frac{ \alpha^{2}(d-1)}{N} + \frac{\alpha^{2}}{N} + \left(1 + \frac{1}{N}\right) \frac{ \sigma^{2}\|\mathbf{v}\|^{2} \alpha^{2} }{ \sigma^{2}\|\mathbf{v}\|^{2} + \sigma_{\xi}^{2} } } \notag\\
  &= \frac{ \frac{1}{N} + \frac{N+1}{N} \frac{ \sigma^{2} \|\mathbf{v}\|^{2} }{ \sigma^{2} \|\mathbf{v}\|^{2} + \sigma_{\xi}^{2} } }
    { \frac{d}{N} + \frac{N+1}{N} \frac{ \sigma^{2} \|\mathbf{v}\|^{2} }{ \sigma^{2}\|\mathbf{v}\|^{2} + \sigma_{\xi}^{2} } } \notag\\
  &= \frac{ 1 + (N+1) \frac{ \sigma^{2}\|\mathbf{v}\|^{2} }{ \sigma^{2}\|\mathbf{v}\|^{2} + \sigma_{\xi}^{2} } }
           { d + (N+1)\frac{ \sigma^{2}\|\mathbf{v}\|^{2} }{ \sigma^{2}\|\mathbf{v}\|^{2} + \sigma_{\xi}^{2} } }
  \end{align}
  
  where $s:=\frac{\sigma^{2}\|\mathbf{v}\|^{2}}{\sigma^{2}\|\mathbf{v}\|^{2}+\sigma_{\xi}^{2}}$. 
  
  This formulation reveals the key scaling relation of on and off manifold
  movement 
  \begin{itemize}
  \item Signal to noise ratio $s:=\frac{\sigma^{2}\|\mathbf{v}\|^{2}}{\sigma^{2}\|\mathbf{v}\|^{2}+\sigma_{\xi}^{2}}$
  , which range from $s\in[0,1]$
  \begin{itemize}
  \item When no gradient exists, $\|\mathbf{v}\|^{2}=0$, then independent
  of number of sample $N$, $\rho=1/d$. i.e. the norm on one axis among
  $d$ axis is $1/d$.
  \item Similarly, when $\sigma_{\xi}^{2}\gg\sigma^{2}\|\mathbf{v}\|^{2}$
  i.e. the reward noise variance is much higher than the variation of
  signal under exploration range $\sigma$, then it's effectively no
  gradient and no guidance. 
  \item In the other extreme, when very little reward noise, $\sigma_{\xi}^{2}\ll\sigma^{2}\|\mathbf{v}\|^{2}$,
  $s\approx1$, then $\rho=\frac{N+2}{d+N+1}$. 
  \item Via differentiation, one can show, $\rho$ monotonically depends on
  signal to noise ratio $s$
  \end{itemize}
  \item Full dimension $d$. Increase the full space dim $d$ will monotonically
  decrease the on manifold fraction $\rho$. 
  \item Sample size per generation $N$. Increase the sample size $N$ will
  monotonically increase the on manifold fraction $\rho$. 
  \begin{itemize}
  \item Note this scaling is almost linear, so to get half of the variance
  on manifold, one need $(N+1)s\approx d$. In reality, $d\sim10^{9}$
  , which makes this impossible. 
  \item In most realistic regimes $N\ll d$, which makes this on manifold
  ratio per step nearly negligible. 
  \end{itemize}
  \end{itemize}
  
  \clearpage
  \subsection{Quadratic landscape analysis} \label{appsec:quad_landscape_step}
  \begin{proof}[Proof of Proposition~\ref{prop:quad_landscape_step}]
  Consider a quadratic potential centered, i.e. peaked at the origin $\theta^{*}=0$.
  \begin{equation}
  R(\theta)=-\frac{1}{2}\theta^{\top}Q\theta
  \end{equation}
  At current state $\theta_{0}$, the reward at perturbed locations
  with observation noise are, where $\xi_{i}\sim\mathcal{N}(0,1)$,
  $\epsilon_{i}\sim\mathcal{N}(0,I_{d})$
  \begin{align}
  R_{i} & =R(\theta_{0}+\sigma\epsilon_{i})+\sigma_{\xi}\xi_{i}\notag\\
   & =-\frac{1}{2}(\theta_{0}+\sigma\epsilon_{i})^{\top}Q(\theta_{0}+\sigma\epsilon_{i})+\sigma_{\xi}\xi_{i}\notag\\
   & =-\frac{1}{2}\theta_{0}^{\top}Q\theta_{0}-\sigma\,\theta_{0}^{\top}Q\epsilon_{i}-\frac{\sigma^{2}}{2}\epsilon_{i}^{\top}Q\epsilon_{i}+\sigma_{\xi}\xi_{i}
  \end{align}
  Expectation of reward reads, which is the population mean of reward
  samples 
  \begin{equation}
  \mathbb{E}[R_{i}]=-\frac{1}{2}\theta_{0}^{\top}Q\theta_{0}-\frac{\sigma^{2}}{2}\mathrm{Tr}Q
  \end{equation}
  
  Center the rewards with population mean, we define 
  \begin{equation}
  \tilde{R}_{i}:=R_{i}-\mathbb{E}[R_{i}]
  \end{equation}
  
  Centered reward has three terms, linear $A$, quadratic \textbf{$B$},
  and noise $C$ 
  \begin{equation}
  \tilde{R}_{i}=\underbrace{-\sigma\,\theta_{0}^{\top}Q\epsilon_{i}}_{\text{linear }A}\underbrace{-\frac{\sigma^{2}}{2}(\epsilon_{i}^{\top}Q\epsilon_{i}-\mathrm{Tr}Q)}_{\text{quad }B}+\underbrace{\sigma_{\xi}\xi_{i}}_{\text{scalar noise }C}
  \end{equation}
  
  One can abbreviate $\mathbf{v}:=Q^{\top}\theta_{0}$ as the negative
  gradient. 
  
  The weight update reads, 
  \begin{align}
  \Delta\theta & =\alpha\cdot\frac{1}{N}\sum_{i=1}^{N}Z_{i}\epsilon_{i}\notag\\
   & =\frac{\alpha}{N\sigma_{R}}\sum_{i=1}^{N}\tilde{R}_{i}\epsilon_{i}
  \end{align}
  
  Since perturbations $\tilde{R}_{i}\epsilon_{i}$ are i.i.d. sampled
  \begin{equation}
  \mathbb{E}[\Delta\theta]=\frac{\alpha}{\sigma_{R}}\mathbb{E}[\tilde{R}_{i}\epsilon_{i}]
  \end{equation}
  and 
  \begin{equation}
  \mathrm{Cov}[\Delta\theta]=\frac{\alpha^{2}}{N\sigma_{R}^{2}}\mathrm{Cov}[\tilde{R}_{i}\epsilon_{i}]
  \end{equation}
  
  One observe that both $\tilde{R}_{i}$ and $\tilde{R}_{i}\epsilon_{i}$
  are \textbf{polynomial functions of gaussian random variables} $\epsilon_{i}$
  and $\xi_{i}$ (up to 3rd order), thus we can compute \textbf{their
  moment of any order} via Wick-Isserlis theorem \citep{wick1950evaluation,isserlis1918formula},
  including mean and covariance of them.
  
  Here are the specific calculations:
  
  \paragraph{Reward variance}
  
  Note that all three terms $A,B,C$ are centered 
  \begin{align}
  \mathbb{E}[A] & =0\notag\\
  \mathbb{E}[B] & =0\notag\\
  \mathbb{E}[C] & =0
  \end{align}
  
  Further $A\perp C$, $B\perp C$ due to independence of noise, and
  \begin{equation}
  \mathbb{E}[AB]=0
  \end{equation}
  
  as it only has polynomial of $\epsilon_{i}$ of odd order 1 and 3,
  thus its expectation is 0. Thus, $A,B,C$ are not correlated with
  one another. Thus, 
  \begin{align}
  \mathrm{Var}[\tilde{R}_{i}] & =\mathbb{E}[\tilde{R}_{i}^{2}]\notag\\
   & =\mathbb{E}[A^{2}]+\mathbb{E}[B^{2}]+\mathbb{E}[C^{2}]
  \end{align}
  
  all cross terms vanish due to no correlation.
  \begin{align}
  \mathbb{E}[A^{2}] & =\mathbb{E}[(-\sigma\,\theta_{0}^{\top}Q\epsilon_{i})\cdot(-\sigma\,\theta_{0}^{\top}Q\epsilon_{i})]\notag\\
   & =\sigma^{2}\theta_{0}^{\top}QQ^{\top}\theta_{0}\notag\\
   & =\sigma^{2}\|\mathbf{v}\|^{2}
  \end{align}
  \begin{equation}
  \mathbb{E}[C^{2}]=\sigma_{\xi}^{2}
  \end{equation}
  \begin{align}
  \mathbb{E}[B^{2}] & =\mathbb{E}[(\frac{\sigma^{2}}{2}(\epsilon_{i}^{\top}Q\epsilon_{i}-\mathrm{Tr}Q))^{2}]\notag\\
   & =\frac{\sigma^{4}}{4}\mathbb{E}[\epsilon_{i}^{\top}Q\epsilon_{i}\cdot(\epsilon_{i}^{\top}Q\epsilon_{i}-\mathrm{Tr}Q)-\mathrm{Tr}Q\cdot(\epsilon_{i}^{\top}Q\epsilon_{i}-\mathrm{Tr}Q)]\notag\\
   & =\frac{\sigma^{4}}{4}\Big(\mathbb{E}[\epsilon_{i}^{\top}Q\epsilon_{i}\cdot\epsilon_{i}^{\top}Q\epsilon_{i}]-(\mathrm{Tr}Q)^{2}\Big)
  \end{align}
  
  The first quintic polynomial of $\epsilon_{i}$ can be tackled by
  Wick-Isserelis theorem. With slight mis-use of notation, here we let
  $\epsilon_{a}$ denote the $a$th element of $\epsilon_{i}$ vector.
  
  Then 
  \begin{equation}
  \mathbb{E}[\epsilon_{a}\epsilon_{b}\epsilon_{c}\epsilon_{d}]=I_{ab}I_{cd}+I_{ad}I_{bc}+I_{ac}I_{bd}
  \end{equation}
  
  thus 
  \begin{align}
  \mathbb{E}[\epsilon_{i}^{\top}Q\epsilon_{i}\cdot\epsilon_{i}^{\top}Q\epsilon_{i}] & =\mathbb{E}[\sum_{a,b,c,d}Q_{ab}Q_{cd}\epsilon_{a}\epsilon_{b}\epsilon_{c}\epsilon_{d}]\notag\\
   & =\sum_{a,b,c,d}Q_{ab}Q_{cd}\mathbb{E}[\epsilon_{a}\epsilon_{b}\epsilon_{c}\epsilon_{d}]\notag\\
   & =\sum_{a,b,c,d}Q_{ab}Q_{cd}(\delta_{ab}\delta_{cd}+\delta_{ac}\delta_{bd}+\delta_{ad}\delta_{bc})\notag\\
   & =\mathrm{Tr}Q\mathrm{Tr}Q+\mathrm{Tr}[Q^{\top}Q]+\mathrm{Tr}[QQ]
  \end{align}
  
  Thus 
  \begin{align}
  \mathbb{E}[B^{2}] & =\frac{\sigma^{4}}{4}\Big(\mathbb{E}[\epsilon_{i}^{\top}Q\epsilon_{i}\cdot\epsilon_{i}^{\top}Q\epsilon_{i}]-(\mathrm{Tr}Q)^{2}\Big)\notag\\
   & =\frac{\sigma^{4}}{4}\Big((\mathrm{Tr}Q)^{2}+2\mathrm{Tr}[Q^{2}]-(\mathrm{Tr}Q)^{2}\Big)\notag\\
   & =\frac{\sigma^{4}}{2}\mathrm{Tr}[Q^{2}]
  \end{align}
  
  In sum, the population variance of reward reads, 
  \begin{equation}
  \mathrm{Var}[\tilde{R}_{i}]=\sigma^{2}\|\mathbf{v}\|^{2}+\frac{\sigma^{4}}{2}\mathrm{Tr}[Q^{2}]+\sigma_{\xi}^{2}
  \end{equation}
  
  and the population standard deviation is 
  \begin{equation}
  \sigma_{R}=\sqrt{\sigma^{2}\|\mathbf{v}\|^{2}+\frac{\sigma^{4}}{2}\mathrm{Tr}[Q^{2}]+\sigma_{\xi}^{2}}
  \end{equation}

  \paragraph{Expectation of parameter update}
  
  Next let's examine expectation of parameter change, where the key
  is $\mathbb{E}[\tilde{R}_{i}\epsilon_{i}]$, decomposing into three
  terms
  \begin{equation}
  \mathbb{E}[\tilde{R}_{i}\epsilon_{i}]=\mathbb{E}[A\epsilon_{i}]+\mathbb{E}[B\epsilon_{i}]+\mathbb{E}[C\epsilon_{i}]
  \end{equation}
  
  Second term vanish due to the odd order of Gaussian variable, 
  \begin{equation}
  \mathbb{E}[B\epsilon_{i}]=0
  \end{equation}
  
  Third term vanish due to independence $\epsilon_{i}\perp\xi_{i}$
  \begin{equation}
  \mathbb{E}[C\epsilon_{i}]=0
  \end{equation}
  
  First term is easy to calculate 
  \begin{align}
  \mathbb{E}[A\epsilon_{i}] & =\mathbb{E}[(-\sigma\,\theta_{0}^{\top}Q\epsilon_{i})\epsilon_{i}]\notag\\
   & =-\sigma\mathbb{E}[\epsilon_{i}(\epsilon_{i}^{\top}\mathbf{v})]\notag\\
   & =-\sigma\mathbf{v}
  \end{align}
  
  So in sum, 
  \begin{equation}
  \mathbb{E}[\tilde{R}_{i}\epsilon_{i}]=-\sigma\mathbf{v}
  \end{equation}
  
  \begin{align}
  \mathbb{E}[\Delta\theta] & =\frac{\alpha}{\sigma_{R}}\mathbb{E}[\tilde{R}_{i}\epsilon_{i}]\notag\\
   & =\frac{-\alpha\sigma\mathbf{v}}{\sigma_{R}}\notag\\
   & =\frac{-\alpha\sigma\mathbf{v}}{\sqrt{\sigma^{2}\|\mathbf{v}\|^{2}+\frac{\sigma^{4}}{2}\mathrm{Tr}[Q^{2}]+\sigma_{\xi}^{2}}}
  \end{align}
  
  which is on expectation aligned with local gradient direction, but
  scaled by signal to noise ratio. 
  
  \paragraph{Variance of parameter update}
  
  Finally, let's examine variance of parameter change, where the key
  is $\mathrm{Cov}[\tilde{R}_{i}\epsilon_{i}]$,
  \begin{align}
  \mathrm{Cov}[\tilde{R}_{i}\epsilon_{i}] & =\mathbb{E}[\tilde{R}_{i}\epsilon_{i}(\tilde{R}_{i}\epsilon_{i})^{\top}]-\mathbb{E}[\tilde{R}_{i}\epsilon_{i}]\mathbb{E}[\tilde{R}_{i}\epsilon_{i}]^{\top}\notag\\
   & =\mathbb{E}[\tilde{R}_{i}^{2}\epsilon_{i}\epsilon_{i}^{\top}]-\sigma^{2}\mathbf{v}\mathbf{v}^{\top}
  \end{align}
  
  Tackling the first term, using the same logic (odd order and independence)
  to eliminate terms, 
  \begin{align}
  \mathbb{E}[\tilde{R}_{i}^{2}\epsilon_{i}\epsilon_{i}^{\top}] & =\mathbb{E}[(A+B+C)^{2}\epsilon_{i}\epsilon_{i}^{\top}]\notag\\
   & =\mathbb{E}[(A^{2}+B^{2}+C^{2}+2AB+2BC+2AC)\epsilon_{i}\epsilon_{i}^{\top}]\notag\\
   & =\mathbb{E}[(A^{2}+B^{2}+C^{2})\epsilon_{i}\epsilon_{i}^{\top}]
  \end{align}
  
  Tackle $C$ term. 
  \begin{align}
  \mathbb{E}[C^{2}\epsilon_{i}\epsilon_{i}^{\top}] & =\mathbb{E}[C^{2}]\mathbb{E}[\epsilon_{i}\epsilon_{i}^{\top}]\notag\\
   & =\sigma_{\xi}^{2}I_{d}
  \end{align}
  
  Tackle \textbf{$A$} term
  
  \begin{align}
  \mathbb{E}[A^{2}\epsilon_{i}\epsilon_{i}^{\top}] & =\mathbb{E}[(-\sigma\,\theta_{0}^{\top}Q\epsilon_{i})^{2}\epsilon_{i}\epsilon_{i}^{\top}]\notag\\
   & =\sigma^{2}\mathbb{E}[(\mathbf{v}^{\top}\epsilon_{i})^{2}\epsilon_{i}\epsilon_{i}^{\top}]
  \end{align}
  
  using the same Wick theorem trick we have each element 
  \begin{align}
  \mathbb{E}[(\mathbf{v}^{\top}\epsilon_{i})^{2}\epsilon_{i}\epsilon_{i}^{\top}]_{cd} & =\mathbb{E}[\sum_{a,b}\mathbf{v}_{a}\mathbf{v}_{b}\epsilon_{a}\epsilon_{b}\epsilon_{c}\epsilon_{d}]\notag\\
   & =\sum_{a,b}\mathbf{v}_{a}\mathbf{v}_{b}\mathbb{E}[\epsilon_{a}\epsilon_{b}\epsilon_{c}\epsilon_{d}]\notag\\
   & =\sum_{a,b}\mathbf{v}_{a}\mathbf{v}_{b}(\delta_{ab}\delta_{cd}+\delta_{ac}\delta_{bd}+\delta_{ad}\delta_{cb})\notag\\
   & =\|\mathbf{v}\|^{2}\delta_{cd}+\mathbf{v}_{c}\mathbf{v}_{d}+\mathbf{v}_{d}\mathbf{v}_{c}
  \end{align}
  
  Thus, 
  \begin{equation}
  \mathbb{E}[A^{2}\epsilon_{i}\epsilon_{i}^{\top}]=\sigma^{2}(\|\mathbf{v}\|^{2}I_{d}+2\mathbf{v}\mathbf{v}^{\top})
  \end{equation}
  
  Finally, tackle $B$ term 
  \begin{align}
  \mathbb{E}[B^{2}\epsilon_{i}\epsilon_{i}^{\top}] & =\mathbb{E}[(\frac{\sigma^{2}}{2}(\epsilon_{i}^{\top}Q\epsilon_{i}-\mathrm{Tr}Q))^{2}\epsilon_{i}\epsilon_{i}^{\top}]\notag\\
   & =\frac{\sigma^{4}}{4}\mathbb{E}[\Big((\epsilon_{i}^{\top}Q\epsilon_{i})^{2}-2\mathrm{Tr}Q\cdot\epsilon_{i}^{\top}Q\epsilon_{i}+(\mathrm{Tr}Q)^{2}\Big)\epsilon_{i}\epsilon_{i}^{\top}]
  \end{align}
  
  where we encounter 2nd, 4th and 6th order terms. 
  
  2nd order term
  \begin{equation}
  \mathbb{E}[(\mathrm{Tr}Q)^{2}\epsilon_{i}\epsilon_{i}^{\top}]=(\mathrm{Tr}Q)^{2}I_{d}
  \end{equation}
  
  4th order term 
  \begin{align}
  \mathbb{E}[\epsilon_{i}^{\top}Q\epsilon_{i}\cdot\epsilon_{i}\epsilon_{i}^{\top}]_{cd} & =\mathbb{E}[\sum_{a,b}Q_{ab}\epsilon_{a}\epsilon_{b}\epsilon_{c}\epsilon_{d}]\notag\\
   & =\sum_{a,b}Q_{ab}(\delta_{ab}\delta_{cd}+\delta_{ac}\delta_{bd}+\delta_{ad}\delta_{cb})\notag\\
   & =\mathrm{Tr}Q\cdot\delta_{cd}+Q_{cd}+Q_{dc}
  \end{align}
  
  Thus
  \begin{align}
  \mathbb{E}[\epsilon_{i}^{\top}Q\epsilon_{i}\cdot\epsilon_{i}\epsilon_{i}^{\top}] & =\mathrm{Tr}Q\cdot I_{d}+Q+Q^{\top}\notag\\
   & =\mathrm{Tr}Q\cdot I_{d}+2Q
  \end{align}
  
  6th order term
  \begin{equation}
  \mathbb{E}[\Big((\epsilon_{i}^{\top}Q\epsilon_{i})^{2}\Big)\epsilon_{i}\epsilon_{i}^{\top}]_{ef}=\mathbb{E}[\sum_{a,b,c,d}Q_{ab}Q_{cd}\epsilon_{a}\epsilon_{b}\epsilon_{c}\epsilon_{d}\epsilon_{e}\epsilon_{f}]
  \end{equation}
  
  Note this has $C_{6}^{2}=15$ pairings, with some diagrammatic reasoning,
  we can find topologically we have three types 
  \begin{align}
  \mathbb{E}[\Big((\epsilon_{i}^{\top}Q\epsilon_{i})^{2}\Big)\epsilon_{i}\epsilon_{i}^{\top}]_{ef} & =\mathbb{E}[\sum_{a,b,c,d}Q_{ab}Q_{cd}\epsilon_{a}\epsilon_{b}\epsilon_{c}\epsilon_{d}\epsilon_{e}\epsilon_{f}]\notag\\
   & =\sum_{a,b,c,d}Q_{ab}Q_{cd}\Big(\underbrace{\delta_{ab}\delta_{cd}\delta_{ef}}_{1\text{case}}+\underbrace{\delta_{ac}\delta_{bd}\delta_{ef}+...}_{2\text{case}}+\notag\\
   & \quad\quad\underbrace{\delta_{ae}\delta_{bf}\delta_{cd}+...}_{4\text{case}}+\underbrace{\delta_{ae}\delta_{cf}\delta_{bd}+...}_{8\text{case}}\Big)\notag\\
   & =((\mathrm{Tr}Q)^{2}+2\mathrm{Tr}Q^{2})\delta_{ef}+4\mathrm{Tr}Q\cdot Q_{ef}+8[Q^{2}]_{ef}
  \end{align}
  
  Thus 
  \begin{equation}
  \mathbb{E}[\Big((\epsilon_{i}^{\top}Q\epsilon_{i})^{2}\Big)\epsilon_{i}\epsilon_{i}^{\top}]=((\mathrm{Tr}Q)^{2}+2\mathrm{Tr}Q^{2})I_{d}+4\mathrm{Tr}Q\cdot Q+8Q^{2}
  \end{equation}

  \begin{figure}[h]
      \centering
      \begin{tikzpicture}[scale=1.0, every node/.style={font=\small}]
        \newcommand{\Qedge}[3]{
          \draw[dashed] (#1) |- ($(#1)!0.5!(#2)+(0,0.4)$) -| (#2)
            node[midway,above] {$#3$};
        }
      
        \newcommand{\crossnode}[3]{
          \node[draw, cross out, inner sep=2pt, line width=0.8pt, label=below:$#2$] (#1) at #3 {};
        }
      
        \node at (0,3.2) {\textbf{Type I: $1$ diagram}};
      
        \node[circle,fill=black,inner sep=1.5pt,label=below:$a$] (a1) at (-1.2,2) {};
        \node[circle,fill=black,inner sep=1.5pt,label=below:$b$] (b1) at (-0.6,2) {};
        \node[circle,fill=black,inner sep=1.5pt,label=below:$c$] (c1) at (0.4,2) {};
        \node[circle,fill=black,inner sep=1.5pt,label=below:$d$] (d1) at (1.0,2) {};
        \crossnode{e1}{e}{(2.0,2)}
        \crossnode{f1}{f}{(2.8,2)}
      
        \Qedge{a1}{b1}{Q_{ab}}
        \Qedge{c1}{d1}{Q_{cd}}
      
        \draw (e1) -- (f1);
        \draw (a1) to[out=0,in=180] (b1);
        \draw (c1) to[out=0,in=180] (d1);
      
        \node at (0.8,1.35) {$\Rightarrow (\mathrm{Tr}Q)^2\,\delta_{ef}$};
      
        \node at (0,0.2) {\textbf{Type II: $2$ diagrams}};
      
        \node[circle,fill=black,inner sep=1.5pt,label=below:$a$] (a2) at (-1.2,-1) {};
        \node[circle,fill=black,inner sep=1.5pt,label=below:$b$] (b2) at (-0.6,-1) {};
        \node[circle,fill=black,inner sep=1.5pt,label=below:$c$] (c2) at (0.4,-1) {};
        \node[circle,fill=black,inner sep=1.5pt,label=below:$d$] (d2) at (1.0,-1) {};
        \crossnode{e2}{e}{(2.0,-1)}
        \crossnode{f2}{f}{(2.8,-1)}
      
        \Qedge{a2}{b2}{Q_{ab}}
        \Qedge{c2}{d2}{Q_{cd}}
      
        \draw (a2) to[out=20,in=160] (c2);
        \draw (b2) to[out=20,in=160] (d2);
        \draw (e2) -- (f2);
      
        \node at (0.8,-1.65) {$\Rightarrow 2\,\mathrm{Tr}(Q^2)\,\delta_{ef}$};
      
        \node at (7,3.2) {\textbf{Type III: $4$ diagrams}};
      
        \node[circle,fill=black,inner sep=1.5pt,label=below:$a$] (a3) at (5.5,2) {};
        \node[circle,fill=black,inner sep=1.5pt,label=below:$b$] (b3) at (6.1,2) {};
        \node[circle,fill=black,inner sep=1.5pt,label=below:$c$] (c3) at (7.1,2) {};
        \node[circle,fill=black,inner sep=1.5pt,label=below:$d$] (d3) at (7.7,2) {};
        \crossnode{e3}{e}{(8.7,2)}
        \crossnode{f3}{f}{(9.5,2)}
      
        \Qedge{a3}{b3}{Q_{ab}}
        \Qedge{c3}{d3}{Q_{cd}}
      
        \draw (c3) to[out=15,in=165] (e3);
        \draw (d3) to[out=15,in=165] (f3);
        \draw (a3) to[out=0,in=180] (b3);
      
        \node at (7.5,1.35) {$\Rightarrow 4\,\mathrm{Tr}(Q)\,Q_{ef}$};
      
        \node at (7,0.2) {\textbf{Type IV: $8$ diagrams}};
      
          \node[circle,fill=black,inner sep=1.5pt,label=below:$a$] (a4) at (5.5,-1) {};
          \node[circle,fill=black,inner sep=1.5pt,label=below:$b$] (b4) at (6.1,-1) {};
          \node[circle,fill=black,inner sep=1.5pt,label=below:$c$] (c4) at (7.1,-1) {};
          \node[circle,fill=black,inner sep=1.5pt,label=below:$d$] (d4) at (7.7,-1) {};
          \crossnode{e4}{e}{(9.0,-1)}
          \crossnode{f4}{f}{(10.0,-1)}
  
          \Qedge{a4}{b4}{Q_{ab}}
          \Qedge{c4}{d4}{Q_{cd}}
  
          \draw (a4) to[out=15,in=165] (e4);
          \draw (b4) to[out=0,in=180] (c4);
          \draw (d4) to[out=15,in=165] (f4);
      
        \node at (7.5,-1.65) {$\Rightarrow 8\,(Q^2)_{ef}$};
      
      \end{tikzpicture}
      \caption{Wick contraction classes for $\mathbb{E}[(\epsilon^\top Q\epsilon)^2\,\epsilon_e\epsilon_f]$, with external loose ends $e,f$ shown as crosses.}
  \end{figure}

  In summary
  \begin{align}
  \mathbb{E}[B^{2}\epsilon_{i}\epsilon_{i}^{\top}] & =\frac{\sigma^{4}}{4}\mathbb{E}[\Big((\epsilon_{i}^{\top}Q\epsilon_{i})^{2}-2\mathrm{Tr}Q\cdot\epsilon_{i}^{\top}Q\epsilon_{i}+(\mathrm{Tr}Q)^{2}\Big)\epsilon_{i}\epsilon_{i}^{\top}]\notag\\
   & =\frac{\sigma^{4}}{4}\Big[((\mathrm{Tr}Q)^{2}+2\mathrm{Tr}Q^{2})I_{d}+4\mathrm{Tr}Q\cdot Q+8Q^{2}\notag\\
   & \quad-2\mathrm{Tr}Q\cdot(\mathrm{Tr}Q\cdot I_{d}+2Q)\notag\\
   & \quad+(\mathrm{Tr}Q)^{2}I_{d}\Big]\notag\\
   & =\frac{\sigma^{4}}{4}\Big[(2\mathrm{Tr}Q^{2})I_{d}+8Q^{2}\Big]\notag\\
   & =\frac{\sigma^{4}}{2}\Big[(\mathrm{Tr}Q^{2})\cdot I_{d}+4Q^{2}\Big]
  \end{align}
  
  Thus
  
  \begin{align}
  \mathbb{E}[\tilde{R}_{i}^{2}\epsilon_{i}\epsilon_{i}^{\top}] & =\mathbb{E}[(A^{2}+B^{2}+C^{2})\epsilon_{i}\epsilon_{i}^{\top}]\notag\\
   & =\sigma^{2}(\|\mathbf{v}\|^{2}I_{d}+2\mathbf{v}\mathbf{v}^{\top})+\frac{\sigma^{4}}{2}\Big[(\mathrm{Tr}Q^{2})\cdot I_{d}+4Q^{2}\Big]+\sigma_{\xi}^{2}I_{d}
  \end{align}
  
  The covariance term reads 
  \begin{align}
  \mathrm{Cov}[\tilde{R}_{i}\epsilon_{i}] & =\mathbb{E}[\tilde{R}_{i}^{2}\epsilon_{i}\epsilon_{i}^{\top}]-\sigma^{2}\mathbf{v}\mathbf{v}^{\top}\notag\\
   & =\sigma^{2}(\|\mathbf{v}\|^{2}I_{d}+\mathbf{v}\mathbf{v}^{\top})+\frac{\sigma^{4}}{2}\Big[(\mathrm{Tr}Q^{2})\cdot I_{d}+4Q^{2}\Big]+\sigma_{\xi}^{2}I_{d}
  \end{align}
  
  Recall that $\mathrm{Var}[\tilde{R}_{i}]=\sigma^{2}\|\mathbf{v}\|^{2}+\frac{\sigma^{4}}{2}\mathrm{Tr}[Q^{2}]+\sigma_{\xi}^{2}$
  
  \begin{equation}
  \mathrm{Cov}[\tilde{R}_{i}\epsilon_{i}]=\sigma^{2}\mathbf{v}\mathbf{v}^{\top}+2\sigma^{4}Q^{2}+\sigma_{R}^{2}I_{d}
  \end{equation}
  
  Thus in summary we have mean and covariance of parameter update, $\mathbf{v}:=Q\theta_{0}$
  \begin{align}
  \mathbb{E}[\Delta\theta] & =\frac{\alpha}{\sigma_{R}}\mathbb{E}[\tilde{R}_{i}\epsilon_{i}]\notag\\
   & =\frac{-\alpha\sigma\mathbf{v}}{\sigma_{R}}
  \end{align}
  \begin{align}
  \mathrm{Cov}[\Delta\theta] & =\frac{\alpha^{2}}{N\sigma_{R}^{2}}\mathrm{Cov}[\tilde{R}_{i}\epsilon_{i}]\notag\\
   & =\frac{\alpha^{2}}{N\sigma_{R}^{2}}\Big(\sigma^{2}\mathbf{v}\mathbf{v}^{\top}+2\sigma^{4}Q^{2}+\sigma_{R}^{2}I_{d}\Big)\notag\\
   & =\frac{\alpha^{2}}{N}I_{d}+\frac{\alpha^{2}}{N\sigma_{R}^{2}}\Big(\sigma^{2}\mathbf{v}\mathbf{v}^{\top}+2\sigma^{4}Q^{2}\Big)
  \end{align}
  
  The covariance of weight update has three terms 
  \begin{itemize}
  \item An isotropic variance term from the Gaussian exploration per se. $\frac{\alpha^{2}}{N}I_{d}$
  \item A rank-1 term along current gradient direction $\mathbf{v}$, $\frac{\alpha^{2}\sigma^{2}}{N\sigma_{R}^{2}}\mathbf{v}\mathbf{v}^{\top}$
  \item A term corresponding to landscape curvature, i.e. higher variance
  along higher curvature directions, $\frac{2\sigma^{4}\alpha^{2}}{N\sigma_{R}^{2}}Q^{2}$
  \end{itemize}
  \end{proof}
  
  \paragraph{Order analysis of quadratic landscape}

  Consider the relative magnitudes of the covariance terms, using the first term as a reference:
  \begin{itemize}
    \item The second term (aligned with the gradient direction) is of order $\frac{\sigma^{2}}{\sigma_{R}^{2}}\|\mathbf{v}\|^{2}$.
    \item The third term (along each direction $\mathbf{u}$) is of order $\frac{2\sigma^{4}}{\sigma_{R}^{2}}\,\mathbf{u}^{\top}Q^{2}\mathbf{u}$.
  \end{itemize}

  Since in practice $2\sigma^{2}$ is very small (e.g., $4.5\times10^{-6}$), the quadratic term generally acts as a small perturbation. To determine when the third (quadratic) term becomes comparable to the second, compare the terms along a specific direction $\mathbf{u}$:

  \begin{align}
    \frac{2\sigma^{4}}{\sigma_{R}^{2}}\,\mathbf{u}^{\top}Q^{2}\mathbf{u} &\approx \frac{\sigma^{2}}{\sigma_{R}^{2}}\|\mathbf{v}\|^{2} \notag\\
    2\sigma^{2}\,\mathbf{u}^{\top}Q^{2}\mathbf{u} &\approx \theta_{0}^{\top}Q^{2}\theta_{0} \notag\\
    \mathbf{u}^{\top}Q^{2}\mathbf{u} &\approx \frac{1}{2\sigma^{2}}\,\theta_{0}^{\top}Q^{2}\theta_{0}
  \end{align}

  Assume $\mathbf{u}$ is an eigenvector of $Q$ with eigenvalue $\lambda_{k}$,
  \begin{align}
  \lambda_{k}^{2} & \approx\frac{1}{2\sigma^{2}}\theta_{0}^{\top}Q^{2}\theta_{0}\notag\\
  \lambda_{k} & \approx\frac{\|\mathbf{v}\|}{\sqrt{2}\sigma}
  \end{align}
  \begin{equation}
  \lambda_{k}^{2}\approx\frac{1}{2\sigma^{2}}\sum_{i=1}^{d}\lambda_{i}^{2}(\theta_{0}^{\top}\mathbf{u}_{i})^{2}
  \end{equation}
  
  This shows that the current gradient norm $\|\mathbf{v}\|$ needs
  to be much smaller (at least $\sigma$ times) and the eigenvalue /
  curvature of the probe direction needs to be sharp enough for the
  3rd term to be comparable to the 2nd term. Two scenarios this could
  happen 
  \begin{itemize}
  \item For really sharply curved directions $\lambda_{k}$ this is usually
  be true. 
  \item The current state $\theta_{0}$ is close to optimum $\theta^{*}=0$
  or lies in the vast valley of degenerate solution, i.e. $Q\theta_{0}=0$,
  $\theta_{0}$ in the null space of $Q$. 
  \end{itemize}
  \paragraph{Manifold Alignment of Update}

Define the projections onto the gradient direction $\mathbf{v} = Q\theta_0$ and its orthogonal complement:
\begin{align}
P^{\parallel} &:= \frac{\mathbf{v}\mathbf{v}^{\top}}{\|\mathbf{v}\|^{2}}, \qquad
P^{\perp} := I_{d} - P^{\parallel}.
\end{align}

The on-manifold expected squared norm decomposes as
\begin{align}
\mathbb{E}\|P^{\parallel}\Delta\theta\|^{2}
&= \|\mathbb{E}[P^{\parallel}\Delta\theta]\|^{2}
   + \mathrm{Tr}[\mathrm{Cov}[P^{\parallel}\Delta\theta]].
\end{align}

For the first term, since $\mathbb{E}[\Delta\theta] = \frac{-\alpha\sigma\,\mathbf{v}}{\sigma_{R}}$
is already along $\mathbf{v}$, we have $P^{\parallel}\mathbb{E}[\Delta\theta] = \mathbb{E}[\Delta\theta]$, so
\begin{align}
\|\mathbb{E}[P^{\parallel}\Delta\theta]\|^{2}
&= \|\mathbb{E}[\Delta\theta]\|^{2}
= \frac{\alpha^{2}\sigma^{2}\|\mathbf{v}\|^{2}}{\sigma_{R}^{2}}.
\end{align}

For the second term, we project the covariance \eqref{eq:es-quad-cov} onto $P^{\parallel}$:
\begin{align}
\mathrm{Cov}[P^{\parallel}\Delta\theta]
&= P^{\parallel}\,\mathrm{Cov}[\Delta\theta]\,P^{\parallel} \notag\\
&= \frac{\alpha^{2}}{N\sigma_{R}^{2}}
   P^{\parallel}\left(\sigma^{2}\,\mathbf{v}\mathbf{v}^{\top}
   + 2\sigma^{4}Q^{2}
   + \sigma_{R}^{2}\,I_{d}\right)P^{\parallel}.
\end{align}
Evaluating each term using $P^{\parallel}\mathbf{v}\mathbf{v}^{\top}P^{\parallel} = \|\mathbf{v}\|^{2}P^{\parallel}$,
$P^{\parallel}Q^{2}P^{\parallel} = (\hat{\mathbf{v}}^{\top}Q^{2}\hat{\mathbf{v}})\,P^{\parallel}$
where $\hat{\mathbf{v}} := \mathbf{v}/\|\mathbf{v}\|$,
and $P^{\parallel}I_{d}\,P^{\parallel} = P^{\parallel}$:
\begin{align}
\mathrm{Cov}[P^{\parallel}\Delta\theta]
&= \frac{\alpha^{2}}{N\sigma_{R}^{2}}
   \left(\sigma^{2}\|\mathbf{v}\|^{2}
         + 2\sigma^{4}\,\hat{\mathbf{v}}^{\top}Q^{2}\hat{\mathbf{v}}
         + \sigma_{R}^{2}\right)P^{\parallel}.
\end{align}
Taking the trace ($\mathrm{Tr}[P^{\parallel}] = 1$) and combining:
\begin{align}
\mathbb{E}\|P^{\parallel}\Delta\theta\|^{2}
&= \frac{\alpha^{2}\sigma^{2}\|\mathbf{v}\|^{2}}{\sigma_{R}^{2}}
   + \frac{\alpha^{2}}{N\sigma_{R}^{2}}
     \left(\sigma^{2}\|\mathbf{v}\|^{2}
           + 2\sigma^{4}\,\hat{\mathbf{v}}^{\top}Q^{2}\hat{\mathbf{v}}
           + \sigma_{R}^{2}\right).
\end{align}

The total expected squared norm follows analogously using $\mathrm{Tr}[\mathbf{v}\mathbf{v}^{\top}] = \|\mathbf{v}\|^{2}$,
$\mathrm{Tr}[Q^{2}] = \mathrm{Tr}[Q^{2}]$, and $\mathrm{Tr}[I_{d}] = d$:
\begin{align}
\mathbb{E}\|\Delta\theta\|^{2}
&= \frac{\alpha^{2}\sigma^{2}\|\mathbf{v}\|^{2}}{\sigma_{R}^{2}}
   + \frac{\alpha^{2}}{N\sigma_{R}^{2}}
     \left(\sigma^{2}\|\mathbf{v}\|^{2}
           + 2\sigma^{4}\,\mathrm{Tr}[Q^{2}]
           + d\,\sigma_{R}^{2}\right).
\end{align}

Factoring out $\alpha^{2}/(N\sigma_{R}^{2})$ from both expressions --- noting that the
mean-squared term $\frac{\alpha^{2}\sigma^{2}\|\mathbf{v}\|^{2}}{\sigma_{R}^{2}}$
contributes $N\,\sigma^{2}\|\mathbf{v}\|^{2}$ after extracting the common factor,
which combines with the variance contribution to yield $(N+1)\,\sigma^{2}\|\mathbf{v}\|^{2}$
--- the on-manifold fraction is
\begin{align}
\rho
:= \frac{\mathbb{E}\|P^{\parallel}\Delta\theta\|^{2}}
        {\mathbb{E}\|\Delta\theta\|^{2}}
= \frac{(N+1)\,\sigma^{2}\|\mathbf{v}\|^{2}
        + 2\sigma^{4}\,\hat{\mathbf{v}}^{\top}Q^{2}\hat{\mathbf{v}}
        + \sigma_{R}^{2}}
       {(N+1)\,\sigma^{2}\|\mathbf{v}\|^{2}
        + 2\sigma^{4}\,\mathrm{Tr}[Q^{2}]
        + d\,\sigma_{R}^{2}}.
\end{align}

Setting $Q^{2}=0$ and $\sigma_{R}^{2}=\sigma^{2}\|\mathbf{v}\|^{2}+\sigma_{\xi}^{2}$
recovers the linear-landscape ratio $\rho = (1+(N+1)s)/(d+(N+1)s)$
with $s = \sigma^{2}\|\mathbf{v}\|^{2}/\sigma_{R}^{2}$.

\clearpage
  \subsection{Multi-step ES analysis for quadratic landscape}\label{appsec:quad_landscape_dynamics}
  We could write down the ES iteration as 
  \begin{align}
  \theta_{1}-\theta_{0} & =\frac{-\alpha\sigma\mathbf{v}}{\sigma_{R}}+\eta_{0} \notag\\
   & =-\frac{\alpha\sigma}{\sigma_{R}}Q\theta_{0}+\eta_{0}
  \end{align}
  Thus
  \begin{equation}
  \theta_{t+1}=\Big(I-\frac{\alpha\sigma}{\sigma_{R}}Q\Big)\ \theta_{t}+\eta_{t}
  \end{equation}
  where, 
  \begin{align}
  \mathrm{Cov}[\eta_{t}] & =\frac{\alpha^{2}}{N\sigma_{R}^{2}}\Big(\sigma^{2}\mathbf{v}\mathbf{v}^{\top}+2\sigma^{4}Q^{2}+\sigma_{R}^{2}I_{d}\Big) \notag\\
   & =\frac{\alpha^{2}}{N}I_{d}+\frac{\alpha^{2}}{N\sigma_{R}^{2}}\Big(\sigma^{2}Q\theta_{t}\theta_{t}^{\top}Q+2\sigma^{4}Q^{2}\Big)
  \end{align} 
  and 
  \begin{align}
  \sigma_{R} & =\sqrt{\sigma^{2}\|\mathbf{v}\|^{2}+\frac{\sigma^{4}}{2}\mathrm{Tr}[Q^{2}]+\sigma_{\xi}^{2}} \notag\\
   & =\sqrt{\sigma^{2}\theta_{t}^{\top}Q^{2}\theta_{t}+\frac{\sigma^{4}}{2}\mathrm{Tr}[Q^{2}]+\sigma_{\xi}^{2}}
  \end{align}
  
  \paragraph{Simplified ES-optimization process }
  
  Consider the simplified ES optimization, where we assume the $\sigma_{R}$ is held constant. 
  To simplify it even more, assume $\eta_{t}$ is Gaussian and isotropic
  each step $\frac{\alpha^{2}}{N}I_{d}$, i.e. ignore the 2nd and 3rd
  term $\frac{\alpha^{2}}{N\sigma_{R}^{2}}\Big(\sigma^{2}Q\theta_{t}\theta_{t}^{\top}Q+2\sigma^{4}Q^{2}\Big)$
  
  Thus the optimization iteration reads, 
  \begin{equation}
  \theta_{t+1}=\Big(I-\frac{\alpha\sigma}{\sigma_{R}}Q\Big)\theta_{t}+\eta_{t},\quad\eta_{t}\sim\mathcal{N}(0,\frac{\alpha^{2}}{N}I_{d})
  \end{equation}
  
  This process is known as vector autoregression VAR(1) or discrete-time
  OU process.

  \begin{proof}[Proof of Proposition~\ref{prop:quad_landscape_dynamics}]
  We can write down the generative process of $\theta_{T}$
  recursively. 
  \begin{align}
  \theta_{t} & =H\theta_{t-1}+\eta_{t-1} \notag\\
   & =H(H\theta_{t-2}+\eta_{t-2})+\eta_{t-1} \notag\\
   & =H^{2}\theta_{t-2}+H\eta_{t-2}+\eta_{t-1} \notag\\
   & =H^{3}\theta_{t-3}+H^{2}\eta_{t-3}+H\eta_{t-2}+\eta_{t-1} \notag\\
   & =H^{t}\theta_{0}+\sum_{i=1}^{t}H^{i-1}\eta_{t-i}
  \end{align}
  
  Note let $\mathbf{u}_{k},\lambda_{k}$ be the eigenvector and eigenvalue
  of $Q$, then 
  \begin{align}
  H & =I-\frac{\alpha\sigma}{\sigma_{R}}Q \notag\\
   & =\sum_{k=1}^{d}(1-\frac{\alpha\sigma}{\sigma_{R}}\lambda_{k})\mathbf{u}_{k}\mathbf{u}_{k}^{\top}
  \end{align}

  In this simplification, the distribution comes from the independent
  Gaussian random vectors $\{\eta_{i}\}$, thus it's still Gaussian
  distributed. 
  
  Thus we can write down the expectation, which only depends on the
  first term. 
  \begin{align}
  \mathbb{E}[\theta_{t}] & =H^{t}\theta_{0} \notag\\
   & =\sum_{k=1}^{d}(1-\frac{\alpha\sigma}{\sigma_{R}}\lambda_{k})^{t}\mathbf{u}_{k}\mathbf{u}_{k}^{\top}\theta_{0}
  \end{align}
  
  Asymptotically, 
  \begin{align}
  \lim_{t\to\infty}\mathbb{E}[\theta_{t}] & =\lim_{t\to\infty}H^{t}\theta_{0} \notag\\
   & =\lim_{t\to\infty}\sum_{k=1}^{d}(1-\frac{\alpha\sigma}{\sigma_{R}}\lambda_{k})^{t}\mathbf{u}_{k}\mathbf{u}_{k}^{\top}\theta_{0}
  \end{align}
  
  \begin{itemize}
  \item For flat dimensions $\lambda_{k}=0$, then $\lim_{t\to\infty}(1-\frac{\alpha\sigma}{\sigma_{R}}\lambda_{k})^{t}=1$,
  thus on expectation the solution won't move on that direction $\lim_{t\to\infty}\mathbb{E}[\mathbf{u}_{k}^{\top}\theta_{t}]=\mathbf{u}_{k}^{\top}\theta_{0}$
  \item If $|1-\frac{\alpha\sigma}{\sigma_{R}}\lambda_{k}|>1$, this will
  diverge to infinity. $\lim_{t\to\infty}\mathbb{E}[\mathbf{u}_{k}^{\top}\theta_{t}]=\infty$ 
  \begin{itemize}
  \item Specifically, when $\lambda_{k}<0$ concave dimensions
  \item or for $\frac{\alpha\sigma}{\sigma_{R}}\lambda_{k}>2$, this will
  diverge. This could happen for very sharply curved directions where
  curvature exceeds $\lambda_{k}>2\frac{\sigma_{R}}{\alpha\sigma}$.
  Note for gradient descent this could also be caused by too high learning
  rate, but here $\alpha=\sigma/2$. 
  \end{itemize}
  \end{itemize}
  The covariance depends on the summation $\sum_{i=1}^{t}H^{i-1}\eta_{t-i}$,
  using the i.i.d. property of the noise $\{\eta_{i}\}$ and the Gaussian
  distribution of noise, 
  \begin{align}
  \mathrm{Cov}[\theta_{t}] & =\mathrm{Cov}[\sum_{i=1}^{t}H^{i-1}\eta_{t-i}] \notag\\
   & =\sum_{i=1}^{t}\mathrm{Cov}[H^{i-1}\eta_{t-i}] \notag\\
   & =\sum_{i=1}^{t}(H^{i-1})\mathrm{Cov}[\eta_{t-i}](H^{i-1})^{\top} \notag\\
   & =\sum_{i=1}^{t}(H^{i-1})^{2}\frac{\alpha^{2}}{N}I_{d} \notag\\
   & =\frac{\alpha^{2}}{N}\sum_{i=0}^{t-1}H^{2i}
  \end{align}
  
  Thus the variance along one eigen axis is the sum of this series.
  \begin{align}
  \mathrm{Cov}[\mathbf{u}_{k}^{\top}\theta_{t}] & =\frac{\alpha^{2}}{N}\mathbf{u}_{k}^{\top}\sum_{i=0}^{t-1}H^{2i}\mathbf{u}_{k} \notag\\
   & =\frac{\alpha^{2}}{N}\sum_{i=0}^{t-1}(1-\frac{\alpha\sigma}{\sigma_{R}}\lambda_{k})^{2i}
  \end{align}
  
  Using the formula for finite term sum of geometric series, 
  \begin{equation}
  \sum_{k=0}^{t-1}x^{k}=\begin{cases}
  \dfrac{1-x^{t}}{1-x}, & x\neq1,\\[6pt]
  t, & x=1.
  \end{cases}
  \end{equation}
  note that for finite terms this identity holds for $|x|>1$. 
  
  
  
  The key quantity here is $\gamma^{2}:=(1-\frac{\alpha\sigma}{\sigma_{R}}\lambda_{k})^{2}$
  and its relation to $1$. $(1-\frac{\alpha\sigma}{\sigma_{R}}\lambda_{k})^{2}=1$
  entails $\lambda_{k}=0$ or $\lambda_{k}=\frac{2\sigma_{R}}{\alpha\sigma}$Thus,
  along an eigen direction, the variance reads, 
  
  \begin{equation}
  \mathrm{Cov}[\mathbf{u}_{k}^{\top}\theta_{t}]=\frac{\alpha^{2}}{N}\begin{cases}
  \frac{1-(1-\frac{\alpha\sigma}{\sigma_{R}}\lambda_{k})^{2t}}{1-(1-\frac{\alpha\sigma}{\sigma_{R}}\lambda_{k})^{2}}, & \text{otherwise}\\[6pt]
  t, & \lambda_{k}\in\{0,\frac{2\sigma_{R}}{\alpha\sigma}\}
  \end{cases}
  \end{equation}
  
  \begin{equation}\label{eq:simplify_ES_theory_norm_scaling}
  \mathrm{Cov}[\mathbf{u}_{k}^{\top}\theta_{t}]=\frac{\alpha^{2}}{N}\begin{cases}
  \frac{1-\gamma^{2t}}{1-\gamma^{2}}, & \text{otherwise}\\[6pt]
  t, & \gamma^{2}=0,\text{i.e. }\lambda_{k}\in\{0,\frac{2\sigma_{R}}{\alpha\sigma}\}
  \end{cases}
  \end{equation}
  \end{proof}

  \paragraph{Asymptotic behavior}
  \begin{itemize}
  \item In the flat direction, $\lambda_{k}=0$, then $\mathrm{Cov}[\mathbf{u}_{k}^{\top}\theta_{t}]=\frac{\alpha^{2}}{N}t$,
  recovering the linear scaling of random walk. Similarly for $\lambda_{k}=\frac{2\sigma_{R}}{\alpha\sigma}$,
  it increase linearly. 
  \item In the concave directions, i.e.$\lambda_{k}<0$, $(1-\frac{\alpha\sigma}{\sigma_{R}}\lambda_{k})^{2}>1$,
  then this series will diverge towards infinity. $\lim_{t\to\infty}\mathrm{Cov}[\mathbf{u}_{k}^{\top}\theta_{t}]=\infty$
  \item For $\lambda_{k}\geq\frac{2\sigma_{R}}{\alpha\sigma}$, convex direction
  with too high curvature, then $|1-\frac{\alpha\sigma}{\sigma_{R}}\lambda_{k}|\geq1$,
  then this series will diverge $\lim_{t\to\infty}\mathrm{Cov}[\mathbf{u}_{k}^{\top}\theta_{t}]=\infty$.
  \item For $0<\lambda_{k}<\frac{2\sigma_{R}}{\alpha\sigma}$, then $|1-\frac{\alpha\sigma}{\sigma_{R}}\lambda_{k}|<1$,
  then this series will converge, this is the desirable case for optimization.
  Asymptotically, 
  \begin{align}
  \lim_{t\to\infty}\mathrm{Cov}[\mathbf{u}_{k}^{\top}\theta_{t}] & =\frac{\alpha^{2}}{N}\frac{1}{1-(1-\frac{\alpha\sigma}{\sigma_{R}}\lambda_{k})^{2}} \notag\\
   & =\frac{\alpha^{2}}{N}\frac{1}{1-\gamma^{2}}
  \end{align}
  
  \begin{itemize}
  \item The convergence speed is inversely proportional to $\gamma^{2}:=(1-\frac{\alpha\sigma}{\sigma_{R}}\lambda_{k})^{2}$.
  When $\gamma^{2}$ close to $0$, the series $\gamma^{2t}$ converges
  faster; when $\gamma^{2}\sim1$, the convergence time gets longer
  and longer. 
  \item Convergence timescale can be computed $1-\gamma^{2\tau}=1/2$, then
  $\log\frac{1}{2}=\tau\log\gamma^{2}$, $\tau=-\frac{\log2}{\log\gamma^{2}}=-\frac{\log2}{\log(1-\frac{\alpha\sigma}{\sigma_{R}}\lambda_{k})^{2}}$
  is the characteristic time scale. 
  \item The converged level is also controlled by $\gamma^{2}$, specifically
  $\frac{\alpha^{2}}{N}\frac{1}{1-\gamma^{2}}$. When $\gamma^{2}$
  close to $0$, converged level is lower; when $\gamma^{2}\sim1$ the
  converged level grow towards infinity
  \item Thus directions where $\lambda_{k}\sim\frac{\sigma_{R}}{\alpha\sigma}$
  converge the fastest and the variance are the tightest $\lim_{t\to\infty}\mathrm{Cov}[\mathbf{u}_{k}^{\top}\theta_{t}]\sim\frac{\alpha^{2}}{N}$.
  Showing that these directions are the most well controlled by the
  landscape asymptotically. Specifically when $\lambda_{k}=\frac{\sigma_{R}}{\alpha\sigma}$,
  then this direction converges in one iteration $H\theta_{0}$. 
  \end{itemize}
  \end{itemize}

\clearpage
  \subsection{Gradient ascent on quadratic landscape}\label{appsec:gd_quad_landscape}

  For comparison with the ES dynamics of Section~\ref{appsec:quad_landscape_dynamics},
  we analyze standard (noiseless) gradient ascent\footnote{Since we are maximizing the reward, it's gradient ascent, but still we call it GD for simplicity}
  on the same quadratic landscape $R(\theta) = -\frac{1}{2}\theta^\top Q\theta$. 
  
  The negative gradient at $\theta_t$ is $-\nabla R(\theta_t) = Q\theta_t = \mathbf{v}_t$.
  Gradient ascent with learning rate $\beta$ updates as
  \begin{equation}
  \theta_{t+1} = \theta_t + \beta\nabla R(\theta_t) = \theta_t - \beta\,Q\theta_t = (I - \beta Q)\,\theta_t.
  \label{eq:gd_update}
  \end{equation}
  
  Define $H_{\mathrm{GD}} := I - \beta Q$, with eigendecomposition
  \begin{equation}
  H_{\mathrm{GD}} = \sum_{k=1}^{d}(1 - \beta\lambda_k)\,\mathbf{u}_k\mathbf{u}_k^\top,
  \label{eq:gd_eigendecomp}
  \end{equation}
  where $\lambda_k$ and $\mathbf{u}_k$ are the eigenvalues and eigenvectors of $Q$.
  Let $\gamma_k^{\mathrm{GD}} := 1 - \beta\lambda_k$ denote the contraction factor along
  eigendirection $\mathbf{u}_k$.
  
  \paragraph{Trajectory}
  
  Since the iteration is deterministic, recursion gives
  \begin{equation}
  \theta_t = H_{\mathrm{GD}}^t\,\theta_0
  = \sum_{k=1}^{d}(\gamma_k^{\mathrm{GD}})^t\,(\mathbf{u}_k^\top\theta_0)\,\mathbf{u}_k.
  \label{eq:gd_trajectory}
  \end{equation}
  
  Each eigendirection evolves independently:
  \begin{equation}
  \mathbf{u}_k^\top\theta_t = (1 - \beta\lambda_k)^t\,\mathbf{u}_k^\top\theta_0.
  \label{eq:gd_eigen_proj}
  \end{equation}
  
  The squared distance to the optimum $\theta^* = 0$ is
  \begin{equation}
  \|\theta_t\|^2 = \sum_{k=1}^{d}(1-\beta\lambda_k)^{2t}\,(\mathbf{u}_k^\top\theta_0)^2.
  \label{eq:gd_squared_dist}
  \end{equation}
  
  \paragraph{Convergence conditions}
  
  Direction $\mathbf{u}_k$ converges if and only if $|\gamma_k^{\mathrm{GD}}| < 1$, i.e.,
  \begin{equation}
  0 < \lambda_k < \frac{2}{\beta}.
  \label{eq:gd_stability}
  \end{equation}
  The three non-convergent modes mirror those of the ES dynamics:
  \begin{itemize}
  \item \textbf{Flat directions} ($\lambda_k = 0$): $\gamma_k^{\mathrm{GD}} = 1$,
    so $\mathbf{u}_k^\top\theta_t = \mathbf{u}_k^\top\theta_0$ for all $t$, 
    i.e. along flat directions, the gradient ascent freezes at the initial projection.
  \subitem This is dramatically different from ES, where along flat directions, 
    the iterate undergoes unbounded diffusion with variance growing as $\frac{\alpha^2 t}{N}$.
  \item \textbf{Concave directions} ($\lambda_k < 0$): $\gamma_k^{\mathrm{GD}} > 1$,
    causing exponential divergence $|\mathbf{u}_k^\top\theta_t| \to \infty$.
  \item \textbf{Overly sharp convex directions} ($\lambda_k \geq 2/\beta$):
    $|\gamma_k^{\mathrm{GD}}| \geq 1$, and the iterate overshoots, oscillating
    with growing (or non-decaying) amplitude. 
    This explosion can happen due to too large learning rate, or due to too high landscape curvature.
  \end{itemize}
  
  \paragraph{Convergence rate}
  
  Within the stable regime $0 < \lambda_k < 2/\beta$, the convergence timescale along
  $\mathbf{u}_k$ is
  \begin{equation}
  \tau_k^{\mathrm{GD}} = \frac{-\log 2}{\log(\gamma_k^{\mathrm{GD}})^2}
  = \frac{-\log 2}{2\log|1 - \beta\lambda_k|}.
  \label{eq:gd_timescale}
  \end{equation}
  
  The optimal eigenvalue is $\lambda_k = 1/\beta$, which achieves
  $\gamma_k^{\mathrm{GD}} = 0$ and convergence in a single iteration
  ($\tau_k^{\mathrm{GD}} = 0$, since $\mathbf{u}_k^\top\theta_1 = 0$ exactly).
  The overall convergence rate is governed by the slowest direction,
  \begin{equation}
  \tau_{\max}^{\mathrm{GD}} = \max_{k:\,\lambda_k>0}\;\frac{-\log 2}{2\log|1 - \beta\lambda_k|},
  \label{eq:gd_max_timescale}
  \end{equation}
  which for small learning rate $\beta\lambda_k \ll 1$ gives
  $\tau_k^{\mathrm{GD}} \approx \frac{1}{2\beta\lambda_k}$, inversely proportional
  to the curvature. The condition number $\kappa := \lambda_{\max}/\lambda_{\min}$
  (over positive eigenvalues) then controls the ratio of fastest to slowest timescales.
  
  \paragraph{Comparison with ES dynamics}
  
  The simplified ES iteration (Proposition~\ref{prop:quad_landscape_dynamics}) takes the form
  \begin{equation}
  \theta_{t+1} = H_{\mathrm{ES}}\,\theta_t + \eta_t, \qquad
  H_{\mathrm{ES}} = I - \frac{\alpha\sigma}{\sigma_R}Q,
  \label{eq:es_update}
  \end{equation}
  with $\gamma_k^{\mathrm{ES}} = 1 - \frac{\alpha\sigma}{\sigma_R}\lambda_k$.
  The two iterations are structurally identical, with the correspondence
  $\beta \leftrightarrow \frac{\alpha\sigma}{\sigma_R}$, up to the additive
  noise $\eta_t$ in the ES case. This yields three key differences:
  
  \begin{enumerate}
  \item \textbf{Effective learning rate.}
    The ES effective learning rate $\frac{\alpha\sigma}{\sigma_R}$ is
    self-normalized by the reward standard deviation $\sigma_R$, which
    depends on the current iterate through $\|\mathbf{v}\|^2 = \|Q\theta_t\|^2$
    (in the non-simplified version).
    In contrast, the GD learning rate $\beta$ is a fixed hyperparameter.
    This self-normalization provides ES with a degree of automatic step-size
    adaptation absent in vanilla GD.
  
  \item \textbf{Stability threshold.}
    GD diverges when $\lambda_k > 2/\beta$; ES diverges when
    $\lambda_k > 2\sigma_R/(\alpha\sigma)$.
    The ES threshold is typically more permissive since $\sigma_R \geq \sigma_\xi$
    inflates the denominator, though this comes at the cost of a smaller
    effective step size.
  
  \item \textbf{Asymptotic behavior.}
    GD converges to $\theta^* = 0$ exactly along stable directions, with
    zero residual error. ES converges only in distribution, with asymptotic
    variance $\frac{\alpha^2}{N(1-(\gamma_k^{\mathrm{ES}})^2)}$ per direction ---
    a noise floor set by the exploration--exploitation tradeoff that cannot
    be eliminated without increasing $N$.
    On flat directions ($\lambda_k = 0$), GD stays frozen at the initial value,
    while ES undergoes unbounded diffusion with variance growing as
    $\frac{\alpha^2 t}{N}$.
  \end{enumerate}
  
  These comparisons are summarized by:
  \begin{equation}
  \renewcommand{\arraystretch}{1.4}
  \begin{array}{lcc}
  \hline
   & \textbf{Gradient Descent} & \textbf{Evolution Strategy} \\ \hline
  \text{Eff.\ learning rate} & \beta & \alpha\sigma/\sigma_R \\
  \text{Contraction factor } \gamma_k & 1 - \beta\lambda_k & 1 - \frac{\alpha\sigma}{\sigma_R}\lambda_k \\
  \text{Stability bound} & \lambda_k < 2/\beta & \lambda_k < 2\sigma_R/(\alpha\sigma) \\
  \text{Optimal curvature} & \lambda_k = 1/\beta & \lambda_k = \sigma_R/(\alpha\sigma) \\
  \text{Residual error (stable)} & 0 & \frac{\alpha^2}{N(1-\gamma_k^2)} \\
  \text{Flat direction ($\lambda_k=0$)} & \text{frozen} & \text{diffusion } \sim \frac{\alpha^2 t}{N} \\
  \hline
  \end{array}
  \label{eq:gd_vs_es_comparison}
  \end{equation}

  \subsection{Solution geometry: GD vs.\ ES on quadratic landscape}\label{appsec:solution_geometry}

  We formalize the geometric claims of the main text by comparing the GD and ES
  solutions after $T$ steps on the same quadratic landscape
  $R(\theta) = -\frac{1}{2}\theta^\top Q\theta$, starting from the same initialization $\theta_0$.
  Let $Q = \sum_k \lambda_k \mathbf{u}_k\mathbf{u}_k^\top$ with $\lambda_k \geq 0$.
  
  \paragraph{GD solution}
  
  From Section~\ref{appsec:gd_quad_landscape}, the GD iterate is deterministic:
  \begin{equation}
  \theta_{\mathrm{GD}} := \theta_T^{\mathrm{GD}}
  = \sum_{k=1}^{d}(1-\beta\lambda_k)^T\,(\mathbf{u}_k^\top\theta_0)\,\mathbf{u}_k.
  \label{eq:gd_solution}
  \end{equation}
  
  The displacement from initialization is
  \begin{equation}
  \theta_{\mathrm{GD}} - \theta_0
  = \sum_{k=1}^{d}\bigl[(1-\beta\lambda_k)^T - 1\bigr](\mathbf{u}_k^\top\theta_0)\,\mathbf{u}_k.
  \label{eq:gd_disp}
  \end{equation}
  
  Crucially, directions with $\lambda_k = 0$ contribute nothing:
  $(1 - \beta\cdot 0)^T - 1 = 0$. Thus $\theta_{\mathrm{GD}} - \theta_0$ lies
  entirely within the column space of $Q$ (the ``task-relevant'' subspace). If $Q$ has
  rank $r \ll d$, the GD displacement is confined to an $r$-dimensional subspace,
  regardless of the number of steps $T$.
  
  The squared displacement along each active direction is
  \begin{equation}
  |\mathbf{u}_k^\top(\theta_{\mathrm{GD}} - \theta_0)|^2
  = \bigl[1 - (1-\beta\lambda_k)^T\bigr]^2\,(\mathbf{u}_k^\top\theta_0)^2,
  \label{eq:gd_disp_squared}
  \end{equation}
  and for converged directions ($|\gamma_k^{\mathrm{GD}}| \ll 1$, large $T$) this
  saturates to $(\mathbf{u}_k^\top\theta_0)^2$, i.e., the iterate reaches the
  optimum $\mathbf{u}_k^\top\theta^* = 0$.
  
  \paragraph{ES solution}
  
  From Section~\ref{appsec:quad_landscape_dynamics}, the simplified ES iterate is
  \begin{equation}
  \theta_{\mathrm{ES}} := \theta_T^{\mathrm{ES}}
  = H_{\mathrm{ES}}^T\,\theta_0 + \sum_{t=1}^{T}H_{\mathrm{ES}}^{t-1}\,\eta_{T-t},
  \label{eq:es_solution}
  \end{equation}
  with $H_{\mathrm{ES}} = I - \frac{\alpha\sigma}{\sigma_R}Q$ and
  $\eta_t \sim \mathcal{N}(0, \frac{\alpha^2}{N}I_d)$.
  
  The displacement decomposes as
  \begin{equation}
  \theta_{\mathrm{ES}} - \theta_0
  = \underbrace{(H_{\mathrm{ES}}^T - I)\theta_0}_{\text{signal (deterministic)}}
  + \underbrace{\sum_{t=1}^{T}H_{\mathrm{ES}}^{t-1}\,\eta_{T-t}}_{\text{noise (stochastic)}}.
  \label{eq:es_disp_decomp}
  \end{equation}
  
  The signal term has exactly the same structure as the GD displacement, with
  $\beta \leftrightarrow \alpha\sigma/\sigma_R$: it lies within the column space of $Q$
  and vanishes on flat directions. The noise term, however, is isotropic in origin ---
  each $\eta_t$ is drawn from $\mathcal{N}(0, \frac{\alpha^2}{N}I_d)$ --- and thus
  has support in all $d$ dimensions.
  
  Along each eigendirection, the mean and variance of the ES displacement are:
  \begin{align}
  \mathbb{E}[\mathbf{u}_k^\top(\theta_{\mathrm{ES}} - \theta_0)]
  &= \bigl[(\gamma_k^{\mathrm{ES}})^T - 1\bigr]\,\mathbf{u}_k^\top\theta_0, \label{eq:es_disp_mean} \\
  \mathrm{Var}[\mathbf{u}_k^\top(\theta_{\mathrm{ES}} - \theta_0)]
  &= \frac{\alpha^2}{N}\cdot
  \begin{cases}
  \dfrac{1 - (\gamma_k^{\mathrm{ES}})^{2T}}{1 - (\gamma_k^{\mathrm{ES}})^2}, & (\gamma_k^{\mathrm{ES}})^2 \neq 1, \\[6pt]
  T, & (\gamma_k^{\mathrm{ES}})^2 = 1.
  \end{cases}
  \label{eq:es_disp_var}
  \end{align}
  
  For flat directions ($\lambda_k = 0$, $\gamma_k^{\mathrm{ES}} = 1$), the mean displacement
  is zero but the variance grows as $\frac{\alpha^2 T}{N}$ --- a random walk.
  For active directions ($\lambda_k > 0$, $|\gamma_k^{\mathrm{ES}}| < 1$), the mean displacement
  converges to $-\mathbf{u}_k^\top\theta_0$ (same target as GD) while the variance
  saturates to $\frac{\alpha^2}{N(1 - (\gamma_k^{\mathrm{ES}})^2)}$.
  
  \paragraph{Decomposition of squared displacements}
  
  The total expected squared displacement of ES from initialization decomposes as
  \begin{align}
  \mathbb{E}\|\theta_{\mathrm{ES}} - \theta_0\|^2
  &= \sum_{k:\,\lambda_k > 0}
     \Bigl(\bigl[1 - (\gamma_k^{\mathrm{ES}})^T\bigr]^2(\mathbf{u}_k^\top\theta_0)^2
           + \frac{\alpha^2}{N}\cdot\frac{1 - (\gamma_k^{\mathrm{ES}})^{2T}}{1 - (\gamma_k^{\mathrm{ES}})^2}\Bigr) \notag \\
  &\quad+ \sum_{k:\,\lambda_k = 0}
     \frac{\alpha^2 T}{N}.
  \label{eq:es_disp_sq_decomp}
  \end{align}
  
  The first sum runs over the $r$ active directions and contains both signal and noise;
  the second sum runs over the $d - r$ flat directions and is pure noise. For
  $r \ll d$ (low-rank task structure), the flat-direction contribution dominates:
  \begin{equation}
  \mathbb{E}\|\theta_{\mathrm{ES}} - \theta_0\|^2
  \approx \underbrace{\|\theta_{\mathrm{GD}} - \theta_0\|^2}_{\mathcal{O}(r)}
  + \underbrace{\frac{\alpha^2 T d}{N}}_{\text{off-manifold diffusion, }\mathcal{O}(d)}.
  \label{eq:es_disp_sq_approx}
  \end{equation}
  
  By contrast, GD has $\|\theta_{\mathrm{GD}} - \theta_0\|^2 = \mathcal{O}(r)$ with
  no off-manifold contribution.
  
  \paragraph{Geometry of $\theta_{\mathrm{ES}} - \theta_{\mathrm{GD}}$}
  
  The difference between the two solutions is
  \begin{equation}
  \theta_{\mathrm{ES}} - \theta_{\mathrm{GD}}
  = \underbrace{(H_{\mathrm{ES}}^T - H_{\mathrm{GD}}^T)\theta_0}_{\text{signal mismatch}}
  + \underbrace{\sum_{t=1}^{T}H_{\mathrm{ES}}^{t-1}\,\eta_{T-t}}_{\text{ES noise}}.
  \label{eq:es_gd_difference}
  \end{equation}
  
  The signal mismatch term lies within the column space of $Q$ and reflects the
  difference in effective learning rates ($\alpha\sigma/\sigma_R$ vs.\ $\beta$).
  For converged directions where both methods have reached the optimum, this
  term vanishes: $(\gamma_k^{\mathrm{ES}})^T \approx (\gamma_k^{\mathrm{GD}})^T \approx 0$.
  The noise term is identical to the ES noise analyzed above.
  
  Thus, in the large-$T$ regime where both methods have converged on the active directions:
  \begin{align}
  \mathbb{E}\|\theta_{\mathrm{ES}} - \theta_{\mathrm{GD}}\|^2
  &\approx \sum_{k:\,\lambda_k > 0}
     \frac{\alpha^2}{N(1 - (\gamma_k^{\mathrm{ES}})^2)}
     + \sum_{k:\,\lambda_k = 0}\frac{\alpha^2 T}{N} \notag \\
  &\approx \frac{\alpha^2 T(d-r)}{N} + \mathcal{O}(r),
  \label{eq:es_gd_diff_sq}
  \end{align}
  which is dominated by the off-manifold random walk for $r \ll d$.
  
  \paragraph{Summary of geometric structure}
  
  The three pairwise distances satisfy the approximate hierarchy (for $r \ll d$, large $T$):
  \begin{align}
  \|\theta_{\mathrm{GD}} - \theta_0\|^2 &= \mathcal{O}(r), \notag \\
  \mathbb{E}\|\theta_{\mathrm{ES}} - \theta_0\|^2 &= \mathcal{O}(r) + \mathcal{O}(d), \notag \\
  \mathbb{E}\|\theta_{\mathrm{ES}} - \theta_{\mathrm{GD}}\|^2 &= \mathcal{O}(d).
  \label{eq:hierarchy_geom}
  \end{align}
  
  The first displacement is low-dimensional and lies within the column space of $Q$.
  The second is the sum of a comparable on-manifold component and a much larger
  isotropic off-manifold diffusion. The third is dominated by the off-manifold
  component alone, since both methods converge to the same optimum within the
  active subspace. This triangle of displacements accounts for the empirical
  observation that $\theta_{\mathrm{ES}}$ achieves similar task performance to
  $\theta_{\mathrm{GD}}$ (shared on-manifold component) while being far from it
  in parameter space (off-manifold diffusion).
  
  \begin{remark}[Cosine similarity]
  The cosine similarity between the two displacement vectors is
  \begin{equation}
  \cos\angle(\theta_{\mathrm{ES}} - \theta_0,\;\theta_{\mathrm{GD}} - \theta_0)
  = \frac{(\theta_{\mathrm{ES}} - \theta_0)^\top(\theta_{\mathrm{GD}} - \theta_0)}
         {\|\theta_{\mathrm{ES}} - \theta_0\|\,\|\theta_{\mathrm{GD}} - \theta_0\|}.
  \label{eq:cosine_sim}
  \end{equation}
  Since $\theta_{\mathrm{GD}} - \theta_0$ lies within the column space of $Q$
  and the ES noise is zero-mean, the expected inner product picks up only the
  on-manifold signal of ES:
  \begin{equation}
  \mathbb{E}[(\theta_{\mathrm{ES}} - \theta_0)^\top(\theta_{\mathrm{GD}} - \theta_0)]
  = (\theta_{\mathrm{GD}}' - \theta_0)^\top(\theta_{\mathrm{GD}} - \theta_0),
  \label{eq:cosine_sim_inner}
  \end{equation}
  where $\theta_{\mathrm{GD}}'$ denotes GD with learning rate $\alpha\sigma/\sigma_R$
  instead of $\beta$. For large $T$ with both converged on active directions, this
  approaches $\|\theta_{\mathrm{GD}} - \theta_0\|^2 = \mathcal{O}(r)$. Meanwhile,
  $\|\theta_{\mathrm{ES}} - \theta_0\| = \mathcal{O}(\sqrt{d})$, so
  \begin{equation}
  \mathbb{E}\left[\cos\angle(\theta_{\mathrm{ES}} - \theta_0,\;\theta_{\mathrm{GD}} - \theta_0)\right]
  \sim \mathcal{O}\!\left(\sqrt{r/d}\right),
  \label{eq:cosine_sim_expect}
  \end{equation}
  which is small for $r \ll d$: the two displacement vectors are nearly orthogonal
  despite leading to similar task performance.
  \end{remark}

%% file: appendix_weight_drift_RW_stats.tex

\subsection{Testing Random Walk Scaling of Weight Drift in ES and GRPO}
\label{appsec:weight_drift_RW_stats}

Proposition~\ref{prop:flat_landscape_scaling} predicts that, on a flat landscape, the
squared norm of parameter drift grows linearly in the step index $t$:
\[
    \mathbb{E}\!\left[\|\Delta\theta_t\|^2\right] = \frac{\alpha^2\, t\, d}{N}.
\]
This is a parameter-free prediction: $d$ is the number of parameters and $\alpha, N$ are
ES hyperparameters, so the predicted trajectory is fixed before any data are seen. It is
also the null model one would obtain if the landscape were entirely flat and every
parameter irrelevant to the task; deviations from it are therefore informative about
which parts of the network the task actually constrains.

The proposition applies to any set of coordinates evolving under the same update, so we
test it at two granularities: for the parameter vector as a whole, and independently for
each named parameter tensor.

\paragraph{Setup}
We fine-tune two language models of different families and scales on a single reasoning
task:
\begin{itemize}
    \item \textbf{Qwen3-4B-Instruct-2507}, $d = 4{,}022{,}468{,}096$ parameters,
          $36$ transformer blocks, $d_{\text{model}} = 2560$;
    \item \textbf{Llama-3.2-3B}, $d = 3{,}212{,}749{,}824$ parameters,
          $28$ transformer blocks, $d_{\text{model}} = 3072$.
\end{itemize}
Each is trained with two optimizers:
\begin{itemize}
    \item \textbf{Evolution Strategy (ES):} z-scored ES with perturbation standard
          deviation $\sigma = 0.0015$, step size $\alpha = \sigma/2 = 7.5\times10^{-4}$,
          and population size $N = 30$, run for $T = 500$ steps. The run was extended
          from $300$ to $500$ steps for the analysis reported here.
    \item \textbf{GRPO:} Group Relative Policy Optimization, a gradient-based RL
          fine-tuning method, run for $T = 79$ steps.
\end{itemize}
At each step $t$ we record, for every named parameter $\theta^{(k)}$,
\[
    \|\Delta\theta^{(k)}_t\|^2 := \|\theta^{(k)}_t - \theta^{(k)}_0\|^2,
\]
the squared Euclidean distance from initialization.

\paragraph{Whole-model drift vs.\ theory (Figure~\ref{fig:weight_drift})}
Figure~\ref{fig:weight_drift} plots the empirical $\sum_k \|\Delta\theta^{(k)}_t\|^2$
against the predicted curve. ES tracks it closely in both models, with
$R^2 = 0.9943$ (Qwen) and $0.9978$ (Llama) measured against the parameter-free
prediction itself rather than against a fitted line.

The residual is a small error in the \emph{slope}, not a departure from linearity.
Refitting the slope freely --- still through the origin, as the law has no intercept ---
raises $R^2$ to $0.9999$ and $1.0000$ respectively, so the growth is linear to four or
five decimal places. Writing $\rho := d_{\mathrm{eff}}/d$ for the ratio of the fitted to
the predicted slope:

\begin{center}
\begin{tabular}{lccccc}
\toprule
Model & $d$ & predicted slope & fitted slope & $\rho$ & $R^2$ (free fit) \\
\midrule
Qwen  & $4.02\times10^{9}$ & $75.42$ & $72.74$ & $0.964$ & $0.9999$ \\
Llama & $3.21\times10^{9}$ & $60.24$ & $61.66$ & $1.024$ & $1.0000$ \\
\bottomrule
\end{tabular}
\end{center}

The two models bracket the prediction: Qwen's aggregate drift runs $3.6\%$ below it and
Llama's $2.4\%$ above. Both mechanisms available in the theory are consistent with a
deviation of this size --- curvature along task-relevant directions suppresses diffusion
and drives $\rho$ below unity (Prop.~\ref{prop:quad_landscape_dynamics}), while a
systematic gradient-like component adds displacement and drives it above
(Prop.~\ref{prop:quad_landscape_step}) --- so we do not read a single mechanism into the
aggregate scaling residual. What the aggregate does establish is that neither effect is large:
across two architectures the total drift rate is within $4\%$ of a prediction with no fitted
parameters.

Note that these aggregate values are exactly the parameter-count-weighted mean of the per-tensor
$\rho$ reported below, agreeing to four decimal places. 

GRPO behaves qualitatively differently. Its drift saturates within roughly $50$ steps
rather than growing linearly, reaching a peak $\|\Delta\theta\|^2$ of $1.12$ (Qwen) and
$1.72$ (Llama). Compared at the last step both optimizers reached, $T = 79$, ES has
already moved the weights $5.2\times10^{3}$ (Qwen) and $2.9\times10^{3}$ (Llama) times
farther. This is consistent with gradient-based fine-tuning making targeted updates
within a low-dimensional subspace rather than diffusing through parameter space.

\begin{figure}[h]
    \centering
    \includegraphics[width=0.9\linewidth]{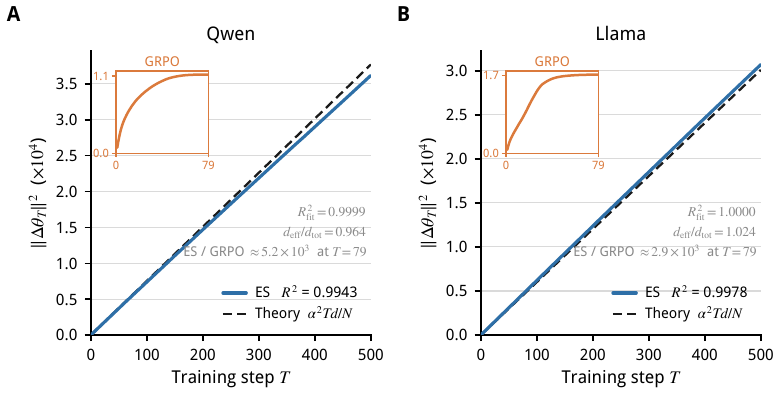}
    \caption{\textbf{Total weight drift under ES follows the random-walk law; under GRPO
        it does not.} Squared displacement of the full parameter vector
        $\|\Delta\theta_T\|^{2}$ against training step $T$ for Qwen (\textbf{A},
        $d = 4.02\times10^{9}$) and Llama (\textbf{B}, $d = 3.21\times10^{9}$). Solid
        blue, ES; dashed black, the prediction $\alpha^{2} T d / N$ with
        $\alpha = \sigma/2$, $\sigma = 0.0015$ and $N = 30$. The $R^{2}$ in the legend is
        measured against that prediction, which has no fitted parameter; the annotation
        gives instead the $R^{2}$ of a freely fitted slope and the effective dimension
        $\rho = d_{\mathrm{eff}}/d_{\mathrm{total}}$ that slope implies. Insets show GRPO
        on its own axes over its full $79$ steps, annotated with the ES/GRPO displacement
        ratio at the matched step $T = 79$. Values and interpretation in the text.}
    \label{fig:weight_drift}
\end{figure}

\paragraph{Per-parameter effective dimension}
\label{sec:per_param}
To characterize which parameters drive the drift, we fit a no-intercept linear regression
\[
    \|\Delta\theta^{(k)}_t\|^2 \approx s^{(k)}\cdot t
\]
for each parameter $k$ independently, obtaining a slope $s^{(k)}$ and a goodness of fit
$R^2_k$. Because the fit has no intercept, $R^2_k$ is not bounded below by zero: a tensor
whose drift is poorly described by \emph{any} line through the origin will receive a
negative value, and several families below do. Inverting
Proposition~\ref{prop:flat_landscape_scaling} gives the effective random walk dimension
\[
    d^{(k)}_{\mathrm{eff}} := \frac{s^{(k)}\, N}{\alpha^2} = \frac{4\, s^{(k)}\, N}{\sigma^2},
    \qquad
    \rho^{(k)} := \frac{d^{(k)}_{\mathrm{eff}}}{d^{(k)}},
\]
where $d^{(k)}$ is the number of elements in parameter $k$. A value
$\rho^{(k)} \approx 1$ indicates the parameter diffuses isotropically through its full
space as predicted; $\rho^{(k)} \ll 1$ indicates constrained diffusion; $\rho^{(k)} > 1$
indicates excess displacement relative to dimension, potentially from a gradient-like
drift component (Prop.~\ref{prop:quad_landscape_step}).

One tensor is excluded throughout. Qwen's
\texttt{model.layers.34.input\_layernorm.weight} has exactly zero displacement at every
recorded step, leaving $s^{(k)}$, $\rho^{(k)}$ and $R^2_k$ undefined; it is dropped from
all statistics and figures ($n = 35$ rather than $36$ for that family).

\paragraph{Large weight matrices}
The matrix-shaped parameters --- embedding, \texttt{qkv\_proj}, \texttt{o\_proj},
\texttt{gate\_up\_proj}, \texttt{down\_proj}; $145$ tensors in Qwen and $113$ in Llama ---
behave as near-isotropic random walkers in both models:
\begin{center}
\begin{tabular}{lccc}
\toprule
Model & $n$ & $\rho$ (mean $\pm$ SD) & $R^2$ (mean $\pm$ SD) \\
\midrule
Qwen  & $145$ & $0.968 \pm 0.014$ & $0.999888 \pm 0.000014$ \\
Llama & $113$ & $1.024 \pm 0.031$ & $0.999968 \pm 0.000022$ \\
\bottomrule
\end{tabular}
\end{center}
Every such tensor in either model lies within $10\%$ of $\rho = 1$ and above
$R^2 = 0.9998$, directly confirming the flat-landscape regime of
Proposition~\ref{prop:flat_landscape_scaling} for the bulk of the parameter count.

The small residual is nonetheless systematic rather than noisy, and it is what
distinguishes the two models: $75\%$ of Llama's large tensors exceed $\rho = 1$, reaching
$1.07$ for \texttt{o\_proj}, whereas every one of Qwen's $145$ falls below it, with a
maximum of $0.999$. Llama also fits the law about four times more precisely
(median $R^2 = 0.999975$ against $0.999890$). We report this as an observation; the
figures do not isolate its cause.

\paragraph{Norm and layer-norm parameters}
The remaining parameters are RMSNorm gain vectors ($144$ tensors in Qwen, $57$ in Llama).
Both models place one before the attention sublayer of each block
(\texttt{input\_layernorm}) and one before the MLP sublayer --- the latter named
\texttt{post\_attention\_layernorm} for its position in the block rather than because it
normalizes the attention output --- plus a single output norm before the language-model
head. Each is a per-channel gain of length $d_{\text{model}}$, with no bias and no
centering. Qwen adds QK-norms, applied within each attention head to queries and keys;
their gain is of length $d_{\text{head}} = 128$ and is shared across all heads, so these
are the smallest tensors in either model by three to five orders of magnitude.

These parameters depart from the law substantially, and --- unlike the matrices --- they
depart differently in the two architectures:
\begin{center}
\begin{tabular}{llccc}
\toprule
Family & Model & $n$ & $\rho$ (mean $\pm$ SD) & $R^2$ (mean $\pm$ SD) \\
\midrule
\texttt{input\_layernorm} & Qwen  & $35$ & $1.00 \pm 0.93$ & $-0.38 \pm 1.31$ \\
                          & Llama & $28$ & $0.37 \pm 0.15$ & $-2.69 \pm 1.22$ \\
\texttt{post\_attention\_layernorm} & Qwen  & $36$ & $1.55 \pm 0.79$ & $-0.98 \pm 1.24$ \\
                          & Llama & $28$ & $0.34 \pm 0.25$ & $-3.11 \pm 1.10$ \\
\texttt{q\_norm}          & Qwen  & $36$ & $2.46 \pm 0.33$ & $\phantom{-}0.81 \pm 0.12$ \\
\texttt{k\_norm}          & Qwen  & $36$ & $2.00 \pm 0.50$ & $\phantom{-}0.70 \pm 0.23$ \\
\texttt{model.norm}       & Qwen  & $1$  & $0.10$ & $\phantom{-}0.83$ \\
                          & Llama & $1$  & $2.99$ & $\phantom{-}0.89$ \\
\bottomrule
\end{tabular}
\end{center}

Three observations. First, the strongly negative $R^2$ values for the per-block
LayerNorms are not scatter around a line but evidence of the wrong functional form:
Figure~\ref{fig:norm_curve_per_param} shows the underlying trajectories, which rise
steeply and then saturate. Llama's pass $90\%$ of their final displacement by
$T \approx 130$, against $T = 450$ for a straight line, and end well below the predicted
trajectory. A saturating curve, not a random walk, is the appropriate description -- which is consistent with the theory describing an OU process under quadratic potential (Eq.~\ref{eq:simplify_ES_theory_norm_scaling}). 

Second, the sign of the deviation is architecture-dependent. Llama's per-block
LayerNorms diffuse in roughly a third of their nominal dimensions ($\rho \approx 0.35$),
while Qwen's sit at or above unity ($\rho = 1.00$ and $1.55$), with correspondingly less
negative $R^2$. The single output norm flips outright between models, $\rho = 2.99$ in
Llama against $0.10$ in Qwen.

Third, Qwen's QK-norms show a consistent excess, $\rho \approx 2.0$--$2.5$ with
$R^2 \approx 0.7$--$0.8$. At $d=128$ entries they are also where the concentration
underlying $\alpha^2 T d_p / N$ is weakest. Their excess drift may reflect a genuinely
higher signal-to-noise ratio in these low-dimensional subspaces, but that caveat should
be kept in view.

\begin{figure}[h]
    \centering
    \includegraphics[width=0.95\linewidth]{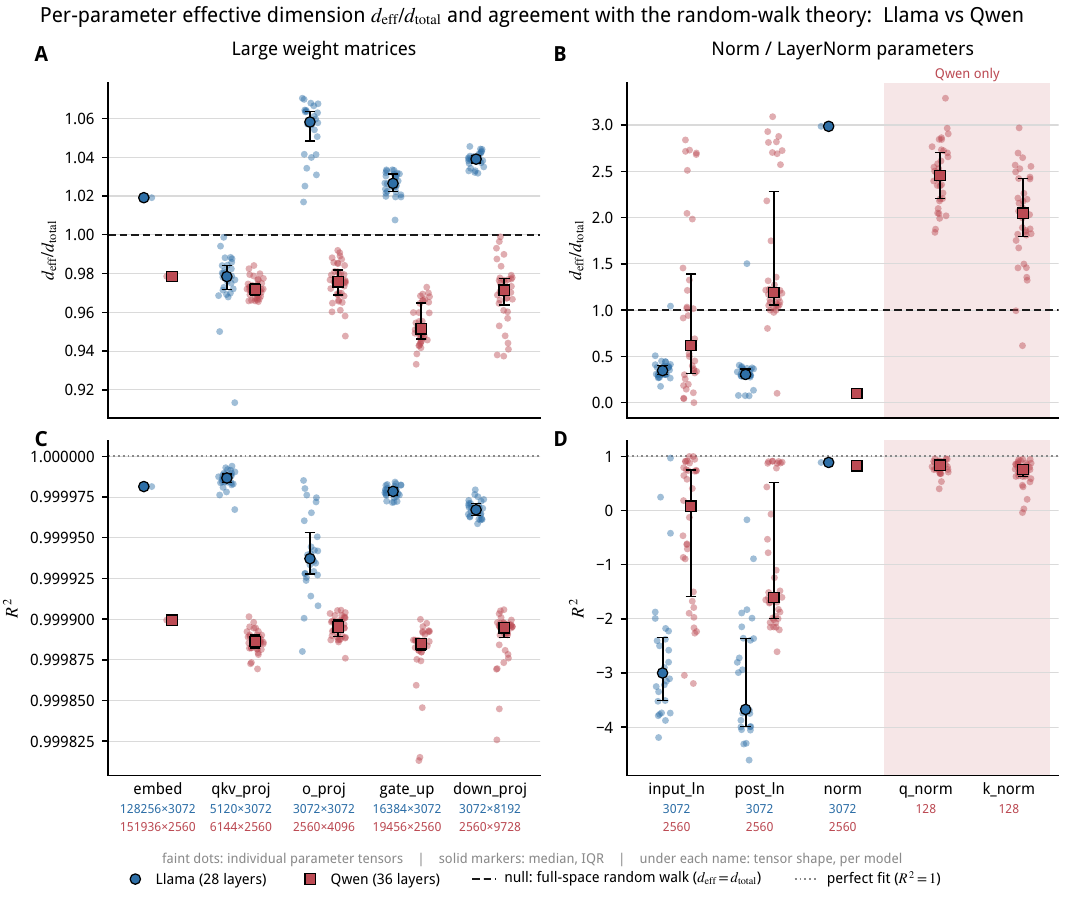}
    \caption{\textbf{Per-parameter effective dimension and agreement with the random-walk
        theory, Llama vs Qwen.} For every parameter tensor we fit
        $\|\Delta\theta_T\|^{2} = \alpha^{2} T d_{\mathrm{eff}} / N$ through the origin
        over $500$ ES steps and report $\rho = d_{\mathrm{eff}}/d_{\mathrm{total}}$
        (\textbf{A}, \textbf{B}) and the $R^{2}$ of that fit (\textbf{C}, \textbf{D}),
        for the large weight matrices (left) and the RMSNorm gain vectors (right). Faint
        dots are individual tensors --- $28$ per per-layer family in Llama, $36$ in Qwen
        --- and solid markers give the median with the interquartile range. Two reference
        levels are drawn, distinguished by colour: the dashed black line is the null
        model in which a tensor random-walks through its full parameter space,
        $\rho = 1$; the dotted grey line is a perfect fit, $R^{2} = 1$. Under each name
        is the tensor's shape in each model (blue, Llama; red, Qwen); shading marks the
        QK-norm families, present only in Qwen. The fit has no intercept, so $R^{2}$ is
        free to be negative, as it is throughout much of \textbf{D}; the axis in
        \textbf{C} is correspondingly expanded, spanning only the top $2\times10^{-4}$
        of the $R^{2}$ range. The large matrices cluster tightly at $\rho \approx 1$
        with $R^{2}$ indistinguishable from unity at ordinary resolution, while the norm
        parameters scatter across the full range of both quantities; the weight-growth
        curves underlying the latter are shown in
        Figure~\ref{fig:norm_curve_per_param}. Qwen's
        \texttt{layers.34.input\_layernorm} is excluded throughout ($n = 35$ for that
        family; see text).}
    \label{fig:d_ratio_per_param}
\end{figure}

\begin{figure}[h]
    \centering
    \includegraphics[width=0.95\linewidth]{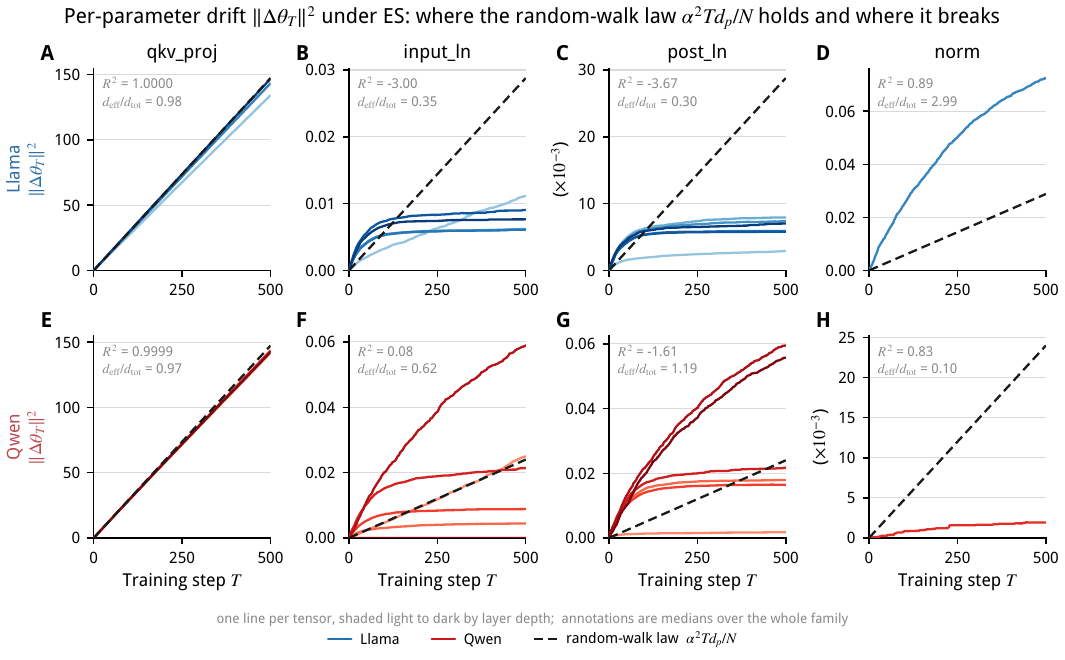}
    \caption{\textbf{Per-parameter weight drift under ES: where the random-walk law holds
        and where it breaks.} Squared displacement $\|\Delta\theta_T\|^2$ against ES step
        $T$ for individual parameter tensors of Llama (\textbf{A}--\textbf{D}, blue) and
        Qwen (\textbf{E}--\textbf{H}, red), one parameter family per column. Dashed lines
        give the random-walk prediction $\alpha^{2} T d_p / N$, where $d_p$ is the number
        of entries in the tensor ($\alpha = \sigma/2$, $\sigma = 0.0015$, $N = 30$); it is
        common to every curve in a panel, since all tensors in a family share a shape.
        Curves are individual tensors shaded light to dark by layer depth --- three
        evenly spaced layers for \texttt{qkv\_proj}, six for \texttt{input\_ln} and
        \texttt{post\_ln}, and the single final norm for \texttt{norm}. Annotations give
        the median $R^{2}$ and median $\rho = d_{\mathrm{eff}}/d_{\mathrm{total}}$ over
        the \emph{entire} family, not only the curves drawn; $R^{2}$ is that of a fit
        through the origin, as the law has no intercept, and may therefore be negative.
        The large weight matrices (\textbf{A}, \textbf{E}) lie on the predicted line for
        the whole run, while the RMSNorm gains rise steeply and then saturate --- the
        departure that the negative $R^{2}$ of a through-origin fit records. Values in
        the text.}
    \label{fig:norm_curve_per_param}
\end{figure}

\paragraph{Summary}
The random-walk law describes ES weight drift accurately where the theory's assumptions
are best met and fails in an interpretable way where they are not. At the whole-model
level it predicts total displacement across two architectures to within $4\%$ with no
fitted parameters. At the tensor level it holds to within $10\%$ in $\rho$ and
$2\times10^{-4}$ in $R^2$ for every large weight matrix --- which is to say, for
essentially the entire parameter count. It fails for the RMSNorm gain vectors, which are
three to five orders of magnitude smaller, and the failed fit is due to weight change  saturating: those coordinates stop moving, which is what one expects when a quadratic-like landscape constrains the random walk (Sec.~\ref{appsec:quad_landscape_dynamics}).

%% file: refs.bib
@article{antognini2018pca_highdim_random_walk,
  title={PCA of high dimensional random walks with comparison to neural network training},
  author={Antognini, Joseph and Sohl-Dickstein, Jascha},
  journal={Advances in Neural Information Processing Systems},
  volume={31},
  year={2018}
}

@article{wang2022tuninglandscape,
  title={Tuning landscapes of the ventral stream},
  author={Wang, Binxu and Ponce, Carlos R},
  journal={Cell reports},
  volume={41},
  number={6},
  year={2022},
  publisher={Elsevier}
}

@article{wang2022highperf_ES,
  title={High-performance evolutionary algorithms for online neuron control},
  author={Wang, Binxu and Ponce, Carlos R},
  journal={Proceedings of the genetic and evolutionary computation conference},
  pages={1308--1316},
  year={2022}
}

@article{chen2025onpolicy,
  title={Retaining by doing: The role of on-policy data in mitigating forgetting},
  author={Chen, Howard and Razin, Noam and Narasimhan, Karthik and Chen, Danqi},
  journal={arXiv preprint arXiv:2510.18874},
  year={2025}
}

@article{abdi2026es_forgetting,
  title={Evolutionary Strategies lead to Catastrophic Forgetting in LLMs},
  author={Abdi, Immanuel and Gupta, Akshat and Mok, Micah and Lu, Alexander and Lee, Nicholas and Anumanchipalli, Gopala},
  journal={arXiv preprint arXiv:2601.20861},
  year={2026}
}

@article{aghajanyan2021intrinsic,
  title={Intrinsic dimensionality explains the effectiveness of language model fine-tuning},
  author={Aghajanyan, Armen and Gupta, Sonal and Zettlemoyer, Luke},
  booktitle={Proceedings of the 59th annual meeting of the association for computational linguistics and the 11th international joint conference on natural language processing (volume 1: long papers)},
  pages={7319--7328},
  year={2021}
}

@article{deepseek2025r1,
  title={Deepseek-r1: Incentivizing reasoning capability in llms via reinforcement learning},
  author={Guo, Daya and Yang, Dejian and Zhang, Haowei and Song, Junxiao and Wang, Peiyi and Zhu, Qihao and Xu, Runxin and Zhang, Ruoyu and Ma, Shirong and Bi, Xiao and others},
  journal={arXiv preprint arXiv:2501.12948},
  year={2025}
}

@article{korotyshova2025essa,
  title={ESSA: Evolutionary Strategies for Scalable Alignment},
  author={Korotyshova, Daria and Shaposhnikov, Boris and Malakhov, Alexey and Khokhulin, Alexey and Surnachev, Nikita and Ovcharenko, Kirill and Bredis, George and Gorbatovski, Alexey and Sinii, Viacheslav and Gavrilov, Daniil},
  journal={arXiv preprint arXiv:2507.04453},
  year={2025}
}

@inproceedings{lehman2018es,
  title={ES is more than just a traditional finite-difference approximator},
  author={Lehman, Joel and Chen, Jay and Clune, Jeff and Stanley, Kenneth O},
  booktitle={Proceedings of the genetic and evolutionary computation conference},
  pages={450--457},
  year={2018}
}

@article{liang2026blessing,
  title={The Blessing of Dimensionality in LLM Fine-tuning: A Variance-Curvature Perspective},
  author={Liang, Qiyao and Song, Jinyeop and Liu, Yizhou and Gore, Jeff and Fiete, Ila and Miikkulainen, Risto and Qiu, Xin},
  journal={arXiv preprint arXiv:2602.00170},
  year={2026}
}

@article{qiu2025es,
  title={Evolution strategies at scale: Llm fine-tuning beyond reinforcement learning},
  author={Qiu, Xin and Gan, Yulu and Hayes, Conor F and Liang, Qiyao and Xu, Yinggan and Dailey, Roberto and Meyerson, Elliot and Hodjat, Babak and Miikkulainen, Risto},
  journal={arXiv preprint arXiv:2509.24372},
  year={2025}
}

@article{salimans2017es,
  title={Evolution strategies as a scalable alternative to reinforcement learning},
  author={Salimans, Tim and Ho, Jonathan and Chen, Xi and Sidor, Szymon and Sutskever, Ilya},
  journal={arXiv preprint arXiv:1703.03864},
  year={2017}
}

@article{sarkar2025hyperscale,
  title={Evolution strategies at the hyperscale},
  author={Sarkar, Bidipta and Fellows, Mattie and Duque, Juan Agustin and Letcher, Alistair and Villares, Antonio Le{\'o}n and Sims, Anya and Wibault, Clarisse and Samsonov, Dmitry and Cope, Dylan and Liesen, Jarek and others},
  journal={arXiv preprint arXiv:2511.16652},
  year={2025}
}

@article{shao2024grpo,
  title={Deepseekmath: Pushing the limits of mathematical reasoning in open language models},
  author={Shao, Zhihong and Wang, Peiyi and Zhu, Qihao and Xu, Runxin and Song, Junxiao and Bi, Xiao and Zhang, Haowei and Zhang, Mingchuan and Li, YK and others},
  journal={arXiv preprint arXiv:2402.03300},
  year={2024}
}

@article{shenfeld2025rl,
  title={Rl's razor: Why online reinforcement learning forgets less},
  author={Shenfeld, Idan and Pari, Jyothish and Agrawal, Pulkit},
  journal={arXiv preprint arXiv:2509.04259},
  year={2025}
}

@article{Moore2018HighEvolution,
    title = {{High dimensional random walks can appear low dimensional: Application to influenza H3N2 evolution}},
    year = {2018},
    journal = {Journal of Theoretical Biology},
    author = {Moore, James and Ahmed, Hasan and Antia, Rustom},
    month = {6},
    pages = {56--64},
    volume = {447},
    publisher = {Academic Press},
    url = {https://www.sciencedirect.com/science/article/pii/S002251931830136X},
    doi = {10.1016/J.JTBI.2018.03.022},
    issn = {0022-5193}
}

@article{isserlis1918formula,
  author  = {Isserlis, L.},
  title   = {On a formula for the product-moment coefficient of any order
             of a normal frequency distribution},
  journal = {Biometrika},
  year    = {1918},
  volume  = {12},
  number  = {1--2},
  pages   = {134--139},
  doi     = {10.1093/biomet/12.1-2.134}
}

@article{wick1950evaluation,
  author  = {Wick, G. C.},
  title   = {The Evaluation of the Collision Matrix},
  journal = {Physical Review},
  year    = {1950},
  volume  = {80},
  number  = {2},
  pages   = {268--272},
  doi     = {10.1103/PhysRev.80.268}
}
